\newcommand{\cmark}{\ding{51}}%
\newcommand{\xmark}{\ding{55}}%
\newcommand{\todoPi}[1]{\todo[color=yellow!40, inline]{\small Pierre: #1}}
\newcommand{\todoCo}[1]{\todo[color=green!40, inline]{\small C\^ome: #1}}
\newcommand{\todoTa}[1]{\todo[color=red!40, inline]{\small Tadashi: #1}}
\newcommand{\todoVP}[1]{\todo[color=blue!40, inline]{\small VP: #1}}
\newcommand{\todoMR}[1]{\todo[color=magenta!40, inline]{\small MR: #1}}
\theoremstyle{plain}
\newtheorem{theorem}{Theorem}[section]
\newtheorem{proposition}[theorem]{Proposition}
\newtheorem{lemma}[theorem]{Lemma}
\newtheorem{corollary}[theorem]{Corollary}
\theoremstyle{definition}
\theoremstyle{remark}
\newtheorem{remark}[theorem]{Remark}
\icmltitlerunning{Adapting to game trees in zero-sum imperfect information games}
\begin{document}

\twocolumn[
\icmltitle{Adapting to game trees in zero-sum imperfect information games}

% It is OKAY to include author information, even for blind
% submissions: the style file will automatically remove it for you
% unless you've provided the [accepted] option to the icml2022
% package.

% List of affiliations: The first argument should be a (short)
% identifier you will use later to specify author affiliations
% Academic affiliations should list Department, University, City, Region, Country
% Industry affiliations should list Company, City, Region, Country

% You can specify symbols, otherwise they are numbered in order.
% Ideally, you should not use this facility. Affiliations will be numbered
% in order of appearance and this is the preferred way.
\icmlsetsymbol{equal}{*}

\begin{icmlauthorlist}
\icmlauthor{C\^ome Fiegel}{crest}
\icmlauthor{Pierre M\'enard}{ensl}
\icmlauthor{Tadashi Kozuno}{omron}
\icmlauthor{R\'emi Munos}{deepmind}
\icmlauthor{Vianney Perchet}{crest,criteo}
\icmlauthor{Michal Valko}{deepmind}
%\icmlauthor{}{sch}
%\icmlauthor{}{sch}
\end{icmlauthorlist}

\icmlaffiliation{criteo}{CRITEO AI Lab, Paris, France}
\icmlaffiliation{crest}{CREST, ENSAE, IP Paris, Palaiseau, France}
\icmlaffiliation{deepmind}{Deepmind, Paris, France}
\icmlaffiliation{ensl}{ENS Lyon,  Lyon, France}
\icmlaffiliation{omron}{Omron Sinic X, Tokyo, Japan}

\icmlcorrespondingauthor{C\^ome Fiegel}{come.fiegel@normalesup.org}
%\icmlcorrespondingauthor{Firstname2 Lastname2}{first2.last2@www.uk}

% You may provide any keywords that you
% find helpful for describing your paper; these are used to populate
% the "keywords" metadata in the PDF but will not be shown in the document
\icmlkeywords{Machine Learning, ICML}

\vskip 0.3in
]

% this must go after the closing bracket ] following \twocolumn[ ...

% This command actually creates the footnote in the first column
% listing the affiliations and the copyright notice.
% The command takes one argument, which is text to display at the start of the footnote.    
% The \icmlEqualContribution command is standard text for equal contribution.
% Remove it (just {}) if you do not need this facility.

%\printAffiliationsAndNotice{}  % leave blank if no need to mention equal contribution
%\printAffiliationsAndNotice{\icmlEqualContribution} % otherwise use the standard text.
\printAffiliationsAndNotice{}  % 
% For TOC in appendix (https://tex.stackexchange.com/a/419290)
\doparttoc % Tell to minitoc to generate a toc for the parts
\faketableofcontents % Run a fake tableofcontents command for the partocs

\begin{abstract}
Imperfect information games (IIG) are games in which each player only partially observes the current game state. We study how to learn $\epsilon$-optimal strategies in a zero-sum IIG through self-play with \textit{trajectory feedback}. We give a problem-independent lower bound $\widetilde{\mathcal{O}}(H(A_{\mathcal{X}}+B_{\mathcal{Y}})/\epsilon^2)$ on the required number of realizations to learn these strategies with high probability, where $H$ is the length of the game, $A_{\mathcal{X}}$ and $B_{\mathcal{Y}}$ are the total number of actions for the two players. We also propose two Follow the Regularized leader (FTRL) algorithms for this setting: \texttt{Balanced~FTRL} which matches this lower bound, but requires the knowledge of the information set structure beforehand to define the regularization; and \texttt{Adaptive~FTRL} which needs $\widetilde{\mathcal{O}}(H^2(A_{\mathcal{X}}+B_{\mathcal{Y}})/\epsilon^2)$ realizations without this requirement by progressively adapting the regularization to the observations.
\end{abstract}

\setlength{\footskip}{3.30003pt}

\section{Introduction}
\label{sec:introduction}

%\todoMR{textbf{Full feedback.} is potentially neater than paragraph{Full feedback}, and similarly for later subtitles.}

In imperfect information games (IIG), players, upon taking an action, may only have access to \emph{partial information about the true current game state}. This type of games allows for the modelling of complex strategic behavior such as bluffing \citep{koller1995generating}. In this work, in particular, we study extensive-form two-players zero-sum IIGs. 

The \emph{extensive-form} of a game describes, which player is playing and which actions are available, sequentially and depending on the previous moves. It is typically represented by a tree of depth $H$, where nodes are \emph{states} that are controlled by one of the players that decides, based on each player action, the next game states among its children.

Players can be uncertain about the true game state upon playing. We therefore assume that the set of states controlled by the player is partitioned into \emph{information sets}, where an information set is a set of states that are indistinguishable to this player. As commonly assumed in game theory, we suppose \emph{perfect recall} \citep{kuhn1950extensive}: players remember their previous moves, which implies that the space of information sets has a \emph{tree structure}. 

We focus more specifically on \emph{zero-sum games} (what the max-player gains is the opposite of the other, min-player) and aim at devising algorithms that learn $\epsilon$-optimal strategies \citep{neumann1928zur}. For this purpose, we actually develop a unilateral algorithm that dictates to a single player how to play over multiple realizations of the game called \emph{episodes}, each of the same length $H$, in order to minimize the \emph{regret}---the difference between the cumulative gain and the gain of the best fixed strategy---against any sequence of moves of the adversary. If both players follows such an algorithm, we can then prove that their average strategy becomes more precise over time. We especially want to construct algorithms with a regret that scales as slowly as possible with respect to the different game parameters.

For any information set $x\in \cX$, we define by $\Ax$ the number of available actions at $x$, as two indistinguishable states must have the same number of actions. The total number of actions of the max-player is then defined by $\AX=\sum_{x\in\cX} \Ax$. Similarly, we let $\cY$ be the collection of information sets of the min-player, $\By$ the number of actions available at $y\in\cY$ and $\BY=\sum_{y\in \cY}\By$ the total number of actions. Meanwhile $X$ and~$Y$ denote the number of information sets for the max and min-players.

\paragraph{Related work} When the structure of the game, transition probabilities and reward function are known beforehand, several methods exist to approximate the optimal strategy. A common approach is to reformulate the problem as a linear program that can be solved efficiently \citep{Rom62,VONSTENGEL1996220,koller1996efficient}. Some methods see it as a saddle-point problem and rely on first-order optimization methods \citep{hoda2010smoothing, kroer2015faster,kroer2018solving,kroer2020faster,munos2020fast,lee2021last}. Other common approaches try to locally minimize the regret at each information set, e.g., \textit{counterfactual regret minimization} \citep{zinkevich2007regret,tammelin2014solving,burch2019revisiting}.

\iffalse \todoMR{Maybe this can be phrased more generally:

Approaches like CFR are examples of local regret minimisation (i.e. minimising a type of regret at each IS), but other approaches in this family are possible too, i.e. swapping regret-matching in CFR for e.g. entropy-regularised mirror descent -- e.g. https://arxiv.org/pdf/1811.02540.pdf}
\todoPi{Let keep these discussion to the related work section.}\fi

The main practical weakness of the above approaches is their prohibitive cost as the size of the game grows. Indeed, the computations are mostly done on the whole information set trees, which implies a time complexity of at least $\cO(X+Y)$ at each step at best \citep{lanctot2009monte}. All the recent successes in empirically solving large IIGs actually avoid the direct use of these full information algorithms \citep{moravcik2017deepstack,brown2018superhuman,schmid2021player,bakhtin2022human,perolat2022mastering}. In addition, depending on the practical case, it may not be possible to know the state transitions, or even to start the game at an arbitrary state.

We therefore treat the settings when the algorithms can only access a \emph{trajectory feedback} of the game: they only have access to the observations of the player. For this case, the outcome sampling \MCCFR \citep{lanctot2009monte,shcmid2018variance, farina2020stochastic} feeds estimates of the counterfactual regret to the \CFR algorithm, which is by design a full-information feedback algorithm. Precisely, outcome-sampling \MCCFR uses a uniform sampling of the actions in each information set, which is actually sub-optimal when the information set trees of the players are unbalanced, as information sets in smaller sub-trees are then observed way more than information sets of bigger sub-trees. For this reason, \citet{farina2020stochastic} instead propose to build an estimate of the counterfactual regret according to a \emph{balanced policy} that, roughly speaking, samples actions proportionally to the size of the associated sub-trees.\footnote{The exact definition of the balanced policy varies among different works, but the main idea remains the same.} While this balanced policy allows for a better sample complexity, it also requires to know the game structure beforehand. 

Closer to the adversarial bandits, \citet{farina2021bandit} propose to use the \emph{online mirror descent} (\OMD) algorithm along with a dilatation of the Shannon entropy on all information sets acting as a regularizer \citep{hoda2010smoothing}. However, their approach uses importance sampling \citep{auer2003nonstochastic} to estimate the losses, whose variance is too large to allow the computation of $\epsilon$-optimal strategies with high probability. \citet{kozuno2021learning} solved this issue by using a biased estimate called \emph{Implicit exploration} (IX, \citealp{kocak2014efficient,neu2015explorea,lattimore2020bandit_book}), and proved that their \IXOMD algorithm achieves a sample complexity of order $\tcO(H^2(X\AX+Y\BY  )/\epsilon^2)$ with high probability\footnote{For algorithms with a probability at least $1-\delta$ of a correct output, the symbol $\tcO$ hides dependencies logarithmic in $H,\AX,\BY,\delta$ and $\epsilon$.}. Unfortunately, it does not match the $\tcO(H(\AX+\BY)/\epsilon^2)$ lower bound, see Table~\ref{tab:sample_complexity}. 

More recently, \citet{bai2022nearoptimal} propose to weight differently each of the information set in the regularizer using the aforementioned balanced policy. They obtain a sample complexity of $\tcO(H^3(\AX+\BY)/\epsilon^2)$ that scales linearly with the total number of actions, but again, at the extra price of requiring the knowledge of the game structure beforehand.

Hence, we raise the following questions: 
%\begin{enumerate}
%\item
\emph{What is the optimal rate for learning $\epsilon$-optimal strategies through self-play?}
 %\item 
 \emph{Is it possible to learn these $\epsilon$-optimal strategies with a complexity scaling linearly with the total number of actions without the prior knowledge of the information set tree structure?} 
%\end{enumerate}
In this work we answer \emph{both questions}.

\paragraph{Contributions} We make the following contributions:

\begin{itemize}[ itemsep=-2pt,leftmargin=6pt]
\item We prove a lower bound of $\cO(H(\AX+\BY)/\epsilon^2)$ on the number of necessary plays required to compute an $\epsilon$-optimal profile with only a trajectory feedback. Compared to the one stated by \citet{bai2022nearoptimal}, it applies to algorithms that are only correct with a high probability, and enjoys an extra $H$ factor if the size of the action set is allowed to vary with the information set.

\item We propose the \BalancedFTRL algorithm that matches this lower bound, up to logarithm factors, using the knowledge of the structure of the information sets. It uses the Follow The Regularized Leader (\FTRL) algorithm instead of \OMD and follows the idea of balancing, but using instead a concept of \emph{balanced transition}. In addition to the use of Shannon entropy, it also allows the use of the Tsallis entropy, in order to get a better sample complexity at the price of increased computation time.

\item We then propose the \AdaptiveFTRL algorithm which matches this lower bound up to an extra $H$ and logarithmic factors, \emph{without} using the knowledge of the structure. Instead of using the balancing transition, it estimates the \emph{actual} transitions of each episode.

\item We provide a practical implementation of  \BalancedFTRL and \AdaptiveFTRL through Algorithm~\ref{al:adapt_update} of Appendix~\ref{app:algorithmic_update}. This implementation is computationally efficient, as the policy is only updated on the information sets visited in the previous episode.

\item Finally, we provide experiments that compare the performances of these two algorithms with \IXOMD and \BalancedOMD. We observe that, despite having different theoretical guarantees, the algorithms all seem to have comparable performances in practice.

\end{itemize}

All rates are summarized in Table~\ref{tab:sample_complexity}.

\begin{table*}[t]
\centering
\label{tab:sample_complexity}
\begin{tabular}{@{}lcc}\toprule
\textbf{Algorithm}  & \textbf{Sample complexity} & \textbf{Structure-free}\\
\midrule
\MCCFR~{\scriptsize \citep{farina2020stochastic,bai2022nearoptimal}}& $\tcO(H^4(\AX+\BY )/\epsilon^2)$ &  \xmark \\
\IXOMD~{\scriptsize \citep{kozuno2021learning}} &  $\tcO(H^2(X\AX+Y\BY)/\epsilon^2)$ & \cmark\\
\BalancedOMD~{\scriptsize \citep{bai2022nearoptimal}} & $\tcO(H^3(\AX+\BY)/\epsilon^2)$ & \xmark \\
\midrule 
 \rowcolor[gray]{.90} \BalancedFTRL~{\scriptsize(this paper)} & $\tcO(H(\AX+\BY)/\epsilon^2)$ & \xmark \\
 \rowcolor[gray]{.90} \AdaptiveFTRL~{\scriptsize(this paper)} &  $\tcO(H^2(\AX+\BY)/\epsilon^2)$ & \cmark\\
\midrule 
Lower bound~{\scriptsize(this paper)}& $\tcO(H(\AX+\BY)/\epsilon^2)$& \\
\bottomrule
\end{tabular}
\caption{Sample complexity for episodic, finite, two-player, zero-sum IIGs. \textit{Structure-free} algorithm designates an algorithm that does not need to know  the structure of the information set spaces in advance. The symbol $\tcO$ hides  dependencies logarithmic in  $\AX,\BY,\epsilon$ and $\delta$. Note that for algorithms that only work with fixed action sets of size $A$ and $B$ we have $\AX=AX$ and $\BY=BY$.}
\end{table*}

\section{Setting}
\label{sec:setting}
 
Consider an episodic, finite, two-player, zero-sum IIG {\small$(\cS,\cX,\cY,\{\cA(x)\}_{x\in \cX},\{\cB(y)\}_{y\in \cY},H,p,r)$}, which consists of the following components \citep{kuhn1950extensive, littman1994markov}:
\begin{itemize}[ itemsep=-2pt,leftmargin=6pt]
\item A finite state space~$\cS$ and two information set spaces (partitions of $\cS$) $\cX$ of size $X$ and $\cY$ of size $Y$ for the max- and min-player respectively. 
\item For each $x\in\cX$ and $y\in\cY$, finite action spaces $\cAx$ of size $\Ax$ and $\cBy$ of size $\By$. 
\item The length $H \in \N$ of the game. 
\item Initial state distribution $p_0 \in \Delta (\cS)$ and a state-transition probability kernel $(p_h)_{h\in[H-1]}$ with\footnote{In many games with tree structure, $p_h$ does not depend on the step~$h,$ but we keep the dependence on $h$ as it makes the results more general at no extra cost in the analysis.}
$p_h: \cS \times \cA(\cX) \times \cB(\cY) \rightarrow \Delta(\cS)$ for each $h \in [H-1]$. 
\item A reward function $(r_h)_{h\in[H]}$ with $r_h: \cS \times \cA(\cX) \times \cB(\cY) \rightarrow [0,1]$. 
\end{itemize}

% \todoVP{Too dense, too difficult to read}

% For convenience, $(x,a)\in \cAX$ and $(y,b)\in\cBY$ respectively mean that $a\in \cAx$ and $b\in \cBy$. As explained in the introduction, $\AX$ and $\BY$ will denote the sizes of these sets (the total number of actions for each player). 

\paragraph{Perfect-recall} 

As explained in the introduction, we assume perfect-recall. Formally, this means that for the max-player and for each information set $x\in\cX$ there exists a unique $h\in[H]$ and history $(x_1,a_1, ... ,x_h)$ such that $x_h=x$. Specifically, we write $x\geq x'$ if $x'$ is part of the history that leads to $x$. With this assumption, both $\cX$ and $\cY$ can be partitioned into $H$ different subsets $(\cX_h)_{h\in[H]}$ and $(\cY_h)_{h\in[H]}$, $\cX_h$ and $\cY_h$ being the sets of possible information sets at time step $h$ for respectively the max and min-player.

For convenience we let $\cAXh:= \{(x_h, a_h)\!:\ x_h\in\cX_h\text{ and }a_h\in\cA(x_h)\}$ be the total action set for the max-player at depth $h$, and use an analogous notation $\cBYh$ for the min-player. The unions of these sets are denoted by $\cAX:= \cup_{h\in[H]} \cAXh,\, \cBY:= \cup_{h\in[H]} \cBYh$ with their respective sizes $\AX$ and $\BY$, as defined in the introduction.

% \todoPi{We also need the immediate successor of a infoset action pair, two options: 

% -$x' \geq x$ for all the possible successor of $x$ (including $x$) and $x' \gets (x,a)$ for all the possible immediate successors of $x,a$.

% -$\cX_{h:}(x_h)$ for all the possible successor of $x$ (including $x$) and $\cX_{h+1}(x_h,a_h)$ for all the possible immediate successors of $x_h,a_h$.
% }
% \todoCo{I prefer the latter option is better, but I think it is better to define it when it is needed.}

\paragraph{Policies} The perfect-recall assumption allows us to represent a policy of the max-player as a sequence $\mu=(\mu_h)_{h \in [H]}$ such that for all $x_h\in \cX_h$, $\mu_h(.|x_h)$ is an element of $\Delta\pa{\cAs{x_h}}$. A policy $\nu$ of the min-player can be defined similarly. We let $\maxpi$ and $\minpi$ be the sets of the max- and min-player's policies.

\paragraph{Episode unfolding} Given the two players' policies $\mu$ and~$\nu$, an episode of the game proceeds as follows: an initial state $s_1\sim p_0$ is sampled. At step $h$, the max- and min-player observe their information sets $x_h$ and $y_h$. Given the information, the max- and min-player choose and execute actions $a_h \sim \mu_h (\cdot | x_h)$ and $b_h \sim \nu_h (\cdot | y_h)$. As a result, the current state transitions to a next state $s_{h+1} \sim p_h (\cdot | s_h, a_h, b_h)$, and the max- and min-player receive rewards $r_h (s_h, a_h, b_h)$ and $-r_h(s_h,a_h,b_h)$ respectively. This is repeated until time step $H$, after which the episode finishes.

\paragraph{Learning procedure} We assume there are $T$ episodes of the same game, and both players are able to progressively adapt their respective policies $\mu^t$ and $\nu^t$ after each episode based on their previous observations.\footnote{As in \citet{kozuno2021learning,bai2022nearoptimal}, we assume that the players can not change their policies within an episode, as the tree-like structure limits the interest of such adaptation.}

\iffalse This uniqueness of the history for the max-player is equivalent to an assumption of a rooted forest structure of $\cAX$ with the initial couples $(x_1,a_1)$ standing as the roots of the trees. We can also partition $\cX$ into $H$ different subsets $(\cX_h)_{h\in[H]}$  where $x_h\in\cX_h$ means that $x_h$ is an information set associated to a time step $h$, unique because of the perfect recall assumption. This time-step $h$ then coincides with the depth (starting at $1$) in the associated tree.

The same ideas apply to the min-player, and we partition $\cY$ into the subsets $(\cY_h)_{h\in[H]}$.\fi

\paragraph{Realization plan and loss} Given the perfect recall assumption, we recursively define  \citep{VONSTENGEL1996220} the realization plan $\mu_{1:}:=(\mu_{1:h})_{h\in [H]}$ by, given $(x_1,a_1, ... , x_h)$ the unique history up to $x_h$, 
\[\mu^{}_{1:h}(x_h,a_h):=\prod_{h'=1}^h \mu_{h'}(a_{h'}|x_{h'})\,.\]
We also define the adversarial transitions $p^\nu_{1:}:=(p^\nu_{1:h})_{h\in[H]}$ and losses $\ell^\nu:=(\ell^\nu_h)_{h\in[H]}$ by:
\begin{align*}
    p_{1:h}^\nu(x_h) &:=p^{}_0(x_1) \prod_{h'=1}^{h-1} p_{h'}^{\nu}(x_{h'+1}|x_{h'},a_{h'})\,,\\
    \ell^{\nu}_h(x_h,a_h)&:= p^\nu_{1:h}(x_h)\pa{1-r_h^\nu(x_h,a_h)},
\end{align*}
where $p_{h}^{\nu}(x_{h+1}|x_{h},a_{h})$ is the probability to transition to the information state $x_{h+1}$ from the pair information set-action $(x_{h},a_{h})$ when the opponent policy is fixed to $\nu$, and $r_h^\nu(x_h,a_h)$ is the average reward obtained in this case. Note that these two quantities require the perfect recall assumption in order to be well defined, and combine the randomness of both state transitions and min-player's policy $\nu$.

The analog quantities $\nu_{1:}$, $p^\mu_{1:}$, $r^\mu_h$ and $\ell^\mu$ may also be defined for the min-player, replacing $1-r^\nu_h$ by $r^\mu_h$ in the definition of the loss.

\paragraph{Regret and $\varepsilon$-Nash-equilibrium} 

We define the expected rewards (of the max-player) $V^{\mu, \nu} :=\E^{\mu,\nu}\!\left[\sum_{h=1}^H r_h\right]$ for a pair $(\mu,\nu)\in \maxpi\times\minpi$ of policies and, after $T$ episodes, the regrets of the max and min players,
\begin{align*}
    \regret^T_{\mathrm{max}}:&= \max_{\mu^\dagger \in \maxpi} \sum_{t=1}^T \pp{V^{\mu^\dagger, \nu^t} - V^{\mu^t, \nu^t}}\,,\\
    \regret^T_{\mathrm{min}}:&= \max_{\nu^\dagger \in \minpi} \sum_{t=1}^T \pp{V^{\mu^t, \nu^t} - V^{\mu^t, \nu^\dagger}}\,.
\end{align*}

There is a very clear link between minimizing the regret of both players and computing $\epsilon$-optimal profiles, or $\varepsilon$-Nash-equilibrium (NE). In the particular case of perfect-recall, Theorem~1 by \citet{kozuno2021learning} states the following (see also \citealt{cesa-bianchi2006prediction,zinkevich2007regret}).

\begin{theorem} \label{thm:folklore}
    From $(\mu^t,\nu^t)_{t\in[T]}$ define the time-averaged profile $(\overline{\mu},\overline{\nu})$ (of the realization plan, see Appendix~\ref{appendix:bal}), then $(\overline{\mu},\overline{\nu})$ is $\epsilon$-optimal with
    \[\varepsilon=\pa{\regret^T_{\mathrm{max}}+\regret^T_{\mathrm{min}}}/T\,.\]
\end{theorem}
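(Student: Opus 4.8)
The plan is to identify the quantity $\epsilon=(\regret^T_{\mathrm{max}}+\regret^T_{\mathrm{min}})/T$ with the \emph{duality gap} (total exploitability) of the averaged profile, namely $\max_{\mu^\dagger\in\maxpi} V^{\mu^\dagger,\overline{\nu}}-\min_{\nu^\dagger\in\minpi} V^{\overline{\mu},\nu^\dagger}$, since bounding this gap by $\epsilon$ is exactly what $\epsilon$-optimality means in the zero-sum sense. The single ingredient that makes everything work is that the expected reward $V^{\mu,\nu}$ is \emph{bilinear} in the realization plans $\mu_{1:}$ and $\nu_{1:}$; this is precisely why the time-average in the statement is taken on the realization plans and not on the behavioural policies.

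First I would establish this bilinearity. Unrolling an episode and invoking perfect recall, one writes
\[
V^{\mu,\nu}=\sum_{h=1}^H\sum_{(x_h,a_h)\in\cAXh}\mu_{1:h}(x_h,a_h)\,p^\nu_{1:h}(x_h)\,r^\nu_h(x_h,a_h),
\]
where the factor $\mu_{1:h}(x_h,a_h)$ is the realization plan and $p^\nu_{1:h}(x_h)\,r^\nu_h(x_h,a_h)$ depends only on $\nu$. Hence for fixed $\nu$ the map $\mu_{1:}\mapsto V^{\mu,\nu}$ is linear, and symmetrically $\nu_{1:}\mapsto V^{\mu,\nu}$ is linear for fixed $\mu$. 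Writing the averaged profile as $\overline{\mu}_{1:}=\frac1T\sum_{t}\mu^t_{1:}$ and $\overline{\nu}_{1:}=\frac1T\sum_{t}\nu^t_{1:}$, linearity in each argument immediately gives, for every fixed $\mu^\dagger$ and $\nu^\dagger$,
\[
\tfrac1T\sum_{t=1}^T V^{\mu^\dagger,\nu^t}=V^{\mu^\dagger,\overline{\nu}}
\qquad\text{and}\qquad
\tfrac1T\sum_{t=1}^T V^{\mu^t,\nu^\dagger}=V^{\overline{\mu},\nu^\dagger}.
\]

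Then I would simply add the two regrets. The diagonal terms $\sum_t V^{\mu^t,\nu^t}$ cancel, leaving
\[
\regret^T_{\mathrm{max}}+\regret^T_{\mathrm{min}}
=\max_{\mu^\dagger\in\maxpi}\sum_{t}V^{\mu^\dagger,\nu^t}-\min_{\nu^\dagger\in\minpi}\sum_{t}V^{\mu^t,\nu^\dagger}.
\]
Dividing by $T$ and substituting the two identities above (the outer maximum and minimum commute with the exact per-policy equalities, since each holds for every fixed $\mu^\dagger,\nu^\dagger$) yields
\[
\epsilon=\max_{\mu^\dagger\in\maxpi}V^{\mu^\dagger,\overline{\nu}}-\min_{\nu^\dagger\in\minpi}V^{\overline{\mu},\nu^\dagger}.
\]
To finish, I would note that both summands of this gap are nonnegative: by the minimax theorem $\max_{\mu^\dagger}V^{\mu^\dagger,\overline{\nu}}\ge V^\star\ge\min_{\nu^\dagger}V^{\overline{\mu},\nu^\dagger}$, where $V^\star$ is the value of the game. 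Consequently each player's individual exploitability is at most $\epsilon$, which is exactly the $\epsilon$-optimality of $(\overline{\mu},\overline{\nu})$.

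The only genuinely delicate point is the bilinearity step: it relies on perfect recall to make each history, and hence each realization plan, well defined, and it is essential that the averaging be performed on the realization plans (on which $V$ is affine) rather than on the behavioural policies (on which it is not). Everything after that is bookkeeping with the $\max/\min$ and the cancellation of the diagonal terms.
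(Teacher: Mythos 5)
Your proof is correct and takes essentially the same route as the argument this paper relies on: bilinearity of $V^{\mu,\nu}$ in the realization plans (the sequence-form representation the paper sets up in its ``Conversion to online linear regret minimization'' paragraph and in the average-profile construction of Appendix~\ref{appendix:bal}), cancellation of the diagonal terms $\sum_t V^{\mu^t,\nu^t}$ when the two regrets are added, and identification of $(\regret^T_{\mathrm{max}}+\regret^T_{\mathrm{min}})/T$ with the duality gap of $(\overline{\mu},\overline{\nu})$. Note that the paper itself does not reprove this folklore statement but defers to Theorem~1 of \citet{kozuno2021learning}, whose proof is precisely this argument, so there is nothing to flag.
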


\paragraph{Conversion to online linear regret minimization}
Defining a scalar product between the realization plans and the losses,
\[\scal{\mu_{1:}}{\ell^\nu}:=\sum_{h=1}^H\sum_{(x_h,a_h)\in\cAXh}\mu_{1:h}^{}(x_h,a_h)\ell^\nu_h(x_h,a_h),\]
the chain rule on conditional probabilities implies that the max-player regret can then be rewritten, with $\ell^t:=\ell^{\nu^t}$, as
\[\regret^T_{\mathrm{max}}= \max_{\mu^\dagger \in \maxpi} \sum_{t=1}^T \scal{\mu_{1:}^t-\mu_{1:}^\dagger}{\ell^t}.\]
This converts the problem to a linear regret problem, where the constraints set (the set of realization plans) is a polytope. 

For simplicity, in the following sections we focus on the max-player  although the exact same ideas apply to the min-player, who aims at minimizing $\regret^T_{\min}$.
\section{Lower bounds}
\label{sec:lower_bound}

In this section we state lower bounds for learning an approximate NE. Compared to \citet{bai2022nearoptimal}, our lower bounds cover high-probability guarantees instead of in expectation, and are stated for two settings, whether or not the size of the action set is fixed. As shown in Section~\ref{sec:tree_structure_known}, they are tight up to logarithmic factors, which justifies the existence of two different rates for these settings.

\begin{restatable}{theorem}{lbtree}\label{thm:lb}(informal, exact statement in Appendix~\ref{app:lower_bound})

    \underline{Regret:} For any algorithm that controls the max-player, there exists a game such that, if the number of action per information set is allowed to vary, with probability at least~$\delta$, the algorithm suffers a regret of
    \[\Omega\pa{\sqrt{H\AX T\log\!\left(1/(4\delta)\right)}}.\]
    Otherwise if the number of the actions is fixed, then the regret is
    \[\Omega\pa{\sqrt{\AX T\log\!\left(1/(4\delta)\right)}}.\]

    \underline{Sample complexity:} For any algorithm that controls both players, there exists a game such that, if the number of action is allowed to vary, it needs at least
    \[\Omega\pa{H(\AX+\BY)\log\!\left(1/(4\delta)\right)/\epsilon^2}\]
    episodes to output an $\epsilon$-optimal strategy with a probability at least $1-\delta$.
    
    If the number of actions is fixed, then it instead needs
    \[\Omega\pa{(\AX+\BY)\log\!\left(1/(4\delta)\right)/\epsilon^2}\ \text{episodes.}\]

\end{restatable}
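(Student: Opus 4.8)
The plan is to reduce every statement to a lower bound for multi-armed bandits and then embed hard bandit instances into suitable game trees. The single reusable ingredient is a high-probability bandit lower bound: for a $K$-armed bandit played for $N$ rounds, any learner, on some instance, suffers with probability at least $\delta$ a regret $\Omega(\sqrt{KN\log(1/(4\delta))})$, and any best-arm identifier needs $\Omega(K\log(1/(4\delta))/\Delta^2)$ pulls to return a $\Delta$-optimal arm with probability $1-\delta$. Both follow from a change-of-measure argument comparing a symmetric ``null'' instance to instances in which one hidden arm is made $\Delta$-better; instead of Pinsker (which yields in-expectation bounds) I would use the Bretagnolle--Huber inequality $P(E)+Q(E^c)\ge \tfrac12 e^{-\mathrm{KL}(P,Q)}$ to convert a small per-round KL into the claimed high-probability guarantee, which is exactly what produces the $\log(1/(4\delta))$ factor.

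For the regret statement I would build a game in which the max-player meets one independent bandit gadget at each reachable information set, with the min-player and chance frozen so that the induced loss is $\ell_h(x_h,a_h)=p_{1:h}(x_h)(1-r_h(x_h,a_h))$ with deterministic, known reach weights $p_{1:h}(x_h)$. Because the realization plan is a flow on the information-set tree, the regret splits into a reach-weighted sum of per-gadget bandit regrets against a common best response, and a gadget at depth $h$ is visited on the order of its reach times $T$. The key step is then choosing how to spread the $\AX$ actions and the reach mass across the $H$ depths: when the action counts may vary, I can place an equal share of actions at every depth while keeping the reach weights balanced, and summing $\sum_h \sqrt{K_h N_h}$ under the constraint $\sum_h K_h=\AX$ via concavity (Cauchy--Schwarz) gives $\Omega(\sqrt{H\AX T})$; when the action count is forced to be constant, perfect recall rigidifies the branching of the information-set tree and this balancing is no longer feasible, so a single well-chosen depth already saturates the bound and only $\Omega(\sqrt{\AX T})$ survives.

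For the sample-complexity statement I would dualize through best-arm identification. In the same family of trees, outputting an $\epsilon$-optimal profile forces the learner to commit to a near-best action at enough information sets; since the value decomposes additively over the $H$ depths, the admissible error budget is split across them, so each gadget must be resolved to precision $\epsilon/H$ rather than $\epsilon$. As an information set reached with weight $w$ is visited $\approx wT$ times over $T$ episodes, the identification requirement $wT\gtrsim K\log(1/(4\delta))/(\epsilon/H)^2$ at a depth carrying a $1/H$ share of the $\AX$ actions rearranges into $T=\Omega(H\AX\log(1/(4\delta))/\epsilon^2)$ in the varying case, and into $\Omega(\AX\log(1/(4\delta))/\epsilon^2)$ once the rigid fixed-action tree removes this balanced split. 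Running the argument for the min-player on a disjoint part of the tree and using that an $\epsilon$-optimal profile must be simultaneously $\epsilon$-optimal for both players (Theorem~\ref{thm:folklore}) adds the two requirements into $\AX+\BY$.

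The main obstacle is the construction itself rather than the probabilistic core: I must design the information-set trees so that, under perfect recall, (i) the counterfactual reach weights take the prescribed values, (ii) the total action count is exactly $\AX$ with the desired per-depth distribution, and (iii) the per-gadget sub-problems remain statistically independent so that their regrets and their KL divergences aggregate cleanly. Getting the fixed-action variant to genuinely preclude the balanced multi-depth embedding---thereby justifying the missing $H$---is the delicate point, as is carrying the Bretagnolle--Huber high-probability guarantee through the sum over gadgets (e.g.\ via a union or averaging argument over the hidden-arm locations) without losing the logarithmic dependence on $\delta$.
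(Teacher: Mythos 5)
Your probabilistic core (change of measure to get the $\log(1/(4\delta))$ factor) and your handling of the two players (argue for one player at a time, then add the two requirements) are both in the spirit of the paper, which uses a likelihood-ratio argument with Freedman's inequality (Lemma~\ref{lemma:probability lower bound lemma}) and makes one player's actions ineffective. The gap is in the game construction, which is the heart of this theorem.

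Your mechanism for the extra $H$ factor --- one independent bandit gadget per depth, $K_h=\AX/H$ actions at each depth with balanced reach, summing $\sum_h \sqrt{K_h N_h}$ --- is infeasible under perfect recall. Perfect recall forces every information set at depth $h+1$ to have a unique parent pair $(x_h,a_h)$, so the number of information sets at depth $h+1$ is at least the number of reachable actions at depth $h$, and each of them needs at least one action. Hence if depth $1$ already carries $\AX/H$ actions, then under the budget $\sum_h K_h=\AX$ with $K_h=\AX/H$, every information set at depths $2,\dots,H$ has \emph{exactly one} action: there are no choices, no gadgets, and no regret contribution below the root. Your Cauchy--Schwarz balancing can never be realized; with independent per-step rewards the achievable regret collapses to that of the root gadget, $\Omega(\sqrt{(\AX/H)\,T})$, far below the target. (You flag this obstruction as the ``delicate point'' only for the fixed-action case, but it kills the variable-action construction just as surely.)

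The paper's construction uses a different mechanism entirely: \emph{reward amplification through correlation across steps}, not aggregation across depths. There is a single $K=\lfloor \AX/H\rfloor$-armed gadget at the first step; the hidden initial state (sampled from $K$ Bernoulli distributions, invisible to the player) determines the reward of each arm, and the \emph{same} reward is paid again at every one of the remaining $H-1$ steps, which traverse action-less chains that soak up the remaining $\AX-K$ actions. The game is thus a $K$-armed bandit whose effective reward range is $[0,H]$ while per-step rewards stay in $[0,1]$, giving regret $H\cdot\Omega(\sqrt{KT\log(1/(4\delta))})=\Omega(\sqrt{H\AX T\log(1/(4\delta))})$, and sample complexity via requiring precision $\epsilon/H$ on the amplified gaps. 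Your account of the fixed-action case is correspondingly off: the reason the $H$ factor disappears there is not that ``balancing'' becomes infeasible (it never was feasible), but that a constant action count $A\ge 2$ forces $\AX=\Theta(A^H)$, so the right construction is a single bandit over the $A^H$ leaf paths with one terminal reward, which already has $\Theta(\AX)$ arms and yields $\sqrt{\AX T}$, whereas amplification of a root gadget could only contribute $H\sqrt{AT}\ll\sqrt{\AX T}$. Your sample-complexity argument inherits the same flaw, since ``each gadget must be resolved to precision $\epsilon/H$'' presupposes the infeasible multi-depth embedding (and, even granting it, an $\epsilon$-optimal profile need only control the \emph{sum} of per-depth suboptimalities, not each one separately).
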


These lower bounds are obtained by a reduction to the $K$-armed bandits. The structure depends on the setting: when the number of actions is allowed to vary, the learner must only choose between $K$ actions on the first information set, and then gets $H$ times the same reward depending on this first choice. When the number of actions is fixed, the learner must take $H$ actions successively (among the $A$ proposed at each information set) and only receive a reward after the last one depending on all of its choices; hence with $K=A^H$ possibilities.
\section{Optimal rate with the knowledge of the tree structure}
\label{sec:tree_structure_known}

We first study the case where the information set structure is known by the agent beforehand.

\paragraph{Loss estimation} The agent only observes the realizations of the games. This suggests that the loss vector $\ell^t$ should be estimated using only the information on the visited states and the associated rewards. Therefore, we define, for all possible actions, the unbiased loss estimate
\[\widehat{\ell}_h^{\,t}(x_h,a_h):=\frac{\indic{x_h^t=x_h,a_h^t=a_h}}{\mu_{1:h}^t(x_h,a_h)}(1-r_h^t),\]
where $x_h^t$ $a_h^t$ and $r_h^t$ are respectively the information set, action and reward of the max-player at time step $h$ of episode $t$. 

While this estimator works well in expectation, its variance is too large to get acceptable bounds with high probability. We replace it by the following (biased) estimator $\tell^t$, with an implicit exploration (IX) parameter $\gamma_h$ that depends on the chosen state \citep{kocak2014efficient}:
\[\tell_h^t(x_h,a_h):=\frac{\indic{x_h^t=x_h,a_h^t=a_h}}{\mu_{1:h}^t(x_h,a_h)+\gamma_h(x_h)}(1-r_h^t).\]
The associated cumulative loss is given as $\tL^t:=\sum_{k=1}^t\tell^{k}$.

\paragraph{Balanced transitions} 

We follow the idea of \citet{bai2022nearoptimal} and define a \textit{balanced transition} kernel $p^\star$ on $\cX$. First, we denote by $\Ataus{x}=\sum_{x'\geq x} \As{x'}$ the total number of actions that follow information set $x\in\cX$ in the whole sub-tree. The transition kernel $p^\star$ is then defined such as, at each step, the probability of transitioning to the next suitable information set $x$ is proportional to this value. More precisely, $p^\star_0(x_1)=\Ataus{x_1}/\AX$ for all roots $x_1\in\cX_1$, and for any $h\in[H-1]$,
\[p^\star_h(x_{h+1} | x_h,a_h):=\frac{\Ataus{x^{}_{h+1}}}{\sum_{x'_{h+1}\in\cX_{h+1}(x_h,a_h)}\Ataus{x'_{h+1}}}\CommaBin\]
where $\cX_{h+1}(x_h,a_h)$ denotes all the possible information sets of depth $h+1$ that follow $(x_h,a_h)$.

This definition differs from the balanced transition ${p^\star}^h$ in the appendix of \citet{bai2022nearoptimal} for two reasons: 
\begin{itemize}[ itemsep=-2pt,leftmargin=6pt]
\item It is defined globally for all depths $h$, using directly the whole tree instead of a truncated version up to each depth.
\item It uses the reachable number of action instead of the reachable number of states, as the number of action at each set may not be constant.
\end{itemize}

\iffalse Intuitively, these transitions are close to the worst transitions that the max player could face when trying to minimize  regret.

\todoVP{hardest ? in which sense}. Indeed, for any information set $x_h$ sampled $\Ts{x_h}$ times, \BalancedFTRL can guarantee a regret following information set $x_h$ of order $\tcO((H \Ataus{x_h}\:\Ts{x_h})^{1/2})$. In particular, the regret bound following a given couple information set-action $(x_{h-1},a_{h-1})$ is $\tcO\pa{\sum_{(x_1,a_1,...,x_{h-1},a_{h-1},x_h)}(H \Ataus{x_h}\:\Ts{x_h})^{1/2}}$, which is maximized when $\Ataus{x_h}$ is proportional to $\Ts{x_h}$. The balanced transition is the only adversarial transition that can achieve this proportionality.
\todoPi{Ok it is not very clear why it is the worst transition. 'near-uniform sampling of the information states' by who? The player? Why it is bad?}

\todoCo{Yes it is the part I do not know how to explain well}

Similarly to the adversarial transitions, we will also use the balanced transition $p^\star_{1:}$ computed with $p^\star_{1:h}(x_h)=p_0^\star(x_1)\prod_{h'=1}^{h-1} p^\star_{h'}(x_{h'+1} | x_{h'},\,a_{h'})\,.$\fi

\paragraph{Policy update \eqref{eqn:U1}} 

For any transition kernel $p^\nu$ on $\cX$ and max-player policy~$\mu$, the vector $p^\nu_{1:h}.\mu_{1:h}$ defined by $p^\nu_{1:h}\cdot\mu_{1:h}^{}(x_h,a_h):=p^\nu_{1:h}(x_h)\mu_{1:h}^{}(x_h,a_h)$ is a probability distribution over the information set-action $\cAXh$ at depth $h$.  Along the lines of \FTRL algorithms, we define the policy update of \BalancedFTRL as 
{\small
\begin{align*}
\label{eqn:U1}
\mu^t =\textrm{argmin}&_{\mu\in \maxpi} \scal{\mu_{1:}}{\tilde{L}^{t-1}} + \sum_{h=1}^H\Psi_h\pa{\,p^\star_{1:h}\,\cdot\,\mu^{}_{1:h}\,}/{\eta_h} \qquad
\tag{U1}    
\end{align*}}
where $(\Psi_h)_{h\in [H]}$ are regularizers of $\cAXh$ and $(\eta_h)_{h\in[H]}$ is a sequence of learning rates. While the ideal choice for regret minimization would be  the actual adversarial transitions $p^{\nu^t}$ (replacing our $p^\star$) these transitions are unknown by the player. The balanced transitions instead give acceptable bounds against any adversarial transition.  \BalancedFTRL implements this idea, using either Tsallis entropy \citep{tsallis1987possible} or Shannon entropy as regularizer. Similarly, the IX parameters $\gamma_h$ are also adjusted for each state with $\gamma^\star_h(x_h)=\gamma_h/p^\star_{1:h}(x_h)$.

\begin{algorithm}[t]
\caption{\BalancedFTRL-\textcolor{dgreen}{Tsallis}/\textcolor{dblue}{Shannon}}
\label{alg:ftrl_bal}
\begin{algorithmic}[1]
			\STATE \textbf{Input:}\\
			~~~~ Tree-like structure of $\cAX$\\\vspace{.025cm}
			~~~~ Sequences $(\eta_h)_{h\in [H]}$ of learning rates \\\vspace{.05cm}
                ~~~~ Sequences $(\gamma_h)_{h\in [H]}$ of IX parameters \\\vspace{.05cm}
			~~~~ Regularizers for $w_h\in\Delta(\cAXh):$\\
			~~~~~~ \textcolor{dgreen}{Tsallis: {\scriptsize$\displaystyle
			\Psi_h(w_h)=-\!\sum_{(x_h,a_h)}w_h(x_h,a_h)^\tsallis$}}\\
			~~~~~~ \textcolor{dblue}{Shannon: 
			{\scriptsize $\displaystyle\Psi_h(w_h)=\!\sum_{(x_h,a_h)} w_h(x_h,a_h)\log(w_h(x_h,a_h))$}}
			%    $\tree \leftarrow \{(0,0)\}$(root node). 
			\STATE \textbf{Initialize:}\\
			~~~~ \textbf{For} all depth $h$ and $x_h\in \cX_h$ :\\
			~~~~~~~~ $\gamma^\star_h(x_h)\gets \gamma_h/p_{1:h}^\star(x_h)$\\\vspace{.025cm}
			\STATE \textbf{For}\;$t=1$ to $T$\\
			~~~~ \textbf{Compute} update~\eqref{eqn:U1}:\\
			{\scriptsize$\displaystyle \mu^t \gets \textrm{argmin}_{\mu\in \maxpi} \scal{\mu_{1:}}{\tilde{L}^{t-1}} +\sum_{h=1}^H\Psi_h\pa{\,p^\star_{1:h}\,.\,\mu^{}_{1:h}\,}/{\eta_h} $}\\ 
			~~~~ \textbf{For} $h=1$ to $H$\vspace{.05cm}\\
			~~~~~~~~ \textbf{Observe} information set $x^t_h$\\\vspace{.025cm}
			~~~~~~~~ \textbf{Execute} $a_h^t \sim \mu^t_h(.|x_h^t)$ and \textbf{receive} reward $r_h^t$\\\vspace{.05cm}
			~~~~~~~~ \textbf{Compute} loss:\\
			~~~~~~~~ $\tell_h^t(x^t_h,a^t_h) \gets (1-r_h^t)/\big(\mu^t_{1:h}(x_h^t,a_h^t)+\gamma^\star_h(x^t_h)\big)$\\\vspace{.1cm}
\end{algorithmic}
\end{algorithm}	

\paragraph{Time complexity} 
\BalancedFTRL first needs to initially compute the balanced transitions, which can be done recursively from the bottom of the information set tree with a time complexity of $\cO(\AX)$.

Then, with \BalancedFTRLTsallis, each update~\eqref{eqn:U1} can be done by solving  a convex programming over $\R^{\AX}$. While this implies a computation with a polynomial cost in~$\AX$, it is still quite expensive when $\AX$ gets large. On the other hand, we will see in Section~\ref{par:equiv} that the update~\eqref{eqn:U1} of \BalancedFTRLShannon has an equivalent formulation with the dilated Shannon entropy. Using Algorithm~\ref{al:adapt_update} of Appendix~\ref{app:algorithmic_update}, each update can then be computed with a time complexity of $\cO(HA)$, where $A$ is an upperbound on the local number of actions any information set can have.

\paragraph{Theoretical analysis}

We now state the main theoretical results for \BalancedFTRL with proof in Appendix~\ref{appendix:bal}.

\begin{restatable}{theorem}{thmbal}\label{thm:hp_bal}
    Let $\delta\in (0,1)$ and $\iota=\log(3\AX/\delta)$. Fix the IX parameters to $\gamma_h=\sqrt{H\iota/(2\AX T)}$ for all $h\in[H]$.
    
    Using \BalancedFTRLTsallis, $\tsallis\in(0,1)$ and constant learning rates $\eta_h=\sqrt{2\tsallis(1-\tsallis)/T}\pa{H/\AX}^{\tsallis-1/2}$, with probability at least $1-\delta$,
    \[\regret_\mathrm{max}^T\leq \pa{\sqrt{2/(\tsallis(1-\tsallis))}+3\sqrt{2\iota}} \sqrt{H\AX T}\,.\]  
    Using \BalancedFTRLShannon and constant learning rates $\eta_h=\sqrt{{2H\log(\AX)}/\pa{\AX T}}$, with probability at least $1-\delta$,
    \[\regret_\mathrm{max}^T\leq \pa{\sqrt{2\log(\AX)}+3\sqrt{2\iota}}\sqrt{H\AX T}\,.\]
\end{restatable}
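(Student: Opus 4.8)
The plan is to start from the online-linear-regret reformulation already set up and, after inserting the biased estimates $\tell^t$, to split the max-player regret into one FTRL term and two implicit-exploration (IX) terms:
\[
\regret^T_{\mathrm{max}}
=\sum_{t=1}^T\scal{\mu^t_{1:}-\mu^\dagger_{1:}}{\tell^t}
+\sum_{t=1}^T\scal{\mu^t_{1:}}{\ell^t-\tell^t}
+\sum_{t=1}^T\scal{\mu^\dagger_{1:}}{\tell^t-\ell^t}.
\]
The first term is a deterministic FTRL regret against the fed losses, the second is the IX bias of the played policy, and the third is a comparator deviation to be controlled in high probability. The whole point is to show that each of the three pieces is $\tcO(\sqrt{H\AX T})$, i.e.\ that the balanced transition removes any dependence on the number of information sets.

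For the FTRL term I would use the standard leader bound, splitting it into a penalty $\sum_h(\Psi_h(p^\star_{1:h}\cdot\mu^\dagger_{1:h})-\min\Psi_h)/\eta_h$ and a sum of stability terms. Since the regularizer acts on the transported variables $w^t_h:=p^\star_{1:h}\cdot\mu^t_{1:h}$, which are genuine distributions on $\cAXh$, the effective loss in these coordinates is $\tell^t_h/p^\star_{1:h}$, and for \textsc{Shannon} the local-norm (Bregman) stability bound reads $\tfrac{\eta_h}{2}\sum_{(x_h,a_h)}w^t_h(x_h,a_h)(\tell^t_h(x_h,a_h)/p^\star_{1:h}(x_h))^2$. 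Taking conditional expectations and using $\E[(\tell^t_h(x_h,a_h))^2]\le p^{\nu^t}_{1:h}(x_h)\mu^t_{1:h}(x_h,a_h)/(\mu^t_{1:h}+\gamma^\star_h)^2$ reduces both the stability and the played-bias term to the single quantity
\[
\sum_{h=1}^H\sum_{x_h\in\cX_h}\As{x_h}\,\frac{p^{\nu^t}_{1:h}(x_h)}{p^\star_{1:h}(x_h)}\le\AX .
\]
This inequality is the crux, and I would prove it by a backward induction on the information-set tree: the definition of $p^\star$ gives $p^\star_{1:h}(x_h)\ge\Ataus{x_h}/\AX$, and more precisely the balanced weights $\As{x_h}/p^\star_{1:h}(x_h)$ telescope so as to accumulate to exactly $\AX$ over the tree, \emph{independently} of the adversarial transition $p^{\nu^t}$. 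The penalty, by contrast, contributes the single factor $H$: for \textsc{Shannon} it is $\sum_h\log|\cAXh|/\eta_h\le H\log(\AX)/\eta$.

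The played-bias term is then bounded by $\gamma$ times the same quantity (using $\mu^t_{1:h}\gamma^\star_h/(\mu^t_{1:h}+\gamma^\star_h)\le\gamma^\star_h$), i.e.\ $\le\gamma\AX$ per episode, plus a routine Azuma martingale (increments bounded by $H$) for its fluctuation around the conditional mean. For the comparator deviation I would invoke the standard IX high-probability lemma: with the state-dependent parameter $\gamma^\star_h(x_h)=\gamma_h/p^\star_{1:h}(x_h)$, a union bound over $\cAX$ (hence $\iota=\log(3\AX/\delta)$) gives $\sum_t\scal{\mu^\dagger_{1:}}{\tell^t-\ell^t}\le\tfrac{\iota}{2}\sum_h\sum_{(x_h,a_h)}\mu^\dagger_{1:h}(x_h,a_h)/\gamma^\star_h(x_h)$, where I would crucially use that $p^\star_{1:h}\cdot\mu^\dagger_{1:h}$ is a distribution, so $\sum_{(x_h,a_h)}\mu^\dagger_{1:h}p^\star_{1:h}=1$ and the bound collapses to $\tfrac{H\iota}{2\gamma}$. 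Plugging in the prescribed $\gamma=\sqrt{H\iota/(2\AX T)}$ makes both IX terms $\tcO(\sqrt{H\AX T\iota})$, and optimizing $\eta$ to balance penalty against stability yields the $\sqrt{2\log(\AX)}\sqrt{H\AX T}$ main term.

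For \textsc{Tsallis} the same skeleton applies, but the Bregman divergence of $\Psi_h(w)=-\sum w^{\tsallis}$ has Hessian $\propto w^{\tsallis-2}$, so the stability term becomes $\tfrac{\eta_h}{2\tsallis(1-\tsallis)}\sum w^{2-\tsallis}(\tell/p^\star)^2$; I would control it with Hölder/power-mean inequalities, turning the linear key inequality into its powered version $\sum_h\sum_{x_h}(\As{x_h}/p^\star_{1:h})^{\tsallis}\,p^{\nu^t}_{1:h}\le\AX^{\tsallis}H^{1-\tsallis}$ and bounding $\sum_{a_h}\mu_h(a_h|x_h)^{1-\tsallis}\le\As{x_h}^{\tsallis}$, while the penalty becomes $H^{\tsallis}\AX^{1-\tsallis}/\eta$; the prescribed $\eta_h=\sqrt{2\tsallis(1-\tsallis)/T}(H/\AX)^{\tsallis-1/2}$ is precisely the one that balances these two and produces the $\sqrt{2/(\tsallis(1-\tsallis))}$ constant. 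The main obstacle, in both cases, is the balanced-transition identity together with the local-norm stability bound in the transported coordinates: one must check that the FTRL iterates stay in the relative interior (so the Bregman/Hessian bound is valid) and that the telescoping of $\As{x_h}/p^\star_{1:h}(x_h)$ over the tree genuinely produces $\AX$ rather than an extra $H$ or $X$ factor — this is exactly where the particular definition of $p^\star$ is used and is the heart of the argument.
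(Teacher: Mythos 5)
Your proposal is correct and follows essentially the same route as the paper's proof: the same decomposition into played bias, comparator (IX) deviation, penalty and stability terms, the same crux identity $\sum_{h}\sum_{x_h\in\cX_h}\As{x_h}\,p^{\nu}_{1:h}(x_h)/p^\star_{1:h}(x_h)=\AX$ proved by telescoping over the tree, the same Neu-style IX concentration with a union bound over $\cAX$ (collapsing to $H\iota/(2\gamma)$ because $p^\star_{1:h}\cdot\mu^\dagger_{1:h}$ is a distribution), Azuma for the martingale fluctuations, and the same H\"older argument for the Tsallis stability and penalty terms. The only cosmetic difference is that the paper controls the stability term by explicitly differentiating the Bregman divergence of the conjugate potential rather than quoting a local-norm bound, which is the same estimate.
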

\begin{remark}
The best theoretical rate is obtained with Tsallis entropy with $\tsallis=1/2$, similarly to \citet{abernethy2015fighting}. Even though Shannon entropy has an extra $\sqrt{\log(\AX)}$ factor in the (expected) first term of the bound as in the multi-armed bandits setting, both entropies give a rate $\cO(\sqrt{H\log(\AX/\delta)\AX T})$ with high-probability.  Indeed, the expected term is dominated in both cases by the $\iota$ term from the high-probability analysis.
\end{remark}
Using Theorem~\ref{thm:folklore}, we conclude that the lower bound $\tcO(H(\AX+\BY)/\epsilon^2)$ on the sample complexity is matched up to logarithmic factors when both players use \BalancedFTRL. In particular, \BalancedFTRL improves by a factor $H$ over the rate of \BalancedOMD by \citet{bai2022nearoptimal}, see Table~\ref{tab:sample_complexity}.

\begin{corollary}\label{cor:complexity}
    For any $\epsilon>0$ and $\delta\in(0,1)$, if both player run either \BalancedFTRL-Tsallis/-Shannon with the parameters specified in Theorem~\ref{thm:hp_bal}, $(\overline{\mu},\overline{\nu})$ is an $\epsilon$-optimal profile with probability at least $1-\delta$ as long as
    \[T\geq \cO\pa{H\pa{\AX+\BY}\log(1/\delta+\AX+\BY)/\epsilon^2}.\]
\end{corollary}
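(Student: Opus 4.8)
The plan is to derive the corollary directly from the two results already established, namely the high-probability regret bound of Theorem~\ref{thm:hp_bal} and the regret-to-equilibrium conversion of Theorem~\ref{thm:folklore}. First I would note that Theorem~\ref{thm:hp_bal} is phrased only for the max-player, but the min-player minimizes $\regret^T_{\mathrm{min}}$ through an entirely symmetric construction (as stressed at the end of Section~\ref{sec:setting}), so the same guarantee holds for $\regret^T_{\mathrm{min}}$ with $\BY$, $\cY$, and $\log(3\BY/\delta)$ playing the roles of $\AX$, $\cX$, and $\iota$. To make the two events hold simultaneously, I would run each player at confidence level $\delta/2$ and take a union bound, so that with probability at least $1-\delta$ both bounds hold at once; writing $\iota_{\mathcal X}=\log(6\AX/\delta)$ and $\iota_{\mathcal Y}=\log(6\BY/\delta)$ this gives
\[
\regret^T_{\mathrm{max}}\leq C_{\mathcal X}\sqrt{H\AX T}
\qquad\text{and}\qquad
\regret^T_{\mathrm{min}}\leq C_{\mathcal Y}\sqrt{H\BY T},
\]
where $C_{\mathcal X}$ and $C_{\mathcal Y}$ are the explicit constants of Theorem~\ref{thm:hp_bal}, of order $\sqrt{\iota_{\mathcal X}}$ and $\sqrt{\iota_{\mathcal Y}}$ respectively for either the Tsallis or the Shannon variant.

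Next I would invoke Theorem~\ref{thm:folklore}, which guarantees that the time-averaged profile $(\overline{\mu},\overline{\nu})$ is $\varepsilon$-optimal with $\varepsilon=\pa{\regret^T_{\mathrm{max}}+\regret^T_{\mathrm{min}}}/T$. Substituting the two high-probability bounds, setting $C:=\max(C_{\mathcal X},C_{\mathcal Y})$, and using the elementary inequality $\sqrt{a}+\sqrt{b}\leq\sqrt{2(a+b)}$ yields
\[
\varepsilon\leq \frac{C\pa{\sqrt{H\AX T}+\sqrt{H\BY T}}}{T}
\leq C\sqrt{\frac{2H(\AX+\BY)}{T}}.
\]
Requiring the right-hand side to be at most $\epsilon$ and solving for $T$ gives the threshold $T\geq 2C^2 H(\AX+\BY)/\epsilon^2$. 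Finally, since $C^2=\cO(\iota_{\mathcal X}+\iota_{\mathcal Y})=\cO\pa{\log(\AX/\delta)+\log(\BY/\delta)}$, and since $\log(\AX/\delta)+\log(\BY/\delta)=\cO\pa{\log(1/\delta+\AX+\BY)}$, this is exactly the claimed bound $T\geq\cO\pa{H(\AX+\BY)\log(1/\delta+\AX+\BY)/\epsilon^2}$.

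The argument is essentially routine once both ingredients are in hand, so there is no single genuinely hard step; the points that require care are mostly bookkeeping. Specifically, I would have to (i) justify the symmetrization that transfers Theorem~\ref{thm:hp_bal} from the max-player to the min-player, (ii) track the $\delta/2$ split through the union bound and the resulting constants $\iota_{\mathcal X},\iota_{\mathcal Y}$, and (iii) consolidate the separate logarithmic factors $\log(\AX/\delta)$ and $\log(\BY/\delta)$ into the single term $\log(1/\delta+\AX+\BY)$ hidden by the $\cO(\cdot)$, using $\log(\AX\BY)\leq 2\log(\AX+\BY)$ and $\log(\AX+\BY)+\log(1/\delta)\leq 2\log(1/\delta+\AX+\BY)$ for $\AX,\BY\geq 1$.
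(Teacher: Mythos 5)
Your proof is correct and follows exactly the route the paper intends: the paper states Corollary~\ref{cor:complexity} as a direct consequence of applying Theorem~\ref{thm:hp_bal} to each player (by the symmetry emphasized at the end of Section~\ref{sec:setting}), combining the two high-probability regret bounds via a union bound, and converting to an $\epsilon$-optimal profile through Theorem~\ref{thm:folklore}. Your bookkeeping — the $\delta/2$ split, the inequality $\sqrt{a}+\sqrt{b}\leq\sqrt{2(a+b)}$, and the consolidation of $\log(\AX/\delta)+\log(\BY/\delta)$ into $\cO\pa{\log(1/\delta+\AX+\BY)}$ — is all sound and matches what the paper leaves implicit.
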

The average profile can be computed along the update of the current profile, see \citet{kozuno2021learning}, at the price of a final update with a $\cO(\AX+\BY)$ time complexity.

\paragraph{Fixed action set size} 
When the size of the action set is fixed, the number of information sets having a given $x\in\cX$ in its history will increase at least exponentially as the depth increases. This can be exploited by \BalancedFTRL with suitable learning rates and IX parameters that both decrease exponentially with the depth $h$ in order to remove the $H$ dependency in the upper bounds, again nearly matching the lower bounds $\tcO((\AX+\BY)/\epsilon^2)$ given by Theorem~\ref{thm:lb} for this setting. The results, proved in Appendix~\ref{appendix:bal}, are stated for Shannon entropy: 

\begin{restatable}{theorem}{thmfixed} Assume that the size of the action set is constant equal to $A$ and let $\delta\in (0,1)$, $\iota=\log(3\AX/\delta).$ Then, \BalancedFTRLShannon with $\gamma_h=\sqrt{A^{H-h}\iota/(2\AX T)}$ and $\eta_h=\sqrt{{2A^{H-h}\log(\AX)}/\pa{\AX T}}$ yields
     \[\regret_\mathrm{max}^T\leq \pa{5\sqrt{\log(\AX)}+8\iota} \sqrt{\AX T}.\]
As in Corollary \ref{cor:complexity}, this allows to compute an $\epsilon$-optimal profile as long as long as 
    \[T\geq \cO\pa{\pa{\AX+\BY}\log(1/\delta+\AX+\BY)/\epsilon^2}.\]
\end{restatable}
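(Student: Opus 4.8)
The plan is to re-run the \FTRL analysis that already underlies Theorem~\ref{thm:hp_bal}, but to keep the learning rates $\eta_h$ and the implicit-exploration parameters $\gamma_h$ \emph{depth-dependent} throughout, exploiting the exponential growth of the tree only at the very end. After reducing to the online linear regret problem of Section~\ref{sec:setting}, I would decompose $\regret_\mathrm{max}^T$ into four pieces: (i) an implicit-exploration \emph{bias} term from replacing $\ell^t$ by $\tell^t$; (ii) a \emph{penalty} term $\sum_h \Psi_h(\cdot)/\eta_h$ from evaluating the dilated Shannon regularizer at the comparator; (iii) a \emph{stability} (second-order) term $\sum_t\sum_h \eta_h\times(\text{local variance of }\tell^t)$; and (iv) a high-probability \emph{concentration} term controlling the deviation of $\tL^t$ from its conditional expectation. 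Steps (i)--(iv) are exactly the decomposition used for Theorem~\ref{thm:hp_bal}; the novelty lies entirely in how the per-depth terms are summed.

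The engine is the balanced-transition stability lemma already needed for Theorem~\ref{thm:hp_bal}: because the regularizer is dilated by $p^\star_{1:h}$ and the exploration is rescaled to $\gamma^\star_h(x_h)=\gamma_h/p^\star_{1:h}(x_h)$, the per-depth penalty coefficient is at most $\log(\AX)$ uniformly in $h$, while the per-depth stability and bias weights are each $\cO(n_h)$ rather than $\cO(X\,n_h)$, where I write $n_h:=|\cAXh|$ (so $\sum_h n_h=\AX$). With a \emph{single} rate $\eta$ these facts give penalty $\le H\log(\AX)/\eta$ and stability $\le \eta T\AX$, and optimizing the single $\eta$ reproduces the $\sqrt{H\AX T}$ of Theorem~\ref{thm:hp_bal}. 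To shave the $H$, I would instead keep the rates inside the sums and plug in $\eta_h,\gamma_h\propto\sqrt{A^{H-h}}$, so that the penalty becomes $\sum_h C_h^{(1)}/\eta_h\le\log(\AX)\sum_h 1/\eta_h\propto\sqrt{\AX T\log\AX}\,\sum_h A^{-(H-h)/2}$ and the stability/bias become $\sum_h \eta_h T\,\cO(n_h)\propto\sqrt{T\log\AX/\AX}\,\sum_h A^{(H-h)/2}n_h$.

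I would then invoke the structural fact for fixed $A$: the subtree below any $x\in\cX_h$ contains at least $A^{h'-h}$ information sets at every depth $h'\ge h$, hence $|\cX_{h'}|\ge A^{h'-h}|\cX_h|$ and in particular $A^{H-h}n_h\le n_H\le\AX$. This is exactly what makes the two residual sums geometric. The penalty sum $\sum_h A^{-(H-h)/2}$ converges to $\cO(1)$ (dominated by the leaves $h=H$), giving penalty $\cO(\sqrt{\AX T\log\AX})$; for the stability I rewrite $A^{(H-h)/2}n_h=A^{-(H-h)/2}\big(A^{H-h}n_h\big)\le A^{-(H-h)/2}\AX$, so $\sum_h A^{(H-h)/2}n_h\le\AX\sum_h A^{-(H-h)/2}=\cO(\AX)$, whence the stability is $\cO(\sqrt{\AX T\log\AX})$ and, applying the same bound with $\gamma_h$, the bias and concentration are $\cO(\iota\sqrt{\AX T})$. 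Summing the four pieces and tracking constants yields $\regret_\mathrm{max}^T\le(5\sqrt{\log\AX}+8\iota)\sqrt{\AX T}$. The sample-complexity claim then follows exactly as in Corollary~\ref{cor:complexity}: applying the regret bound symmetrically to both players and invoking Theorem~\ref{thm:folklore} gives $\varepsilon=\cO\big((\sqrt{\log\AX}+\iota)\sqrt{(\AX+\BY)/T}\big)$, and solving for $T$ produces the stated $\cO\big((\AX+\BY)\log(1/\delta+\AX+\BY)/\epsilon^2\big)$.

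The main obstacle I anticipate is the concentration step (iv) with \emph{depth-dependent} $\gamma_h$: the self-normalized IX bound controlling $\scal{\mu_{1:}^\dagger}{\tL^t-\E\,\tL^t}$ must be applied per depth and union-bounded over the $\cO(\AX)$ information-set--actions, which is what produces the factor $\iota=\log(3\AX/\delta)$, and one must verify that each $\gamma_h$ is large enough to close the bias-variance trade-off while its weighted sum stays $\cO(\sqrt{\AX T})$. A secondary subtlety is that at shallow depths $\eta_h\propto\sqrt{A^{H-h}}$ is large, which a priori threatens the second-order Taylor expansion behind the dilated-Shannon stability bound; this is resolved by the observation that $\eta_h$ and $\gamma_h$ scale together, so $\eta_h/\gamma_h=2\sqrt{\log(\AX)/\iota}=\cO(1)$ is independent of $h$ and the effective increment satisfies $\eta_h\tell_h\le(\eta_h/\gamma_h)\,p^\star_{1:h}(x_h)=\cO(1)$, keeping the expansion valid at every depth. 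Beating the constants down to the stated $5$ and $8$ is then bookkeeping in the two geometric series.
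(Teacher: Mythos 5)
Your overall architecture is exactly the paper's: the same four-term decomposition (Lemma~\ref{lemma:bal_decompos}, with your bias/penalty/stability/concentration matching BIAS~I, REG, VAR, BIAS~II plus the martingale terms), the same depth-dependent rates $\eta_h,\gamma_h\propto A^{(H-h)/2}$, the same geometric summation over depths, and the same passage to sample complexity via Theorem~\ref{thm:folklore}. However, the central inequality you feed into this machinery is false. You claim the per-depth stability and bias weights are $\cO(n_h)$ with $n_h=|\cAXh|$. The weight that actually appears at depth $h$ is the adversarially weighted sum $\sum_{x_h\in\cX_h}\As{x_h}\,p^{\nu}_{1:h}(x_h)/p^\star_{1:h}(x_h)$ (see the proofs of Lemmas \ref{lemma:bal_bias} and \ref{lemma:bal_var}), and the opponent can concentrate $p^\nu$ on information sets to which the balanced kernel assigns tiny probability. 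Concretely, with $A=2$, $H=3$: let the root action $a_1$ lead to two depth-$2$ sets $u$ and $v$, where the subtree of $u$ contains $\Theta(N)$ actions and that of $v$ contains $\cO(1)$; then $p^\star_1(v\mid x_1,a_1)=\Theta(1/N)$, and an opponent that always routes play to $v$ makes the depth-$2$ weight $\Theta(N)$ while $n_2=\cO(1)$. So the per-depth weight is \emph{not} $\cO(n_h)$. What is true---and what the paper proves as the depth-wise part of Lemma~\ref{lemma:balanced}, by inserting $\Ataus{x_h}\geq\As{x_h}A^{H-h}$ into the telescoping identity---is the weaker bound $\sum_{x_h}\As{x_h}\,p^\nu_{1:h}(x_h)/p^\star_{1:h}(x_h)\leq A^{h-H}\AX$. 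Your argument is repairable precisely because this weaker bound slots into your geometric series in the same way: $\eta_h\cdot A^{h-H}\AX=\eta\,A^{-(H-h)/2}\AX$, which is the very quantity you produced by combining the false $\cO(n_h)$ claim with the true structural fact $A^{H-h}n_h\leq\AX$. But as written the step is unsound, and it is exactly the step where this theorem goes beyond Theorem~\ref{thm:hp_bal}: the single-rate proof needs only the total identity $\sum_h\sum_{x_h}\As{x_h}p^\nu_{1:h}(x_h)/p^\star_{1:h}(x_h)=\AX$, whereas here one needs the per-depth refinement, which is a property of the balanced transitions relative to the whole subtree structure, not a consequence of counting actions at depth $h$.

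A secondary omission: the martingale (Azuma) deviations in BIAS~I and VAR are of order $H\sqrt{T\iota}$ (Lemmas \ref{lemma:bal_bias} and \ref{lemma:bal_var}), and absorbing them into your claimed $\cO(\iota\sqrt{\AX T})$ concentration budget requires $H\leq\sqrt{\AX}$, which holds for fixed $A\geq 2$ but must be invoked (first part of Lemma~\ref{lemma:fixed_size}); your sketch never accounts for these terms, and without this observation the final bound would retain an explicit $H$.
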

\section{Near-optimal rate without the knowledge of the tree structure}

In this section we drop the assumption on the knowledge of the  tree structure of the information states.

\paragraph{Dilated entropy and policy update \eqref{eqn:U2}} \label{prg:dilated}

For a given vector $\eta^t$ of learning rates on $\cX$, we define the weighted dilated Shannon divergence \citep{kroer2015faster} between two policies $\mu^1, \mu^0 \in \maxpi$ by 
{\small
\[D_{\eta^t}(\mu^1,\mu^0):=\sum_{h=1}^H\sum_{(x_h,a_h)\in\cAXh} \frac{\mu_{1:h}^1(x_h,a_h)}{\eta^t_h(x_h)} \log\frac{\mu^1_h(a_h|x_h)}{\mu^0_h(a_h|x_h)}.\]}

We are now interested in the updates
\begin{equation}\label{eqn:U2}
\mu^t=\textrm{argmin}_{\mu\in \maxpi} \scal{\mu_{1:}}{\tilde{L}^{t-1}} +\mathcal{D}_{\eta^t}(\mu,\mu^0)
\tag{U2}
\end{equation}

for $\mu^0\in\maxpi$ a base policy fixed over all iterations. As shown by Algorithm~\ref{al:adapt_update} of Appendix~\ref{app:algorithmic_update}, these updates enjoy an efficient computation with a $\cO(HA)$ time complexity if the vectors $\eta^{t-1}$ of learning rates only change at the visited information sets $(x^{t-1}_1,\ldots ,\,x^{t-1}_H)$.

% $\eta^t$ only increase at each round $t$ along the information states $(x^{t-1}_1, ... ,x^{t-1}_H)$ of the previous trajectory, staying the same on the others.
% \todoPi{Do we need the leaning rate to incrsease along a trajectory to get efficient updates?}
% \todoCo{No, it only needs to increase with time}

\paragraph{Connections with the balanced algorithm} \label{par:equiv} While there is no general equivalence between the Shannon entropy and the dilated Shannon entropy, the updates \eqref{eqn:U1} of \BalancedFTRLShannon may be put in the form of updates \eqref{eqn:U2}. Precisely, using the fact that $p^\star$ is a transition kernel over $\cX$, the updates~\eqref{eqn:U1} of \BalancedFTRL can be rewritten for all $t\in[T]$ as (see Proposition \ref{prop:equiv_u12})
\[\mu^t =\textrm{argmin}_{\mu\in \maxpi} \scal{\mu}{\tilde{L}^{t-1}}+ D_{\eta^\star}\pa{\mu,\mu^\star},\]
for some $\mu^\star\in \maxpi$ and vector $\eta^\star$ of learning rates computed with a $\cO(\AX)$ time complexity.

\paragraph{Adversarial transitions estimation}

The balanced adversarial transitions $p^\star_{1:h}(x_h)$ used in the above learning rates $\eta^\star_h(x_h)$ are impossible to compute if the tree structure is unknown at the start of the game. In analogy to the loss estimation, we could instead use the estimates of the actual adversarial transitions $p^{\nu^t}_h(x_h)$. For any $(x_h,a_h)\in\cAXh$, an unbiased estimate of $p_h^{\nu^t}(x_h)$ is indeed given by $\hp_{1:h}^{\,t}(x_h,a_h):=\indic{x_h=x_h^t,a_h=a_h^t}/\mu_{1:h}^t(x_h,a_h)$.

Once more, we shall use IX estimators to get bounds with high probability and not just in expectation. However, \BalancedFTRL used the IX parameter $\gamma^\star_h(x_h)=\gamma/p^\star_{1:h}(x_h)$ which also can not be computed for the same reason. Our idea is to again replace these transitions by the same estimated transitions we are currently trying to define. This leads to the following recursive definitions,
\begin{align*}
    \gamma_h^t(x_h,a_h)&:=\gamma/\big(1+\tP_{1:h}^{t-1}(x_h,a_h)\big),\\
    \tp_h^{\,t}(x_h,a_h)&:=\frac{\indic{x_h=x_h^t,a_h=a_h^t}}{\mu_{1:h}^t(x_h,a_h)+\gamma_h^t(x_h,a_h)}\CommaBin
\end{align*}

where $\tP^t:=\sum_{k=1}^t\tp^{\,k}$ are the cumulative estimated transitions and $\gamma_h^t(x_h,a_h)$ the IX parameter that now also depends on the action $a_h$ and on the time $t$.

\paragraph{Adaptive learning rates}
To replace the learning rate $\eta^\star_h(x_h)$ associated to each information set $x_h\in\cX_h$, we thus use the cumulative estimated transitions. As at each round~$t$, for each action $a_h\in\cA(x_h)$, the quantity $\tP^{t}_{1:h}(x_h,a_h)$ is an estimator of $P^t_{1:h}h(x_h)$, we define the estimator $\tP_{1:h}^t(x_h)$, as the average of these estimations
\[\tP^{t}_{1:h}\pa{x_{h}}:=\sum_{a_{h}\in\cA(x_h)}\tP_{1:h}^{t}(x_{h},a^{}_{h})/\As{x_h}\,.\]
This estimator is then used to define a learning rate $\eta^t_h(x_h)$, mimicking the action of $p^\star_h(x_h)$ in $\eta^\star_h(x_h)$,
\[\eta^t_h(x_h) := \min_{x'_{h'}\geq x_h} \eta/\big(1+\tP_{1:h'}^{t-1}(x'_{h'})\big)\,.\]
For technical reasons, these learning rates have to be increasing along a trajectory, hence the $\min$ operator. Note that this definition at round $t$ does not depend on whether the yet unexplored information states are considered in this $\min$ operator, as these verify $\tP^{t-1}_{1:h'}(x'_{h'})=0$.

\paragraph{Adaptive algorithm}

\AdaptiveFTRL is based on update~\eqref{eqn:U2} with the above vectors $(\eta^t)_{t\in [T]}$ of learning rates and IX estimators, which allow it is naturally adaptive to the initially unknown tree structure; see Algorithm~\ref{alg:ftrl_ada} for a detailed description of \AdaptiveFTRL.

\begin{algorithm}[t]
\caption{\AdaptiveFTRL}
\label{alg:ftrl_ada}
\begin{algorithmic}[1]
			\STATE \textbf{Input:}\\
			~~~~ Learning rate $\eta$ and IX base parameter $\gamma$\\
			~~~~ Base policy $\mu^0$ (uniform by default)\\
			%    $\tree \leftarrow \{(0,0)\}$(root node). 
			\STATE \textbf{For}\;$t=1$ to $T$ :\\
			~~~~ \textbf{For} all depth $h$ and $x_h\in\cX_h$:\\
			~~~~~~~~ $\eta^t_h(x_h)\gets \min_{x'_{h'}\geq x^{}_h} \eta/\big(1+\tP_{1:h'}^{t-1}(x'_{h'})\big)$ \\\vspace{.05cm}
			~~~~~~~~ \textbf{For} all $a_h\in\cA(x_h)$:\\
			~~~~~~~~~~~~ $\gamma^t_h(x_h,a_h)\gets \gamma/\big(1+\tP^{t-1}_{1:h}(x_h,a_h)\big)$ \\\vspace{.1cm}
			~~~~ \textbf{Compute} update~\eqref{eqn:U2}:\\
			~~~~~~~~ $\mu^t \gets \textrm{argmin}_{\mu\in \maxpi} \scal{\mu_{1:}}{\tilde{L}^{t-1}} +\mathcal{D}_{\eta^t}(\mu,\mu^0)$\\
			~~~~ \textbf{For} $h=1$ to $H$:\vspace{.05cm}\\
			~~~~~~~~ \textbf{Observe} information set $x^t_h$\vspace{.05cm}\\
			~~~~~~~~ \textbf{Execute} $a_h^t \sim \mu^t_h(.|x_h^t)$ and \textbf{receive} reward $r_h^t$\vspace{.05cm}\\
			~~~~~~~~ {\small $\tell_h^t(x^t_h,a^t_h) \gets (1-r_h^t)/(\mu^t_{1:h}(x_h^t,a_h^t)+\gamma^t_h(x^t_h,a_h^t))$}\\
			~~~~~~~~ {\small $\tp_{1:h}^t(x^t_h,a^t_h) \gets 1/(\mu^t_{1:h}(x_h^t,a_h^t)+\gamma^t_h(x^t_h,a_h^t))$}
\end{algorithmic}
\end{algorithm}

\paragraph{Time complexity}

Unlike the \BalancedFTRL algorithm, there is no initialization cost. Algorithm \ref{al:adapt_update}, in Appendix~\ref{app:algorithmic_update}, gives an efficient way of computing each update~\eqref{eqn:U2} with a $\cO(HA)$ time complexity. Moreover, as~$\tP$ only increases on visited information states, the vectors $\eta^t$ and $\gamma^t$ can be updated in-place with the same $\cO(HA)$ time complexity.

\paragraph{Theoretical results}

We show high-probability sample complexity bounds for \AdaptiveFTRL, but with an additional $H\log(T)\log(1/\delta+T)$ factor on the sample complexity compared to the optimal rate. This result is proved in Appendix~\ref{app:adaptative_FTRL_proof}.

\begin{restatable}{theorem}{adatheorem}\label{thm:main_ada}
    Let $\iota'=\log({3\AX T}/{\delta})$ and $\logtwo=1+\log_2(1+T)$. For any $\delta\in (0,1)$, using \AdaptiveFTRL with $\eta=2\sqrt{\iota' T/(\logtwo\AX)}$, $\gamma=\sqrt{2\iota' HT/(\logtwo\AX)}$ and $\mu^0$ the uniform policy yields with a probability at least $1-\delta$, 
    \[\regret_\mathrm{max}^T\leq 6H\sqrt{\iota'\logtwo\AX T}\,.\]
    As in Corollary \ref{cor:complexity}, this leads to the computation of an $\epsilon$-optimal profile as long as 
    \[T\geq \cO\pa{H^2(\AX+\BY)\log(1/\delta+T)\log(T)/\varepsilon^2}\,.\]
\end{restatable}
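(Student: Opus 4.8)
The plan is to bound $\regret^T_{\mathrm{max}}$ alone, the min-player being symmetric, and then to convert the regret bound into the announced sample complexity through Theorem~\ref{thm:folklore}. Using the online linear reformulation of Section~\ref{sec:setting}, I would first insert the IX losses $\tell^t$ and, for any comparator $\mu^\dagger\in\maxpi$, split each instantaneous regret as
\[\scal{\mu^t_{1:}-\mu^\dagger_{1:}}{\ell^t}=\scal{\mu^t_{1:}}{\ell^t-\tell^t}+\scal{\mu^t_{1:}-\mu^\dagger_{1:}}{\tell^t}+\scal{\mu^\dagger_{1:}}{\tell^t-\ell^t}.\]
Summing over $t$ and maximizing over $\mu^\dagger$ yields a pure \FTRL regret on the \emph{observed} estimated losses together with two IX bias terms. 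The bias terms are handled by the implicit-exploration concentration of \citet{neu2015explorea,kozuno2021learning}; the only new difficulty is that the IX parameter $\gamma^t_h(x_h,a_h)=\gamma/(1+\tP^{t-1}_{1:h}(x_h,a_h))$ is random but $\mathcal F_{t-1}$-measurable, so I would use a predictable (time-varying $\gamma^t$) version of their lemma, still giving a high-probability bound of order $\iota'/\gamma$ plus a $\gamma$-weighted second moment of $\tell^t$.

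For the \FTRL regret on $\tell^t$, I would apply the standard \FTRL lemma to update~\eqref{eqn:U2}, bounding it by a \emph{penalty} term $D_{\eta^{T+1}}(\mu^\dagger,\mu^0)$ and a sum of \emph{stability} terms $\sum_t\big(\scal{\mu^t_{1:}-\mu^{t+1}_{1:}}{\tell^t}-D_{\eta^t}(\mu^{t+1},\mu^t)\big)$. The fact that the learning rates $\eta^t_h(x_h)$ are nonincreasing in $t$ (so the regularizer only grows) is exactly what makes this decomposition valid, while the $\min$ operator additionally makes $\eta^t_h(x_h)$ nondecreasing along each root-to-leaf path, which is what lets the dilated entropy split into one local entropy term per information set. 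For the penalty, with $\mu^0$ uniform each local term is at most $\mu^\dagger_{1:h}(x_h)\log(\Ax)/\eta^{T+1}_h(x_h)$, and since the $\min$ defining $\eta^{T+1}_h(x_h)$ is, up to the concentration of $\tP$ discussed below, attained at $x_h$ itself, I would reduce it to $\eta^{-1}\log(A)\sum_h\sum_{x_h}\mu^\dagger_{1:h}(x_h)\,(1+\tP^T_{1:h}(x_h))$. The crucial identity is that for the true transitions $\sum_{x_h}\mu^\dagger_{1:h}(x_h)\,p^{\nu^k}_{1:h}(x_h)=1$, since under the profile $(\mu^\dagger,\nu^k)$ exactly one information set of depth $h$ is reached; hence once $\tP^T_{1:h}(x_h)$ is replaced by its mean $\sum_{k\le T}p^{\nu^k}_{1:h}(x_h)$, the dominant part of the penalty telescopes to $\eta^{-1}\log(A)\,H\,T$, which after plugging $\eta=2\sqrt{\iota' T/(\logtwo\AX)}$ is of the desired order $H\sqrt{\iota'\logtwo\AX T}$.

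The stability terms are controlled by the local strong convexity of the Shannon entropy: each contributes at most $\eta^t_h(x_h)$ times a local second moment of $\tell^t$, and summing over $t$ the self-confident sum $\sum_t \eta^t_h(x_h)\,\tp^{\,t}_{1:h}(x_h)/(1+\tP^{t-1}_{1:h}(x_h))$ telescopes into a $\log(1+\tP^T_{1:h}(x_h))\le\logtwo$ factor; aggregating over all $(x_h,a_h)$ then produces the $\AX$ and $\logtwo$ dependencies, with $\gamma=\sqrt{2\iota' HT/(\logtwo\AX)}$ trading the estimators' variance against their bias. It is precisely the need to control by IX not only the loss estimates but also the \emph{transition} estimates $\tp^{\,t}$ entering the data-dependent learning rates that introduces the extra factor $H$ relative to \BalancedFTRL. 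Collecting the penalty, stability and bias bounds and optimizing $\eta,\gamma$ gives $\regret^T_{\mathrm{max}}\le 6H\sqrt{\iota'\logtwo\AX T}$ with probability $1-\delta$, and Theorem~\ref{thm:folklore} applied to both players turns this into the stated $T\ge\cO\pa{H^2(\AX+\BY)\log(1/\delta+T)\log(T)/\varepsilon^2}$.

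The hard part will be the high-probability control of the \emph{adaptive} objects themselves. Unlike in \BalancedFTRL, the learning rates $\eta^t$ and IX parameters $\gamma^t$ are built from the running estimate $\tP$, which is in turn defined through $\gamma^t$, so the estimation is self-referential; the computations above silently used $\tP^T_{1:h}(x_h)\approx\sum_{k}p^{\nu^k}_{1:h}(x_h)$, and making this rigorous requires a martingale (Freedman-type) concentration of $\tP$ around its predictable mean holding simultaneously for all $(h,x_h)$ and all $t$, together with control of how deviations propagate through the $\min$ in $\eta^t_h(x_h)$. I would therefore establish, \emph{before} the regret computation, uniform two-sided bounds relating $1+\tP^{t}_{1:h}(x_h)$ to $1+\sum_{k\le t}p^{\nu^k}_{1:h}(x_h)$, proved by the same IX argument as for the losses, and feed them into the penalty and stability estimates; the union bound over the $\le\AX T$ pairs and steps is what produces the $\iota'=\log(3\AX T/\delta)$ in the final rate.
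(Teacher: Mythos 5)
Your plan follows the same route as the paper's proof in Appendix~\ref{app:adaptative_FTRL_proof}: the same bias/FTRL decomposition (Lemma~\ref{lemma:ada_decompos}), the same idea of first establishing high-probability control of the self-referential estimates $\tP$ via an IX concentration argument with random but predictable parameters and a union bound over the $\AX T$ pairs and scales (this is exactly Lemma~\ref{lemma:approx_ada}, proved by a peeling/stopping-time refinement of Lemma~\ref{lemma:concentration}, and it is indeed the source of $\iota'=\log(3\AX T/\delta)$), the same collapse of the penalty $D_{\eta^{T+1}}(\mu^\dagger,\mu^0)$ using that concentration together with the monotonicity of the true reach probabilities along paths (which is what makes your claim that the $\min$ is ``attained at $x_h$ itself'' legitimate) and the identity that reach probabilities at a fixed depth sum to one, and the same self-confident telescoping $\sum_t\tp^{\,t}/(1+\tP^{t-1})\leq\log_2(1+\tP^T)$ (Lemma~\ref{lemma_logbound}) to aggregate the stability terms.

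The genuine gap is the per-step stability bound. You dispatch it with ``local strong convexity of the Shannon entropy'', but the regularizer of update~\eqref{eqn:U2} is the \emph{dilated} entropy on the treeplex, not a sum of independent simplex entropies: a change of the policy at a deep information set propagates multiplicatively into the realization plans of all its descendants, and no generic strong-convexity argument yields a bound of the form $\sum_h \eta^t_h(\cdot)\times(\text{local second moment of }\tell^t)$ with the right constants. This is precisely where the paper invests its most technical argument, Lemma~\ref{lemma_varstep}: starting from the closed-form normalization constants $Z_h^t$ of the efficient update, it applies H\"older's inequality, unrolls the recursion with auxiliary Bernoulli variables, and crucially uses that the adaptive learning rates increase along the trajectory (the very reason for the $\min$ in the definition of $\eta^t_h$) to obtain the per-episode bound $\scal{\mu^t_{1:}}{\tell^t}+\log(Z_1^t)/\eta_1^{t+1}(x_1^t)\leq \frac{H}{2}\sum_{h}\eta^t_h(x^t_h)\tell^t_h(x^t_h,a^t_h)$. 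Note the factor $H/2$ there: it --- and not, as you suggest, the IX control of the transition estimates --- is what makes $\textrm{VAR}\leq\frac{\eta}{2}H\AX\logtwo$ and contributes (together with the $HT/\eta$ scaling of the penalty) the extra $H$ in the final rate. Indeed, if your stability claim held as stated, i.e.\ without this $H$, balancing it against your own penalty estimate $HT\log(A)/\eta$ would give a regret of order $\sqrt{H}\cdot\sqrt{\iota'\logtwo\AX T}$, hence a sample complexity of $\tcO(H(\AX+\BY)/\epsilon^2)$ without knowledge of the tree structure; that would resolve the open problem stated in the paper's conclusion, a strong sign that this step cannot be had so cheaply.
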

%In particular, we observe that the rate of \AdaptiveFTRL \textit{improves} over the one of \IXOMD for this setting by a factor at least $\min(X,Y)$.
The rate of \AdaptiveFTRL improves the one of \IXOMD--- up till now the algorithm with the best rate without the knowledge of the structure---by a factor at least $\min(X,Y)$.

\paragraph{Parameter tuning} \AdaptiveFTRL needs the knowledge of the $\AX$ constant but only in order to tune the learning rate and the IX parameter. Fortunately, the doubling trick can be applied to $\AX$ similarly to how it would be applied to $T$, which means that, up to an additional constant factor, this knowledge is not required for the theoretical guarantees. 

\label{sec:practical_algorithm}
\section{Experiments}

In this section we provide preliminary experiments for \BalancedFTRL and \AdaptiveFTRL on simple games. The code used to generate these experiments is available at \url{https://github.com/anon17893/IIG-tree-adaptation}.

\begin{figure*}[ht]
\centering
\includegraphics[width=0.9\textwidth]{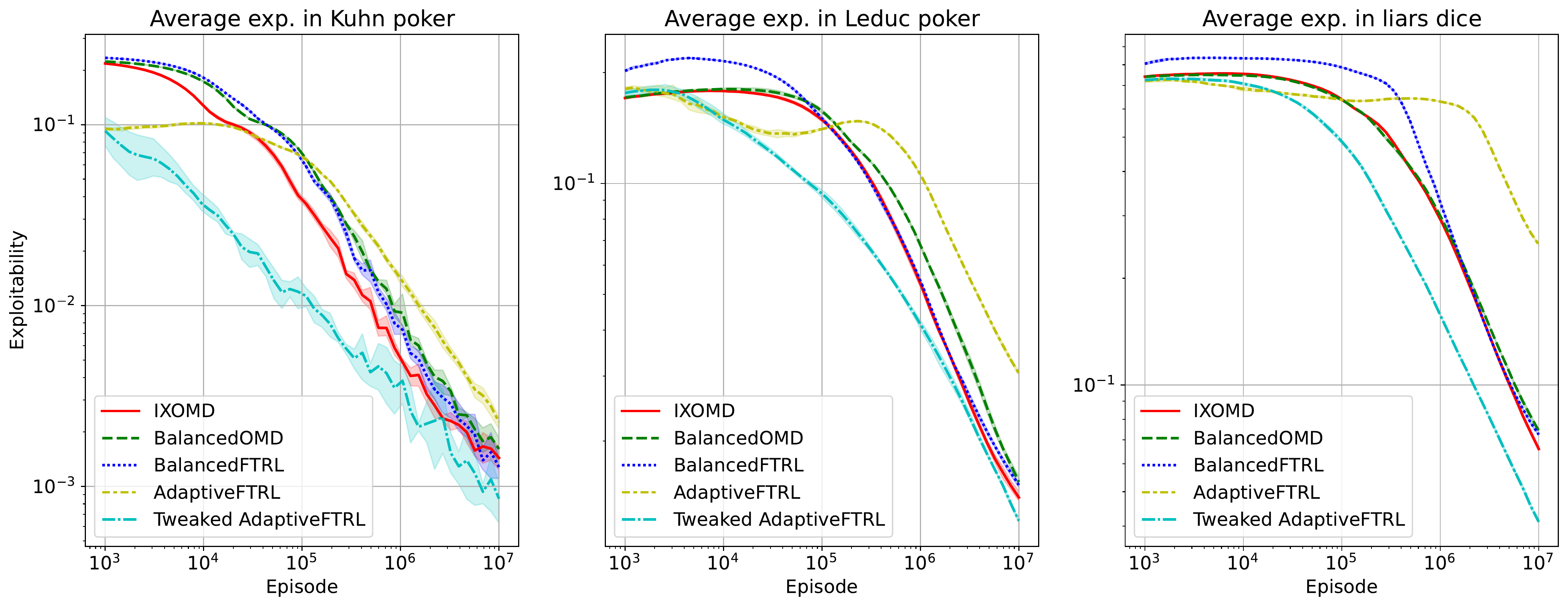}
\caption{Performances of various algorithms with respect to the number of episodes for the time-averaged profile of Theorem~\ref{thm:folklore}. The results are given in terms of the exploitability gap $\max_{\mu\in\maxpi} V^{\mu,\overline{\nu}}-\min_{\nu\in\minpi} V^{\overline{\mu},\nu}$, with the rewards scaled between $0$ and $1$. The total number of actions is always the same for both players, with $\AX=12$ for Kuhn poker, $\AX=1092$ for Leduc poker and $\AX=24570$ for Liars dice.}
\label{sec:experiments}
\end{figure*}

\paragraph{Games} We
compare the algorithms on the following three standard benchmark tabular games: Kuhn poker \citep{kuhn1950extensive}, Leduc poker \citep{southey2005bayes} and liars dice. We use the implementation of these game available in the OpenSpiel library \citep{openspiel}.

\paragraph{Implementation details} We implemented \IXOMD \citep{kozuno2021learning}, \BalancedOMD \citep{bai2022nearoptimal}, \BalancedFTRLShannon and \AdaptiveFTRL. We also implemented \TweakedFTRL, a slight modification of \AdaptiveFTRL, in which the learning rates and IX parameters decrease depending on $(1+\tP^t)^{-1/2}$ instead of $\sqrt{T}/(1+\tP^t)$. This version gets better practical performances, at the price of not having tight theoretical guarantees. 

These algorithms strongly rely on the learning rates for their performances. However, the learning rates obtained with the theoretical analysis of the above algorithms are only adjusted with respect to the worst possible $2$-player games of a given size. With such rates, the practical performances are actually barely improved compared to the theoretical bounds. We thus tune the rates separately for each algorithm and each game, using a (logarithmic) grid search on the global learning rates, while the base IX parameter was taken as $1/20$ of this global learning rate.

\paragraph{Results} Figure~\ref{sec:experiments} shows the results of these experiments. We notice that, despite having very different theoretical guarantees, all algorithms seem to have very similar practical performances once tuned, except for \AdaptiveFTRL being a little worse and its modified version \TweakedFTRL being slightly better on average. We believe that the behaviour of \AdaptiveFTRL can be explained by its learning rates of order $\sqrt{T}/(1+\tP^t)$, which is very large at the beginning leading to a fast convergence in the early phase. But the learning rates then need some time to decrease enough, as $\tP^t$ has to grow, and allow \AdaptiveFTRL to make more progress 
 \footnote{Note that this effect is responsible for an additional $\sqrt{\log(T)}$ factor in the regret analysis}. \TweakedFTRL stays adaptive but avoids this issue with a less steep learning rate decrease and a smaller initial learning rate.

% We think it comes from the $(1+\tP^t)^{-1}$ adaptation being too fast in practice, as the rate is roughly proportional to $t^{-1}$ for each information set. While having a large learning rate at the beginning is useful for ruling out This implies that the learning rate is too large number of episode \footnote{Note that this effect is also responsible for an additional $\sqrt{\log(T)}$ factor in the regret analysis}

% We conjecture that the theoretical differences in the total number of  do not really matter in practice, at least for the current tabular setting, except for carefully engineered games.
% \todoPi{I'm not sure I get the last part}\todoCo{Yes it is not over I was just testing an other idea}
\section{Conclusion}
\label{sec:conclusion}

We presented two algorithms. \BalancedFTRL is problem-independent optimal for learning an $\epsilon$-optimal profile. It requires to know the structure of the information set spaces in advance in order to compute a balanced transition kernel used to weight the regularizer. \AdaptiveFTRL does not require the knowledge of the information set space structure while being near problem-independent optimal. Indeed, \AdaptiveFTRL directly estimates the transition kernels induced by the opponent instead of relying on a balanced transition kernel.

Our results bring the following interesting future directions:

\paragraph{Optimal rate for \AdaptiveFTRL} Is it possible to save the extra $H$ dependence over the horizon in the sample complexity of \AdaptiveFTRL?

\paragraph{Last iterate convergence} Can we obtain theoretical guarantees for the current final profile instead of the average profile used in Theorem~\ref{thm:folklore}, in the same way as \citet{lee2021last}? Indeed, the computation of the time-averaged profile is not straightforward outside of the current tabular setting \citep{heinrich2015fictitious,brown2019deep}.

%\todoCo{it is not clear what "game with very large horizon" mean here}
\paragraph{Avoiding importance sampling} In large games, the importance exploration estimates tend to have a very high variance. This may be an issue if we want to combine the algorithms with function approximation \citep{mcaleer2022escher}. Therefore, an interesting future direction would be to design algorithms that \textit{do not rely on neither the importance sampling nor the knowledge of the information set structure, but still get sharp sample complexity guarantees}.

\bibliography{refs,come_refs}
\bibliographystyle{icml2022.bst}

%%%%%%%%%%%%%%%%%%%%%%%%%%%%%%%%%%%%%%%%%%%%%%%%%%%%%%%%%%%%%%%%%%
% APPENDIX
%%%%%%%%%%%%%%%%%%%%%%%%%%%%%%%%%%%%%%%%%%%%%%%%%%%%%%%%%%%%%%%%%%
\onecolumn
\part{Appendix}
\parttoc
\newpage
\appendix
\section{Notation}
\label{app:notations}

\begin{table}[h]
	\centering
	
	\begin{tabular}{@{}l|l@{}}
		\toprule
		\textbf{Notation} & \textbf{Meaning} \\ \midrule
	$\cS$ & state space of size $S$\\
    $p_h$ & state-transition probability kernel at step $h$\\
    $r_h$ & reward function at step $h$\\
 	$H$ & length of one episode\\
    $\cX$ &information set of the max-player of size $X$\\
    $\cY$ &information set of the min-player of size $Y$\\
	$\cAx$ & action space at information set $x$ of size $\Ax$\\
 	$\cBy$ & action space at information set $y$ of size $\By$\\
    $\cAX = \cup_{h} \cAXh$ & union of information set-action space $\cAXh$ at step $h$ for the max-player of size $\AX$.\\
    $\cBY = \cup_{h} \cBYh$ & union of information set-action space $\cBYh$ at step $h$ for the max-player of size $\BY$.\\
    $\cX_{h+1}(x_h,a_h)$ & information sets of depth $h+1$ that follow $(x_h,a_h)$\\
	\hline
    $\mu_h(\cdot|x_h)$ & policy of the max-player at information-set $x_h$ and step $h$ \\
    $\nu_h(\cdot|y_h)$ & policy of the min-player at information-set $y_h$ and step $h$ \\
    $\mu^{}_{1:h}(x_h,a_h)=\prod_{h'=1}^h \mu_{h'}(a_{h'}|x_{h'})$ & realization plan of the max-player\\
    $p_{h}^{\nu}(x_{h+1}|x_{h},a_{h})$& probability to transition to $x_{h+1}$ from $(x_{h},a_{h})$ when the opponent is $\nu$\\
    $r_h^\nu(x_h,a_h)$ & average reward at $(x_h,a_h)$ when the opponent is $\nu$\\
    $p_{1:h}^\nu(x_h)=p^{}_0(x_1) \prod_{h'=1}^{h-1} p_{h'}^{\nu}(x_{h'+1}|x_{h'},a_{h'})$ & adversarial realization plan against $\nu$\\
    $\ell^{\nu}_h(x_h,a_h)= p^\nu_{1:h}(x_h)\pa{1-r_h^\nu(x_h,a_h)}$ & adversarial loss against $\nu$\\
    \hline
    $T$ & number of episodes\\
    $\regret^T_{\mathrm{max}}$ & regret of max-player\\
    $\ell^t=\ell^{\nu^t}$ & loss of the max-player at episode $t$\\
    $\gamma_h$ & IX parameter\\
    $\tell_h^{\,t}$ & IX estimate of the loss of the max-player\\
   $\tL^t=\sum_{k=1}^t\tell^{k}$ & cumulative loss  of the max-player\\
   $p^\nu_{1:h}.\mu_{1:h}=p^\nu_{1:h}(x_h)\mu_{1:h}^{}(x_h,a_h)$ & induced probability distribution over $\cAXh$ at depth $h$\\ 
   \hline
   $p^\star_h$ & balanced transition at step $h$\\
   $\Psi_h$ & regularizers on $\cAXh$ \\
   $\gamma^\star_h(x_h)=\gamma_h/p^\star_{1:h}(x_h)$ & adjusted IX parameter at information set $x_h$\\
   \hline
   $D_{\eta^t}(\mu_1,\mu_0)$ & weighted (by $\eta^t$) dilated Shannon divergence on $\cX$\\       $\gamma_h^t(x_h,a_h)$ & adaptive IX parameter at $(x_h,a_h)$ \\
    $\tp_h^{\,t}(x_h,a_h)$ & estimated transition at $(x_h,a_h)$\\
    $\tP^t=\sum_{k=1}^t\tp^{\,k}$ & cumulative estimated transitions\\
    $\eta^t_h(x_h)$ & adaptive learning rate at $x_h$\\
    \bottomrule
	\end{tabular}
 \caption{Table of notation use throughout the paper}
\end{table}
\section{Extended related work}
\label{app:related_work}
In this section we review previous work on learning an $\epsilon$-optimal strategies in IIGs.

\paragraph{Full feedback} When the game is known, that is the information set structure space, transitions probability and reward function are provided a first line of work recasts the setting through the sequence-form representation of a game as a linear program which can be
solved efficiently \citep{Rom62,VONSTENGEL1996220,koller1996efficient}. A second line of work relies on first-order optimization method for saddle point computation \citep{hoda2010smoothing, kroer2015faster,kroer2018solving,kroer2020faster,munos2020fast,lee2021last}.
In particular \citet{hoda2010smoothing,kroer2018solving} relies on the Nesterov smoothing technique \citet{nesterov2005smooth} whereas \citet{kroer2015faster,kroer2020faster} use the \MirrorProx algorithm \citep{nemirovski2004prox}. These methods has a rate of convergence of order $\tcO(\poly(H,\AX,\BY)/\epsilon)$. Note that the rate is smaller than the lower bound of Theorem~\ref{thm:lb} since they assume full knowledge of the game. Even game dependent exponential rate can be obtained with these type of approach, see \citet{gilpin2012first} and \citet{munos2020fast}.

A third approach, counterfactual regret minimization \citep{zinkevich2007regret}, leverages local regret minimisation, i.e. minimising a type of regret at each information set. Popular algorithm are based on the regret-matching algorithm \citep{hart2000simple,gordon2007no} such as \CFR algorithm \citep{zinkevich2007regret} or based on a close variant of regret-matching, e.g. \CFRp \citep{tammelin2014solving, burch2019revisiting,farina2020faster}. Note that other local regret minimizers could be used, see for example \citet{ waugh2014unified,farina2019regret}. These algorithms enjoy a guarantee of convergence of order $\tcO(\poly(H,\AX,\BY)/\epsilon^2)$. 

Nevertheless, all the methods described above need to \emph{the information set tree} (or all the state space) or in order to compute one update. The cost of one traversal is of order $\cO(X+Y)$ if the transitions and the actions of the other player are sampled; see for example the
external-sampling \MCCFR algorithm \citep{lanctot2009monte}.

\paragraph{Trajectory feedback} A way to tackle the aforementioned issues is to consider the agnostic setting where the \emph{agent
has no prior knowledge of the game and only observes trajectories of the game}. Precisely, the rewards, the transition probabilities are unknown. This is the setting considered in this paper.

\paragraph{Model-based} A first method to deal with this limited feedback is to build a \emph{model} of the game then run any full feedback algorithm in this model. For example, \citet{zhou2020posterior} use \textit{posterior sampling} (PS, \citealp{strens2000bayesian}) to learn a model and then use the \CFR algorithm in games sampled from the posterior. They obtain a convergence rate of order $\tcO(\poly(H,S,A,B)/\epsilon^2)$ but only when the games are actually sampled according to the known prior. Instead, \citet{zhang2020finding} rely on the principle of optimism in presence of uncertainty to incrementally build a model of the game. Then, the \CFR algorithm is fed with \emph{optimistic estimates} of the local regrets. They prove a high-probability sample complexity of order $\tcO(\poly(H,S,A,B)/\epsilon^2)$.

\paragraph{Model-free} Another line of work \citep{lanctot2009monte,johanson2012efficient,shcmid2018variance,farina2020stochastic} directly estimate the local regret via importance sampling that are then feed to the \CFR algorithm. In particular, the outcome-sampling \MCCFR \citep{lanctot2009monte, farina2020stochastic} builds an importance sampling estimate of the counterfactual regret by playing according to a well-chosen \emph{balanced policy}. Intuitively, this policy should ensure to \emph{explore all the information sets}. Note that, depending on the structure of the information set space, playing uniformly over the actions at each information set is not necessarily a good choice. Instead, \citet{farina2020stochastic} propose as a balanced policy to play action with probability proportional to the number of leaves in the sub-tree of possible next information sets. In particular, the outcome-sampling \MCCFR algorithm requires the knowledge of the information set space structure to build its balanced policy. Nonetheless, in order to obtain $\epsilon$-optimal strategies with high probability, \MCCFR needs at most
 $\tcO(H^3(\AX+\BY )/\epsilon^2)$ realizations of the game \citep{farina2020stochastic,bai2022nearoptimal}.

Latter, \citet{kozuno2021learning} propose to combine \emph{Online Mirror Descent (\OMD)} with \emph{dilated Shannon entropy as regularizer} and importance sampling estimate of the losses of a player, see also \citet{farina2021bandit}. They prove a sample complexity, for the  proposed algorithm, \IXOMD, of order $\tcO(H^2(X\AX+Y\BY  )/\epsilon^2)$. Interestingly, they do not need to know in advance the structure of the information set space to obtain this bound\footnote{They only need to know the size of the information set spaces to tune optimally the learning rate}. However, the sample complexity of \IXOMD does not match the lower bound for this setting which is of order $\cO((\AX+\BY)/\epsilon^2)$, see Section~\ref{sec:lower_bound} and \citet{bai2022nearoptimal}. Recently, \citet{bai2022nearoptimal} propose the \BalancedOMD algorithm that also relies on \OMD but with a dilated entropy weighted by the realization plans of balanced policies as regularizers. These balanced policies generalize the one introduced by \citet{farina2020stochastic} for all depths in the information set tree. For \BalancedOMD, they prove a sample complexity of order $\tcO(H^3(\AX+\BY)/\epsilon^2)$ with an improved dependency on the sizes of the information set spaces at the price of a worse dependence in the horizon $H$.

\paragraph{Perfect information Markov game} Another line of work consider Markov game \citet{kuhn1953extensive} with \emph{perfect} information and limited feedback. However, they do not assume perfect recall. \citet{sidford2020soving,zhang2020model,daskalakis2020independent,wei2021last} consider the case where a \emph{generative model} is available whereas \citet{wei2017online,bai2020near,xie2020learning,liu2021sharp} deal with the \emph{trajectory feedback} case. Although this setting is related to ours there is no direct comparison between the two.
\section{Lower bound proof details}
\label{app:lower_bound}

To formally state lower bounds, we start with a tad of additional notation.
For the proofs of the lower bounds, we consider games in which one player (say the min-player) has no effect on both rewards and state-transition dynamics.
Therefore, we omit random variables related to the min-player.

An algorithm is a sequence $\cL \df (\cL^t)_{t=1}^{T+1}$, where $\cL^t$ is a measurable mapping from $\cAX^{(t-1)H} \times \R^{(t-1)H}$ to $\maxpi$.
In $t$-th episode, the algorithm outputs a policy $\mu^t = \cL^t (H_t)$ given the history $H_t \df (x_h^u, a_h^u, r_h^u)_{u=1, h=1}^{t-1, H}$ and follows it.
By Ionescu–Tulcea theorem \citep[Theorem 3.3]{lattimore2020bandit_book}, there exists a probability space consistent with this procedure.
% We let $\cF_t$ be a $\sigma$-algebra generated by $(s_h^u, a_h^u)_{u=1, h=1}^{t-1, H}$ and $\mathbb{F} \df (\cF_t)_{t=1}^\infty$.
After $T$-th episode, the algorithm outputs a final policy $\mu^{T+1} = \cL^{T+1} (H_{T+1})$.

We prove the following theorems. Note that we assume deterministic rewards, but lower bounds show that the difficulty of IIGs is the same as the difficulty of bandit problems when $H=1$. This is because we made use of the partial observability such that rewards look stochastic from the view point of players.

\begin{theorem}[Regret Lower Bound; Variable Action Space Size]
    Fix horizon $H$, number of episodes $T$, total number of actions $\AX$, and a positive scalar $\delta \in (0, 1)$.
    If $\AX < H$, there is no such game.
    If $\AX \geq H$, $\delta < 1/4$, and $T \geq - 0.4 K \log (4\delta)$, for any algorithm, there exists a game such that with probability greater than $\delta$, the algorithm suffers from the regret of
    \begin{align*}
        \Omega \pa{
            \sqrt{(\AX - H) \min \{\AX - H, H\} T \log \left(1/(4\delta)\right)}
        }\,.
    \end{align*}
\end{theorem}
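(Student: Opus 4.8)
The plan is to prove this lower bound via a reduction to stochastic multi-armed bandits, following the structure sketched in the informal statement and the opening of Appendix~\ref{app:lower_bound}. The high-level idea is to construct a family of hard IIGs in which the min-player is inert (has no effect on rewards or dynamics, as already set up in the appendix), so that the max-player faces what is effectively a bandit problem. In the variable-action-space regime, the game should concentrate all the ``difficulty'' into a single decision: the max-player chooses one of roughly $K \approx \AX - H$ actions at an early information set, and the consequence of that choice is then replayed, deterministically from the player's viewpoint but stochastically in reward, for the remaining steps so that a single suboptimal choice is penalized $\min\{\AX-H, H\}$ times. The subtraction of $H$ accounts for the minimal number of actions needed just to build a tree of depth $H$ (hence the condition $\AX \geq H$, and ``no such game'' when $\AX < H$).

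First I would make the reduction explicit. I would design a base game whose information-set tree is a caterpillar/path of depth $H$ with one branching information set carrying $K+1$ actions (one of which is a ``null'' continuation), so that the total action count is exactly $\AX$ and $K = \Theta(\AX - H)$. Each of the $K$ arms corresponds to a choice that yields i.i.d.\ Bernoulli-like rewards at each of $m \df \min\{\AX-H,H\}$ subsequent steps, with one distinguished ``good'' arm having a slightly elevated mean $1/2 + \Delta$ and all others mean $1/2$. The partial observability is what makes the per-step rewards look stochastic: the player cannot distinguish the internal states that randomize the reward, exactly as the appendix remarks. The total regret incurred from pulling a wrong arm is then $m\Delta$ per episode, which is the mechanism producing the $\sqrt{(\AX-H)\min\{\AX-H,H\}\,T}$ rate.

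Second, I would invoke a standard bandit lower bound in its high-probability form. The natural tool is a change-of-measure / Bretagnolle--Huber argument: fixing the learner's algorithm, I would compare the instance where arm $i$ is good against the instance where a different arm $j$ is good, bound the KL divergence between the induced trajectory distributions by something like $\sum_t \Pr[\text{pull } j]\cdot m\Delta^2$ (using the chain rule for KL over episodes and the per-step reward gap), and then argue that for some choice of good arm the algorithm must, with probability at least $\delta$, pull suboptimal arms often enough to accumulate the claimed regret. The tuning $\Delta \asymp \sqrt{K\log(1/(4\delta))/(mT)}$ balances the exploration cost against the information-theoretic indistinguishability, and the conditions $\delta < 1/4$ and $T \geq -0.4K\log(4\delta)$ are precisely the regime where this balance is feasible (the latter ensures $\Delta \leq 1/2$ so the means stay valid). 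The $\log(1/(4\delta))$ factor is the signature of the high-probability (rather than in-expectation) Bretagnolle--Huber bound.

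The main obstacle I anticipate is not the bandit core but the faithful encoding of the per-step replay structure as a genuine perfect-recall IIG with exactly $\AX$ actions, and then verifying that the regret $\regret^T_{\mathrm{max}}$ as defined via $V^{\mu^\dagger,\nu^t}-V^{\mu^t,\nu^t}$ really decomposes as the bandit regret scaled by $m$. Concretely, I must check that the loss $\ell^\nu_h$ and the realization-plan formalism of Section~\ref{sec:setting} reproduce the intended ``choose once, pay $m$ times'' payoff, and that no clever use of the tree structure lets the algorithm do better than the bandit bound permits (i.e.\ the early branch is the unique source of signal). Getting the constants so that $K$, $m$, and $\AX$ satisfy $K + (\text{path actions}) = \AX$ exactly, and that $m = \min\{\AX - H, H\}$ emerges naturally rather than by fiat, is the delicate bookkeeping step; the information-theoretic inequality itself is routine once the instance family and the change of measure are pinned down.
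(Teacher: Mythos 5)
There is a genuine gap, and it lies in the hard-instance construction rather than in the information-theoretic machinery. You propose $K \approx \AX - H$ arms at a single branching information set, each arm followed by $m = \min\{\AX - H, H\}$ reward-bearing steps. But under the paper's perfect-recall assumption, every information set has a \emph{unique} history including the actions taken, so distinct arms cannot share their continuation: each of the $K$ arms needs its own disjoint chain of information sets (with at least one action each) down to depth $H$. Branching at the root therefore costs at least $K \cdot H$ actions, and $(\AX - H)\cdot H \leq \AX$ fails for essentially all $H \geq 2$; pushing the branching point deeper to save actions forces $K$ down to roughly $\AX/H$ (the paper's choice when $\AX \geq 2H$) or forces $m$ down to $1$. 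Your count ``$K + (\text{path actions}) = \AX$'' implicitly assumes the $K$ arms merge back into one shared path, which is not a legal perfect-recall game in this formalism. So the instance family you analyze does not exist.

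The second, compounding issue is the i.i.d.\ reward structure. With the \emph{feasible} number of arms ($K \approx \AX/H$, or $K=2$ with a long shared prefix when $H < \AX < 2H$), making the $m$ per-step rewards independent yields a per-episode KL of order $m\Delta^2$ when the good arm is pulled --- exactly the bound you write --- and the resulting tuning $\Delta \asymp \sqrt{K\log(1/(4\delta))/(mT)}$ gives regret only $\Omega\pa{\sqrt{\AX T \log(1/(4\delta))}}$, short of the target by a $\sqrt{\min\{\AX - H, H\}}$ factor. The paper's key trick, which your proposal misses, is to make all $m$ rewards in an episode \emph{copies of a single hidden Bernoulli draw}: the bit is sampled as part of the initial state, partial observability hides it, and the same bit is paid out at every subsequent step. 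A wrong pull then costs $m\Delta$ in regret while leaking only $O(\Delta^2)$ of information per episode, which is what lets $\Delta \asymp \sqrt{K\log(1/(4\delta))/T}$ and produces $\Omega\pa{\sqrt{(\AX - H)\min\{\AX-H,H\}\,T\log(1/(4\delta))}}$ with only $\AX/H$ (or $2$) arms. You invoke partial observability only to make deterministic rewards look stochastic, not to correlate them within an episode. Note that your arithmetic reproduces the claimed rate only because the overcounted arms ($\AX - H$ instead of $\AX/H$) exactly cancel the overcounted information ($m\Delta^2$ instead of $\Delta^2$); neither piece is separately correct. The change-of-measure step itself (you: Bretagnolle--Huber/KL chain rule; paper: an explicit likelihood-ratio bound controlled by Freedman's inequality, Lemma~\ref{lemma:probability lower bound lemma}) is a legitimate stylistic difference and would go through once the instance is fixed.
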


\begin{theorem}[Sample Complexity Lower Bound; Variable Action Space Size]
    Fix horizon $H$, total numbers of actions $\AX,\, \BY$, and positive scalars $\delta \in (0, 1),\, \varepsilon \in (0, H)$.
    If either $\AX < H$ or $\BY < H$, there is no such game.
    If $\AX \geq H$, $\BY \geq H$, and $\delta < 1/4$, for any algorithm, there exists a game such that the algorithm needs at least
    \begin{align*}
        \Omega \pa{
            \dfrac{
                (\AX - H) \min \{\AX - H, H\} + (\BY - H) \min \{\BY - H, H\}
            }{
                \varepsilon^2
            } \log \left(1/(4\delta)\right)
        }
    \end{align*}
    episodes to output $\varepsilon$-NE with probability at least $1 - \delta$.
\end{theorem}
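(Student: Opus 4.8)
The plan is to reduce the construction of an $\varepsilon$-NE to two independent \emph{probably approximately correct} (PAC) best-arm identification problems, one per player, and to lower bound each by an information-theoretic change-of-measure argument. The starting point is the hard instance already used for the regret lower bound, which I would turn into a \emph{product game}: at every step the root draws two hidden chance outcomes (one per player) that neither player observes, so that the per-step rewards are deterministic given the full state yet look stochastic to each player, exactly as noted before the statement. I would arrange the rewards so that the max-player's payoff depends only on the max-player's actions and chance, and symmetrically for the min-player, which makes the value split as $V^{\mu,\nu}=V_{\max}(\mu)-V_{\min}(\nu)$ and decouples the two learning problems. For the max-player this sub-game is a $K_{\max}$-armed bandit in which a single choice among $K_{\max}\approx(\AX-H)/m$ arms at the first information set is followed by a forced path whose reward repeats the \emph{same} Bernoulli outcome over $m:=\min\{\AX-H,H\}$ steps; the remaining actions (of order $H$) are spent on the forced spine needed to reach the horizon, so that the total action count is $\AX$ and no depth-$H$ tree exists when $\AX<H$. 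The min-player is given the symmetric instance with $K_{\min}\approx(\BY-H)/m'$ arms and $m'=\min\{\BY-H,H\}$.

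First I would make the reduction precise. By the definition of an $\varepsilon$-NE together with the additive decomposition of the value, the max-player marginal of any $\varepsilon$-NE must be an $\varepsilon$-optimal strategy for its own bandit sub-game, and symmetrically for the min-player. Calibrating the arms so that one designated arm beats every other by a value gap of exactly $2\varepsilon$ (a per-step Bernoulli gap of $2\varepsilon/m$), any $\varepsilon$-optimal strategy must place most of its mass on that best arm. Hence an algorithm that outputs an $\varepsilon$-NE with probability $1-\delta$ is, after marginalizing, a $(1-\delta)$-correct best-arm identifier in each sub-game, and it suffices to lower bound the number of episodes such an identifier requires.

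The information-theoretic core is then the standard PAC lower bound. Fixing a base instance and, for each arm $a$, the confusing instance obtained by raising arm $a$ above the previous leader by $2\varepsilon$ in value, I would apply the Bretagnolle--Huber inequality together with the KL chain rule over episodes: because within one episode the $m$ observed rewards are \emph{identical} (they share a single hidden bit), the per-episode information about arm $a$ is that of one Bernoulli sample with gap $\Theta(\varepsilon/m)$, so its KL cost is $\Theta(\varepsilon^2/m^2)$. Denoting by $N_a$ the number of episodes pulling arm $a$, correctness on both the base and the confusing instance forces $N_a=\Omega\pa{(m^2/\varepsilon^2)\log(1/(4\delta))}$; summing over the $K_{\max}$ arms (and using that each episode is one pull) gives $\Omega\pa{K_{\max}m^2\log(1/(4\delta))/\varepsilon^2}=\Omega\pa{(\AX-H)\,m\,\log(1/(4\delta))/\varepsilon^2}$ episodes for the max-player, and the symmetric bound for the min-player. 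Adding the two independent contributions and substituting $m=\min\{\AX-H,H\}$ and $m'=\min\{\BY-H,H\}$ yields the claimed rate.

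The main obstacle I anticipate is the amplification bookkeeping: I must verify that repeating the same hidden reward $m$ times genuinely multiplies the value gap by $m$ \emph{without} adding information (the $m$ copies being perfectly correlated), so that the per-episode KL stays $\Theta(\varepsilon^2/m^2)$ rather than $\Theta(\varepsilon^2/m)$ --- this single point is what converts the naive $K/\varepsilon^2$ bandit rate into the game rate with its extra factor $m=\min\{\AX-H,H\}$, and is precisely what distinguishes the variable-action instance from the fixed-action one. Secondary care is needed to confirm that the product construction truly decouples, so that the two lower bounds add rather than interfere, to make the ``$\varepsilon$-NE implies $\varepsilon$-best response'' step quantitative with the right constants, and to handle the boundary regimes $\AX<H$ and $\varepsilon\ge H$ where the instance degenerates.
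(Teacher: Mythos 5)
Your overall architecture matches the paper's: the same hidden-bit hard instance (a bandit choice at one information set whose Bernoulli outcome is replayed for $m=\min\{\AX-H,H\}$ steps, so the value gap is amplified by $m$ while the likelihood ratio between instances involves only a single Bernoulli observation per episode — this is exactly the paper's construction, where the ratio $d\P'_k/d\P'_0$ depends only on $r_1^t$), and the same reduction ``$\varepsilon$-NE $\Rightarrow$ each marginal is $\varepsilon$-optimal in its own decoupled problem.'' Your product game is a legitimate alternative to the paper's device of simply making one player's actions ineffective; note only that in the product game each episode feeds \emph{both} sub-games simultaneously, so the two per-player bounds combine through a maximum rather than a sum — which is the same factor-2 bookkeeping ($\max\{a,b\}\geq(a+b)/2$) the paper uses, so nothing is lost there.

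The genuine gap is in the information-theoretic core. The Bretagnolle--Huber route, as you describe it, cannot produce the claimed \emph{multiplicative} factor $K\log(1/(4\delta))$. First, ``correctness on the base instance'' is vacuous: in the base instance all arms have equal means, so every output is optimal and nothing constrains $P_0(\text{output}=a)$ beyond the disjointness $\sum_a P_0(\text{output}=a)\leq 1$. Second, Bretagnolle--Huber subtracts this base probability \emph{outside} the exponential: for each arm it gives $\delta \geq P_a(\mathrm{fail}) \geq \tfrac12 e^{-\mathrm{KL}_a} - P_0(\text{output}=a)$, hence $\mathrm{KL}_a \geq \log\bigl(1/(2\delta + 2P_0(\text{output}=a))\bigr)$, and since for most arms $P_0(\text{output}=a)$ can only be bounded by $O(1/K)$, summing yields $T=\Omega\bigl((K/\Delta^2)\log(1/(\delta+1/K))\bigr)$. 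For $\delta\ll 1/K$ this saturates at $(K\log K)/\Delta^2$, strictly weaker than the claimed $(K/\Delta^2)\log(1/(4\delta))$; so your per-arm claim $N_a=\Omega\bigl((m^2/\varepsilon^2)\log(1/(4\delta))\bigr)$ does not follow. This is precisely why the paper does not invoke Bretagnolle--Huber but proves its own change-of-measure bound (Lemma~\ref{lemma:probability lower bound lemma}): a \emph{truncated} likelihood-ratio argument, controlled by Freedman's inequality, of the form
\begin{align*}
P_k(E_k) \;>\; e^{-s-v}\Bigl(P_0(E_k) - \rho - \tfrac{4\Delta^2}{v}\,n_k\Bigr),
\end{align*}
in which all deficiency terms ($\rho$, the $1/K$ coming from disjointness after averaging over $k$, and the Markov term $4\Delta^2 n_k/v$) sit \emph{inside} the bracket multiplied by $e^{-s-v}$, rather than competing additively with $\delta$. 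Averaging over the $K$ arms then gives $\max_k P_k(E_k) > \tfrac14 e^{-s-v}$, and equating with $\delta$ extracts the full $T=\Omega\bigl((K/\Delta^2)\log(1/(4\delta))\bigr)$. To repair your proof you must replace the Bretagnolle--Huber step by such a truncated-likelihood-ratio (Mannor--Tsitsiklis-style) argument; with that substitution the rest of your plan goes through.
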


\begin{theorem}[Regret Lower Bound; Fixed Action Space Size]
    Fix horizon $H$, number of episodes $T$, number of actions $A$ at each information set, number of information sets $X$, and a positive scalar $\delta \in (0, 1)$.
    If $X < (A^H - 1) / (A-1)$, there is no such game.
    If $X < 2 (A^H - 1) / (A-1)$, $\delta < 1/4$, and $T \geq - 0.4 K \log (4\delta)$, for any algorithm, there exists a game such that with probability greater than $\delta$, the algorithm suffers from the regret of
    \begin{align*}
        \Omega \pa{
            \sqrt{ T \AX \log \left(1/(4\delta)\right)}
        }\,.
    \end{align*}
\end{theorem}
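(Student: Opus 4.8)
The plan is to reduce the hard instance to a stochastic $K$-armed bandit with $K := A^H$, prove a high-probability regret lower bound for that bandit, and then translate it back through the identity $\AX = \Theta(A^H) = \Theta(K)$, which holds precisely in the regime $(A^H-1)/(A-1)\le X < 2(A^H-1)/(A-1)$.

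First I would construct the instance as a full $A$-ary tree of depth $H$ for the max-player, with the min-player having no influence on either rewards or transitions. Every information set has exactly $A$ actions, so the smallest number of information sets that can host such a tree is $(A^H-1)/(A-1)$ (this is the content of the ``there is no such game'' clause), and each of the $K=A^H$ root-to-leaf paths corresponds to a distinct sequence of $H$ actions. Since transitions are deterministic in the actions and independent of the hidden part of the state, an information set at depth $h$ coincides with the action history up to $h$, which enforces perfect recall. The reward is delivered only at step $H$. To make it look stochastic despite being deterministic given the state, I let $p_0$ be a distribution over a large but finite state space that encodes an independent reward bit for each leaf; because $s_1\sim p_0$ is resampled each episode and is never observed, every path $k$ yields i.i.d.\ Bernoulli rewards across episodes whose means I can prescribe through $p_0$.

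Because the players may not change their policy within an episode, a max-player policy induces a per-episode distribution over the $K$ leaves, i.e.\ an arm distribution, and conversely any game algorithm induces a bandit algorithm with identical per-episode reward, so $\regret^T_{\mathrm{max}}$ equals the bandit regret of the induced algorithm. A lower bound valid for \emph{all} bandit algorithms therefore transfers to the induced one, and restricting the induced distributions to product form over the tree can only help the adversary, so nothing is lost. I would then use the standard family of $K$ Bernoulli instances: a base instance where all arms have mean $1/2$, and for each $k$ an alternative where arm $k$ has mean $1/2+\Delta$. Choosing $\Delta$ of order $\sqrt{K\log\!\pa{1/(4\delta)}/T}$ and running a change-of-measure (Bretagnolle--Huber) argument, some arm is pulled few times under the base instance, so switching it to the good arm cannot be detected with more than a controlled probability; this forces the algorithm to pull suboptimal arms $\Omega(T)$ times on that instance with probability at least $\delta$, giving regret $\Omega\!\pa{T\Delta}=\Omega\!\pa{\sqrt{KT\log(1/(4\delta))}}$ with probability at least $\delta$. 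The hypothesis $T\ge -0.4K\log(4\delta)=0.4K\log(1/(4\delta))$ is exactly what keeps $\Delta$ below the $[0,1]$ reward ceiling, so the constructed instances remain valid. Finally, $\AX = AX = \Theta(A^H)=\Theta(K)$ in the stated range of $X$, whence $\sqrt{KT\log(1/(4\delta))}=\Theta\!\pa{\sqrt{\AX T\log(1/(4\delta))}}$, yielding the claim.

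The main obstacle is the high-probability (rather than in-expectation) bandit lower bound: arranging the change-of-measure argument so that it produces an $\Omega(\cdot)$ regret \emph{event} of probability at least $\delta$ uniformly over all algorithms, and tracking the constants carefully enough that the chosen gap $\Delta$ stays admissible under the stated constraint on $T$. A secondary but necessary check is that the Bernoulli-encoding initial-state construction really does realize independent per-leaf rewards while respecting perfect recall and the fixed action-set size.
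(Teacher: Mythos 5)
Your core approach is the same as the paper's: neutralize the min-player, realize the leaves of a depth-$H$, $A$-ary tree as the arms of a $K$-armed Bernoulli bandit whose stochastic rewards are encoded in the unobserved initial state, prove a high-probability lower bound by change of measure over the family ``all arms $1/2$'' versus ``arm $k$ has mean $1/2+\Delta$'', and translate back through $\AX = AX = \Theta(A^H)$. The only methodological difference is the change-of-measure tool: you invoke Bretagnolle--Huber, while the paper proves a bespoke likelihood-ratio lemma (truncating the ratio on a good event via Freedman's inequality plus Markov's inequality); both deliver the needed conclusion that some event $E_k$ has $\P_k$-probability larger than $\delta$, and your reading of the hypothesis $T \geq -0.4K\log(4\delta)$ (it keeps $\Delta$ admissible, $\leq 0.3$ in the paper) matches. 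One small caution: since the paper's regret is defined through expected values $V^{\mu,\nu}$, the hard event should be phrased in terms of the policy mass $\sum_t \mu^t_{1:H}(x_k^*,a_k^*) \leq T/2$ (as the paper does), not realized pulls; the same change-of-measure argument applies since both are history-measurable, but the realized-pull phrasing does not directly lower bound the paper's regret.

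The genuine gap is that your game does not have the prescribed parameters. The theorem fixes the number of information sets $X$ anywhere in $(A^H-1)/(A-1) \leq X < 2(A^H-1)/(A-1)$ and asks for a hard game with exactly $X$ information sets, each with exactly $A$ actions; your full $A$-ary tree realizes only the left endpoint $X=(A^H-1)/(A-1)$. The paper spends a substantial part of its proof on this: it appends additional $A$-ary subtrees, reached with transition probability $1/2$, at action nodes of depth $H-1$ (then $H-2$, and so on) until the information-set count reaches $X$, and this padding has a real downstream cost --- the optimal leaf may sit inside a padded branch, so the pigeonhole count weakens from $\sum_k \P_0(E_k) \geq K-1$ to $\sum_k \P_0(E_k) \geq K - A^{H-1}$, which is why the paper works with $K = 2A^H$ leaves and a degraded constant. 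Your construction can be patched more cheaply than that: append the missing information sets as subtrees reached with transition probability $0$ (the paper itself does exactly this in its variable-action-size construction), which preserves perfect recall and the fixed action count, keeps the reduction an exact $A^H$-armed bandit, and changes nothing else in your argument. But as written, your proof establishes the existence claim only for a single value of $X$, not for the stated range.
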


\begin{theorem}[Sample Complexity Lower Bound; Fixed Action Space Size]
    Fix horizon $H$, number of actions $A,\, B$ at each information set, numbers of information sets $X,\, Y$, and positive scalars $\delta \in (0, 1),\, \varepsilon \in (0, H)$.
    If either $X < (A^H - 1) / (A-1)$ or $B < (B^H - 1) / (B-1)$, there is no such game.
    If $(A^H - 1) / (A-1) \leq X < 2 (A^H - 1) / (A-1)$, $(B^H - 1) / (B-1) \leq Y < (B^H - 1) / (B-1)$, and $\delta < 1/4$, for any algorithm, there exists a game such that the algorithm needs at least
    \begin{align*}
        \Omega \pa{
            \dfrac{
                 \AX + \BY
            }{
                \varepsilon^2
            } \log \left(1/(4\delta)\right)
        }
    \end{align*}
    episodes to output $\varepsilon$-NE with probability at least $1 - \delta$.
\end{theorem}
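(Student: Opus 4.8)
The plan is to reduce the computation of an $\varepsilon$-NE in a carefully designed game to the identification, with confidence $1-\delta$, of an $\varepsilon$-optimal arm in two independent stochastic multi-armed bandits: one with $A^H$ arms for the max-player and one with $B^H$ arms for the min-player. The lower bound for the fixed-action IIG then follows from the standard fixed-confidence lower bound for such bandits, combined with the relations $\AX=AX=\Theta(A^H)$ and $\BY=\Theta(B^H)$ that the hypotheses on $X,Y$ guarantee.

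First I would build the hard game, reusing the complete-tree construction underlying the regret lower bound. For the max-player I take the information-set tree to be the complete $A$-ary tree of depth $H$; by perfect recall a single episode follows one root-to-leaf path, so the realization plan $\mu_{1:H}$ is exactly a distribution over the $A^H$ terminal (information set, action) pairs at depth $H$, i.e. over $A^H$ \emph{arms}. I do the same for the min-player with the complete $B$-ary tree, and let the two trees act on disjoint coordinates of the state so that the reward decomposes as $r=\tfrac12(f_{\max}+f_{\min})$ with $f_{\max}$ depending only on the max-path and $f_{\min}$ only on the min-path. Since rewards must be deterministic, the needed $\mathrm{Bernoulli}$ randomness is produced through partial observability: the last transition kernel $p_{H-1}$ samples a hidden terminal state whose coarsening is the observed leaf information set, so that the reward collected at a given leaf is $\mathrm{Bernoulli}(1/2)$ for every leaf except a planted \emph{needle} leaf of mean $1/2+\Delta$ with $\Delta=\Theta(\varepsilon)$. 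The conditions $(A^H-1)/(A-1)\le X<2(A^H-1)/(A-1)$ (and its analogue for $Y$) ensure the tree fits inside $\cX$ and, crucially, that $\AX=\Theta(A^H)$, so an $\Omega(A^H\cdots)$ bound is automatically an $\Omega(\AX\cdots)$ bound.

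Next I would reduce $\varepsilon$-NE to per-player identification. Because the game decouples, the duality gap $\max_{\mu} V^{\mu,\overline{\nu}}-\min_{\nu} V^{\overline{\mu},\nu}$ equals the sum of the two players' suboptimality gaps; hence outputting an $\varepsilon$-NE forces both the averaged max-strategy and the averaged min-strategy to be $\varepsilon$-optimal in their respective bandits. A trajectory reveals only the realized path and the terminal reward, which is precisely bandit feedback on the $A^H$ (resp. $B^H$) arms, so the IIG learner is no stronger than a bandit learner. I then invoke the needle family $\{\nu_i\}_i$ (arm $i$ boosted to $1/2+\Delta$ over the base measure $\nu_0$): $\varepsilon$-optimality under $\nu_i$ forces the output distribution to place mass at least $1/2$ on arm $i$, so the corresponding events $\mathcal{E}_i$ are mutually exclusive. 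Feeding $\mathcal{E}_i$, the base measure $\nu_0$, and the needle measure $\nu_i$ into the Bretagnolle–Huber inequality together with the bandit divergence decomposition $D_{\mathrm{KL}}(\mathbb{P}_{\nu_0}\,\|\,\mathbb{P}_{\nu_i})=\E_{\nu_0}[N_i]\,\mathrm{kl}(1/2,1/2+\Delta)=\Theta(\varepsilon^2)\,\E_{\nu_0}[N_i]$, and using correctness with probability $1-\delta$, yields $\E_{\nu_0}[N_i]\ge \Omega(\log(1/(4\delta))/\varepsilon^2)$ for a constant fraction of the arms; summing and using $\sum_i N_i=T$ gives $T\ge\Omega(A^H\log(1/(4\delta))/\varepsilon^2)$. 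The identical argument on the min-side gives $T\ge\Omega(B^H\log(1/(4\delta))/\varepsilon^2)$, and since one episode feeds both bandits simultaneously, $T$ must exceed both; as $\max\{a,b\}\ge\tfrac12(a+b)$ this yields $T\ge\Omega((\AX+\BY)\log(1/(4\delta))/\varepsilon^2)$.

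I expect the main obstacle to be the high-probability (rather than in-expectation) nature of the guarantee: the delicate step is choosing the distinguishing events $\mathcal{E}_i$ so that $\varepsilon$-optimality under $\nu_i$ is genuinely incompatible with it under $\nu_0$, and converting the $1-\delta$ correctness into the $\log(1/(4\delta))$ factor through Bretagnolle–Huber while keeping $\Delta=\Theta(\varepsilon)$ and the value of order one (so that no spurious $H$ appears, consistent with the $H$-free statement). A secondary subtlety is verifying that the intermediate information-set observations add nothing beyond the realized path, so that the reduction to plain bandit feedback is exact; this is exactly where the partial-observability construction, injecting Bernoulli noise through $p_{H-1}$ while leaving the rewards deterministic, does the real work.
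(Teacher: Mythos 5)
Your route is genuinely different from the paper's. The paper never makes both players learn in the same instance: it neutralizes one player entirely (whichever of $\AX,\BY$ is larger), reduces the game to a single-agent bandit over the $K=\Theta(A^H)$ leaves of a complete $A$-ary tree (extra information sets beyond $(A^H-1)/(A-1)$ are absorbed by appended branches reached with probability $1/2$), proves a bespoke multiplicative change-of-measure lemma (an importance-ratio lower bound controlled by Freedman's and Markov's inequalities), averages the failure events $E_k=\{\mu^{T+1}_{1:H}(x_k^*,a_k^*)\le 0.5\}$ over the $K$ planted instances, and finally combines the two players through $\max\{a,b\}\ge(a+b)/2$ applied to two \emph{different} hard games. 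Your decoupled construction, where the duality gap splits into the two players' bandit suboptimalities so that a single game forces both players to learn, is a legitimate and arguably cleaner alternative reduction (with the minor caveat that each player's observed reward $\tfrac12(f_{\max}+f_{\min})$ carries noise from the other bandit, which the KL chain rule handles, and that the surplus information sets still need to be embedded in the game).

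The genuine gap is in your change-of-measure step. With the flat base measure $\nu_0$ (all leaves Bernoulli$(1/2)$), correctness under $\nu_0$ is vacuous, so the only control on $\mathbb{P}_{\nu_0}(\mathcal{E}_i)$ is disjointness, i.e.\ $\mathbb{P}_{\nu_0}(\mathcal{E}_i)\le 2/K$ for a constant fraction of arms. Bretagnolle--Huber then gives
\[
\mathrm{kl}\cdot\E_{\nu_0}\left[N_i\right]\;\ge\;\log\frac{1}{2\left(2/K+\delta\right)},
\]
which is $\Theta(\log(1/(4\delta)))$ only when $\delta\gtrsim 1/K=A^{-H}$; for smaller $\delta$ it saturates at $\log K$, so your argument as stated yields only $T=\Omega(K\log K/\varepsilon^2)$ rather than the claimed $\Omega(K\log(1/(4\delta))/\varepsilon^2)$. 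This failure occurs precisely in the very-high-confidence regime the theorem is designed to cover (the paper emphasizes that, unlike prior lower bounds, this one handles high-probability guarantees for all $\delta<1/4$). Two standard repairs exist: (i) plant a needle in the base instance as well (say leaf $1$ at mean $1/2+2\Delta$), so that correctness under $\nu_0$ forces $\mathbb{P}_{\nu_0}(\mathcal{E}_i)\le\delta$ for every $i\neq 1$ and Bretagnolle--Huber gives $\mathrm{kl}\cdot\E_{\nu_0}[N_i]\ge\log\frac{1}{4\delta}$ per arm; or (ii) keep the flat base but sum the inequalities over \emph{all} arms and minimize $\sum_i\log\frac{1}{2(p_i+\delta)}$ subject to $\sum_i p_i\le 1$ (an extreme-point/convexity argument), giving $\sum_i\mathrm{kl}\cdot\E_{\nu_0}[N_i]\ge(K-1)\log\frac{1}{2\delta}-\log 4$. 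The paper avoids this issue structurally: its core lemma bounds $\mathbb{P}_k(E_k)$ \emph{multiplicatively}, as $\exp(-s-v)$ times a constant, so the entire $\delta$-dependence sits in the exponent and never competes additively with a $1/K$ term.
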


\subsection{Variable action space sizes}

We first consider a case in which the number of viable actions depends on information sets.

\paragraph{Proof sketch} In the hard game instance we  consider, we make actions of one player ineffective depending on which one of $\AX$ and $\BY$ is bigger.
Specifically, if $\AX > \BY$, the min-player's actions have no effect on the state-transition dynamics and rewards.\footnote{In case of ties, make actions of the min-player ineffective.}
Then the game reduces to a Markov decision process, and finding an $\varepsilon$-optimal policy is equivalent to finding an $\varepsilon$-NE.
For the time being, we assume that $\AX > \BY$.

Due to the perfect-recall assumption (at least one successive information set for each action except the final time step), $\AX$ must be greater than or equal to $H$.
Furthermore, when $\AX = H$, there is only one policy, and the sample complexity lower bound is trivially $0 = \AX - H$.
Hereafter we assume that $\AX > H$.

\begin{figure}[tbh!]
    \centering
    \includegraphics[width=0.9\textwidth]{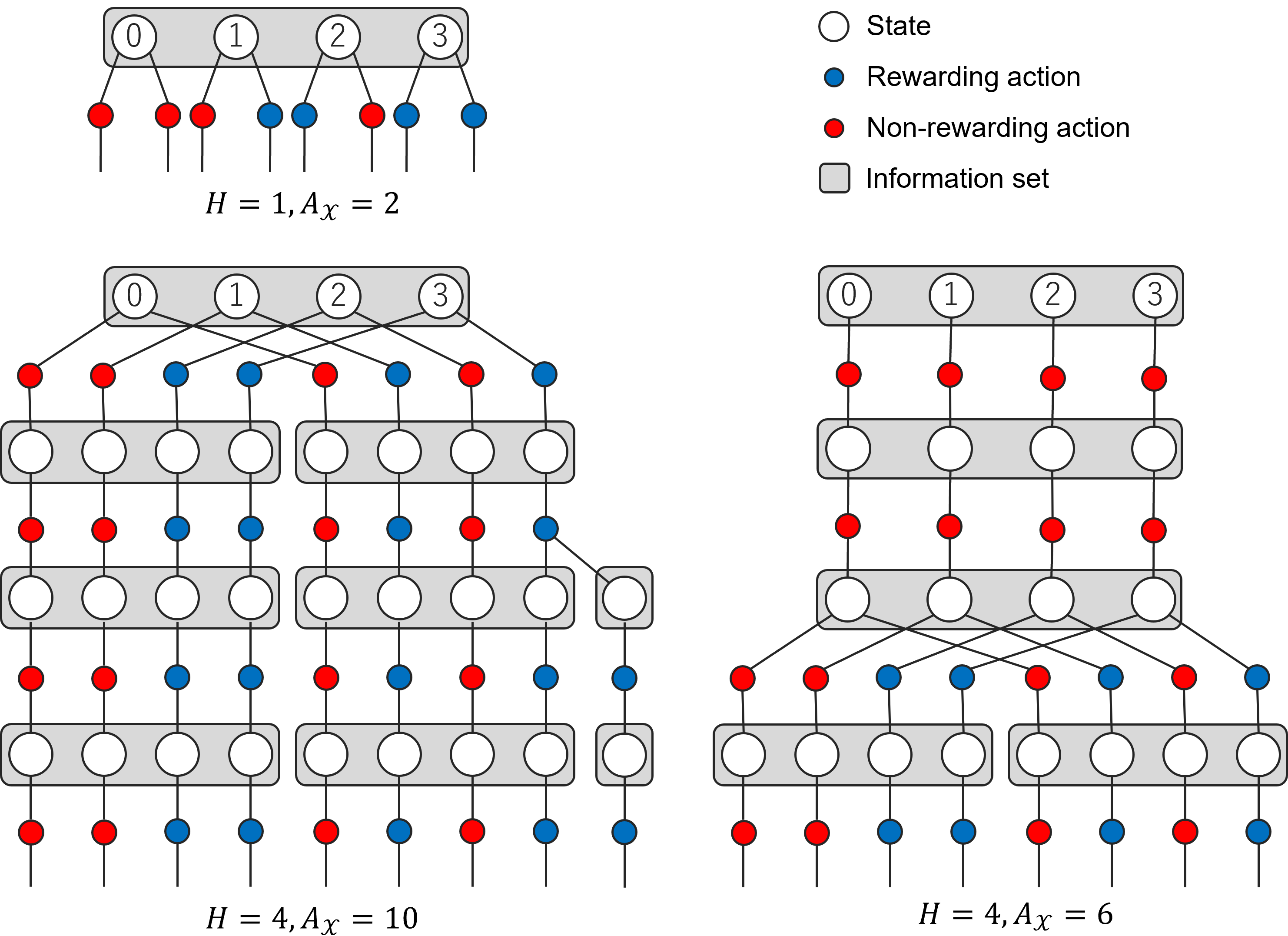}
    \caption{A hard game instance with information-set-dependent action space from the view point of the max-player.}
    \label{fig:hard instance with variable A}
\end{figure}

Figure~\ref{fig:hard instance with variable A} depicts hard game instances under different conditions.
White circles are states, and states at the top (and identified by integers) are initial states.
Edges from each state are viable actions at a corresponding state.
Blue circles are rewarding actions, where the max-player receives the reward of $1$.
Red circles are non-rewarding actions.
Shaded boxes are information sets, and all states in a box belong to the same information set.

When $\AX \geq 2H$, we construct a game shown in the bottom left panel.
For brevity let $K \df \floor{\AX / H} \geq \AX / H - 1$ and assume that $\AX$ is divisible by $H$.
In this case there is no right-most branch.
If $\AX$ is not divisible by $H$, append a small branch as shown in the figure and set state-transition probability to the small branch $0$.
In this game, there are $2^K$ initial states, at which there are $K$ viable actions.
Each initial state is identified by an integer from $0$ to $2^K-1$.
An initial state is sampled as follows:
in the beginning of each episode, $K$ binary integers are sampled from $K$ (ordered) Bernoulli distributions and concatenated to construct a binary representation of an integer from $0$ to $2^K-1$ (e.g., $010111$), which is an initial state in this episode.
The parameter of $k$-th Bernoulli distribution is denoted by $p_k$.
The reward value for $k$-th action is the $k$-th bit of the initial state in binary representation.
After the first step, the state transitions in a deterministic way as shown in the figure, and reward values at later time steps are the same as the reward value at the first time step.

To get an intuition about this hard game instance, it is beneficial to consider a case where $H=1$.
In this case, the game essentially reduces to a bandit problem shown in the top left panel of Figure~\ref{fig:hard instance with variable A}.
In this case the regret lower bound of $\Omega ( \sqrt{K T} )$ and the sample complexity lower bound of $\Omega ( K / \varepsilon^2 )$ are known to hold.
When the reward is scaled by $H$, the lower bounds become $\Omega ( \sqrt{H (\AX - H) T} )$ and $\Omega ( H (\AX - H) / \varepsilon^2 )$, respectively.
The hard game instance described above instantiates this idea.

The hard game construction described above does not work when $H > 1$ and $2 H > \AX > H$.
In this case, we construct a game shown in the bottom right panel.
The idea is almost the same, but there are only $4$ initial states, and
the player makes decision at $(2H - \AX + 1)$-th time step instead of the first time step.
There the max-player has only $2$ actions, and reward values at $(2H - \AX + 1)$-th time step and thereafter are determined as before.
This game is essentially a $2$-arm Bernoulli bandit with reward scaled by $\AX - H$.
In this case the regret lower bound of $\Omega ( (\AX - H) \sqrt{2 T} )$ and the sample complexity lower bound of $\Omega ( 2 (\AX - H)^2 / \varepsilon^2 )$ are known to hold.

If $\AX > \BY$ does not hold, we can derive the same result with $\AX$ replaced by $\BY$ by exchanging the role of max- and min-players in the previous argument.
Consequently, depending on which one of $\AX$ and $\BY$ is bigger, we can use $\AX$ or $\BY$ in lower bounds.
The claimed results follow from a fact that $\max \{ a , b \} \geq (a+b) / 2$.

\paragraph{Full Proof.} We assume that $\AX \geq 2H$ as other cases can be handled similarly.
We consider $\floor{\AX / H} + 1$ games and identify each game by an integer from $0$ to $\floor{\AX / H}$.
In $0$-th game , we set all $p_k$ to $0.5$.
In $k$-th game, we set $(p_j)_{j \neq k}$ to $0.5$, and $p_k$ to $0.5 + \Delta$, where $\Delta \in (0, 0.5)$ is specified later in the proof.
In other words, an optimal arm in $k$-th game is $k$-th action.
We let $\P_k$ denote the probability measure induced by the interconnection of the algorithm and game $k$.
Expectation under $\P_k$ is denoted by $\E_k$.
We also let $\P_k'$ and $\E_k'$ denote the law of $(x_h^u, a_h^u, r_h^u)_{u=1, h=1}^{T, H}$ and expectation under $\P_k'$ when the algorithm is run in game $k$.

The proofs rely on the following core lemma.

\begin{lemma}\label{lemma:probability lower bound lemma}
    Fix scalars $\rho \in (0, 1)$ and $v \in (0, \infty)$.
    For brevity let $K \df \floor{\AX / H}$.
    If $\Delta \leq 0.3$, for any measurable function $f: \cAX^{TH} \times \{0, 1\}^{TH} \to \{0, 1\}$ and $k \in [K]$, it holds that
    \begin{align*}
        \P_k \pa{ f (H_{T+1}) = 1 }
        >
        \exp \pa{ - s - v } \pa{
            \P_0 \pa{ f (H_{T+1}) = 1 } - \rho - \frac{4 \Delta^2}{v} n_k
        }\,,
    \end{align*}
    where $s \df \dfrac{4 \Delta}{3} \log \dfrac{1}{\rho} + \sqrt{2 v \log \dfrac{1}{\rho}}\,$ and $n_k \df \E_0 \bra{ \sum_{t=1}^T \indic{a_1^t=k} }$.
\end{lemma}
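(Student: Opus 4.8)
The plan is a change-of-measure (transportation) argument between the reference game $\P_0$, where every first-step action pays a $\mathrm{Bernoulli}(1/2)$ reward, and game $\P_k$, which differs \emph{only} in that action $k$ pays $\mathrm{Bernoulli}(1/2+\Delta)$. Let $L\df\log(\mathrm d\P_k/\mathrm d\P_0)(H_{T+1})$ be the log-likelihood ratio of the observed history. Since the algorithm's action law and the deterministic transitions coincide under both measures, and the reward at every step equals the first-step reward in this instance, $L$ accumulates only on episodes where action $k$ is pulled:
\[
 L=\sum_{t=1}^{T}\indic{a_1^t=k}\pa{r_1^t\log(1+2\Delta)+(1-r_1^t)\log(1-2\Delta)}.
\]
Writing $A\df\{f(H_{T+1})=1\}$, the transportation inequality gives, for the threshold $c=s+v$,
\[
 \P_k(A)=\E_0\bra{e^{L}\indic{A}}\ge e^{-(s+v)}\pa{\P_0(A)-\P_0\pa{L<-(s+v)}},
\]
so it remains to prove $\P_0(L<-(s+v))\le\rho+\tfrac{4\Delta^2}{v}\,n_k$.

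The key step is to \emph{linearize} $L$. Using $\log\tfrac{1+2\Delta}{1-2\Delta}\ge 4\Delta$ and $r_1^t\ge 0$, each summand is at least $\log(1-2\Delta)+4\Delta r_1^t$, whence
\[
 L\ge M_T-d'N_k,\qquad M_T\df 4\Delta\sum_{t=1}^{T}\indic{a_1^t=k}\pa{r_1^t-\tfrac12},
\]
where $N_k\df\sum_t\indic{a_1^t=k}$ and $d'\df -\log(1-2\Delta)-2\Delta\ge 0$. Under $\P_0$ the process $M_T$ is a martingale whose increments lie in $\{-2\Delta,2\Delta\}$ and whose predictable variation is $\langle M\rangle_T=4\Delta^2 N_k$. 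An elementary convexity computation shows $d'\le 4\Delta^2$ for $\Delta\le 0.3$, which is what lets a single event control both the drift $d'N_k$ and the variation $\langle M\rangle_T$.

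On the event $E\df\{\langle M\rangle_T\le v\}=\{N_k\le v/(4\Delta^2)\}$ one has both $\langle M\rangle_T\le v$ and $d'N_k\le\tfrac{d'}{4\Delta^2}v\le v$, so $L\ge M_T-v$ and therefore
\[
 \{L<-(s+v)\}\cap E\subseteq\{M_T<-s\}\cap\{\langle M\rangle_T\le v\}.
\]
A Freedman (Bernstein-type) inequality with increment bound $2\Delta$ and variation budget $v$ gives $\P_0(M_T<-s,\langle M\rangle_T\le v)\le\exp\pa{-s^2/(2(v+2\Delta s/3))}$, and the particular choice $s=\tfrac{4\Delta}{3}\log(1/\rho)+\sqrt{2v\log(1/\rho)}$ is exactly the one that makes this bound equal to $\rho$. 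For the complementary event, Markov's inequality yields $\P_0(E^c)=\P_0(4\Delta^2 N_k>v)\le 4\Delta^2 n_k/v$. Adding the two contributions gives $\P_0(L<-(s+v))\le\rho+\tfrac{4\Delta^2}{v}n_k$, and combining with the transportation inequality proves the lemma (the strict inequality following since $\rho<1$).

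The main obstacle is engineering the single event $E$ so that it simultaneously caps the martingale's predictable variation at $v$ and the drift at $v$. This hinges on the linearization that replaces the log-likelihood increments $\tfrac12\log\tfrac{1+2\Delta}{1-2\Delta}$ by the clean value $2\Delta$, so that the Freedman increment bound is exactly $2\Delta$ and the per-pull variance exactly $4\Delta^2$, together with the inequality $d'\le 4\Delta^2$. Once these elementary estimates are in place, the Freedman inversion returns precisely the stated $s=\tfrac{4\Delta}{3}\log(1/\rho)+\sqrt{2v\log(1/\rho)}$ and the Markov step the precise $\tfrac{4\Delta^2}{v}n_k$; everything else is bookkeeping.
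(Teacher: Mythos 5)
Your proof is correct and follows essentially the same route as the paper's: the same change of measure, the same martingale $M_T$ (the paper's $-S_T$) with variation $4\Delta^2 N_k$ (the paper's $V_T$), and the same Freedman-plus-Markov decomposition over the same good event with the same thresholds $s$ and $v$. The only cosmetic difference is that you lower-bound the log-likelihood ratio via $\log\frac{1+2\Delta}{1-2\Delta}\ge 4\Delta$ together with $-\log(1-2\Delta)-2\Delta\le 4\Delta^2$, whereas the paper uses $1+x\ge e^{x-x^2}$ for $x\ge -0.6$; both give exactly $L\ge M_T-4\Delta^2 N_k$ when $\Delta\le 0.3$.
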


\begin{proof}
    Let $E$ be an event $\{ f(H_{T+1}) = 1 \}$.
    We begin with rewriting $\P_k \pa{ E }$ as follows:
    \begin{align}\label{eq:intermediate lower bound for P_theta}
        \P_k \pa{ E }
        =
        \E_k \bra{ f (H_{T+1}) }
        =
        \E_k' \bra{ f (H_{T+1}) }
        =
        \E_0' \bra{ f (H_{T+1}) \frac{d \P_k'}{d \P_0'} (H_{T+1}) }\,.
    \end{align}
    We derive a lower bound for $d \P_k' / d \P_0' (H_{T+1})$.
    Because $\cAX^{TH} \times \{0, 1\}^{TH}$ is discrete,
    $d \P_k' / d \P_0' (H_{T+1})$ is simply an importance sampling ratio.
    Furthermore, changing the game from $0$ to $k$ only changes the reward distribution when $k$-th action is taken.
    Therefore,
    \begin{align*}
        \frac{d \P_k'}{d \P_0'} (H_{T+1})
        &=
        \prod_{t=1}^T \bra{
            r_1^t \pa{ \frac{0.5 + \indic{a_1^t=k} \Delta}{0.5} }
            +
            (1 - r_1^t) \pa{ \frac{0.5 - \indic{a_1^t=k} \Delta}{0.5} }
        }
        \\
        &=
        \prod_{t=1}^T \bra{
            r_1^t \pa{ 1 + 2 \indic{a_1^t=k} \Delta }
            +
            (1 - r_1^t) \pa{ 1 - 2 \indic{a_1^t=k} \Delta }
        }
        =
        \prod_{t=1}^T \pa{1 + P_t}\,,
    \end{align*}
    where
    $
        P_t
        \df
        2 \Delta \pa{
            2 r_1^t - 1
        } \indic{a_1^t=k}
    $.
    From a fact that
    $
        1 + x \geq \exp \pa{ x - x^2 }
    $
    for $x \geq - 0.6$, we deduce that
    \begin{align*}
        \frac{d \P_k'}{d \P_0'} (H_{T+1})
        \geq
        \exp \pa{ - S_T - V_T }\,,
    \end{align*}
    where we defined $S_T \df - \sum_{t=1}^T P_t$ and $V_T \df \sum_{t=1}^T P_t^2 = 4 \Delta^2 \sum_{t=1}^T \indic{a_1^t=k}$.
    
    Returning to equation (\ref{eq:intermediate lower bound for P_theta}),
    it clearly holds that
    \begin{align*}
        \P_k \pa{ E }
        &\geq
        \E_0' \bra{ f(H_{T+1}) \indic{S_T < s, V_T \leq v } \frac{d \P_k' }{d \P_0'} (H_{T+1}) }
        \\
        &>
        \exp \pa{ - s - v } \E_0' \bra{ f(H_{T+1}) \indic{S_T < s, V_T \leq v } }
        \\
        &= \exp \pa{ - s - v } \E_0 \bra{ f(H_{T+1}) \indic{S_T < s, V_T \leq v } }\,.
    \end{align*}
    We lower-bound $f(H_{T+1}) \indic{S_T < s, V_T \leq v }$.
    To this end, note that $A \cap B = (A \cup B^c) \cap B$ for any sets $A$ and $B$. Therefore,
    \begin{align*}
        f(H_{T+1}) \indic{S_T < s, V_T \leq v }
        &=
        f(H_{T+1}) \pa{
            1 - \indic{S_T \geq s, V_T \leq v } - \indic{V_T > v }
        }
        \\
        &\geq
        f(H_{T+1}) - \indic{S_T \geq s, V_T \leq v } - \indic{V_T > v }\,,
    \end{align*}
    and thus,
    \begin{align*}
        \E_0 \bra{ f(H_{T+1}) \indic{S_T < s, V_T \leq v } }
        \geq
        \P_0 \pa{E}
        - \underbrace{
            \P_0 \pa{S_T \geq s, V_T \leq v }
        }_{
            \leq \rho \text{ by (a)}
        }
        - \underbrace{
            \P_0 \pa{V_T > v}
        }_{
            \leq \E_0[V_T] / v \text{ by (b)}
        }\,.
    \end{align*}
    It is easy to see that (b) follows from Markov's inequality.
    To see why (a) follows, let $\cF_t'$ be a $\sigma$-algebra generated by $(s_h^u, a_h^u)_{u=1, h=1}^{t-1, H}$ and $a_1^t$.
    Then, (a) follows from Freedman's inequality by noting that $(P_t)_{t=1}^n$ is a martingale difference sequence with respect to the filtration $(\cF_t')_{t=1}^T$, $P_t$ is almost surely bounded by $2 \Delta$, and $\sum_{t=1}^T \E_0 [ (P_t - \E_0 [ P_t | \cF_t'] )^2 | \cF_t] = V_T$ as $\E_0 [ P_t | \cF_t'] = 0$.
    Consequently,
    \begin{align*}
        \P_k (E)
        >
        \exp \pa{ - s - v } \pa{
            \P_0 \pa{E} - \rho - \frac{4 \Delta^2}{v} \E_0 \bra{ \sum_{t=1}^T \indic{a_1^t=k} }
        }
        =
        \exp \pa{ - s - v } \pa{
            \P_0 \pa{E} - \rho - \frac{4 \Delta^2}{v} n_k
        }\,.
    \end{align*}
    This concludes the proof.
\end{proof}

\paragraph{Regret Lower Bound.}
Now we are ready to prove the regret lower bound.
For each $k \in [K]$, let $E_k$ be an event $\sum_{t=1}^T \mu^t_1 (a_k | x_1) \leq 0.5 T$.
Note that while implicit, $\sum_{t=1}^T \mu^t_1 (a_k^* | x_1^t)$ is a function of $H_{T+1}$, and thus,
$E_k$ can be rewritten in a form of $f (H_{T+1}) = 1$.
As we verify later, our choice of $\Delta$ is smaller than $0.3$.
Therefore from the previous lemma,
\begin{align*}
    \max_{k \in [K]} \P_k (E_k)
    \geq
    \frac{1}{K} \sum_{k \in [K]} \P_k (E_k)
    >
    \exp \pa{ - s - v } \pa{
        \frac{1}{K} \sum_{k = 1}^K \P_0 \pa{E_k} - \rho - \frac{4 T \Delta^2}{K v}
    }\,.
\end{align*}
Note that $\sum_{k \in [K]} \P_0 \pa{E_k} = \E_0 \left[ \sum_{k \in [K]} \indic{E_k} \right] \geq K-1$ since $\indic{E_k} = 0$ for some $k$ implies that $\indic{E_j} = 1$ for $j \neq k$. Thus,
\begin{align*}
    \max_{k \in [K]} \P_k (E_k)
    >
    \exp \pa{ - s - v } \pa{
        1 - \frac{1}{K} - \rho - \frac{4 T \Delta^2}{K v}
    }\,.
\end{align*}
Setting $v = \dfrac{32 T \Delta^2}{K}$ and $\rho = \dfrac{1}{8}$,
\begin{align*}
    \max_{k \in [K]} \P_k (E_k)
    &>
    \frac{1}{4} \exp \pa{ - 4 \Delta \log 2 - 8 \Delta \sqrt{\dfrac{3T}{K} \log 2} - \dfrac{32 T \Delta^2}{K} }\,,
\end{align*}
where we used an assumption that $K \geq 2$.
Equating the right hand side to $\delta$ and solving for $\Delta$, we choose
\begin{align}
    \Delta
    =
    \sqrt{
        \pa{ \frac{K}{16T} \log 2 + \frac{1}{8} \sqrt{\frac{3K}{T} \log 2} }^2
        + \frac{K}{32T} \log \frac{1}{4 \delta}
    } - \frac{K}{16T} \log 2 - \frac{1}{8} \sqrt{\frac{3K}{T} \log 2}
    <
    \sqrt{
        \frac{K}{32T} \log \frac{1}{4 \delta}
    } < 0.3\,,\label{eq:Delta value}
\end{align}
where the middle inequality follows since $\sqrt{a + b} < \sqrt{a} + \sqrt{b}$ for $a, b \in (0, \infty)$.
Then it holds that $\max_{k \in [K]} \P_k (E_k) > \delta$.
Because the event $E_k$ implies that the regret is greater than or equal to $0.5 T \Delta H$ in $k$-th game, we have a lower bound $\Omega (\sqrt{ - H^2 TK \log (4\delta) }) = \Omega (\sqrt{ - H^2 T (\AX / H - 1) \log (4\delta) }) = \Omega (\sqrt{ - H T (\AX - H) \log (4\delta) })$.

\paragraph{Sample Complexity Lower Bound.}
Now we are ready to prove the sample complexity lower bound.
Let $E_k$ be an event $\mu^{T+1}_1 (a_k | x_1) \leq 0.5$ for each $k \in [K]$.
By the same argument as the one in the regret lower bound proof, $\max_{k \in [K]} \P_k (E_k) > \delta$ for $\Delta$ in Equation~\ref{eq:Delta value}.
Since in $k$-th game, the event $E_k$ implies that the simple regret of $\mu^{T+1}$ is greater than or equal to $0.5 \Delta H$, we have a lower bound for the simple regret of $\Omega ( \sqrt{ - H (\AX - H) / T \log (4\delta) } )$.
Therefore unless $T \geq - H (\AX - H) / \varepsilon^2 \log (4\delta)$,
the algorithm fails to output $\varepsilon$-optimal policy with probability greater than $\delta$.

\subsection{Fixed action space size}

Now we turn to the case where the action space depends on information sets. Since the proof is relatively easy, we directly give a full proof.

In the hard game instance we will consider, we make actions of one player ineffective depending on which one of $\AX$ and $\BY$ is bigger, as before.
For the time being, we assume that $\AX > \BY$.
Furthermore we restrict rewards to be binary but allow them to be stochastic.
As we did in the proof of Lemma~\ref{lemma:probability lower bound lemma}, we can create a game in which rewards are deterministic but look stochastic from the view point of players.

First of all, we claim that there is no game such that $X < (A^H - 1) / (A-1)$. This is because $X \geq 1 + A + \cdots + A^{H-1} = (A^H - 1) / (A-1)$ due to the perfect-recall assumption (at least one successive information set for each action except the final time step). Hereafter we assume that $X \geq (A^H - 1) / (A-1)$.

\begin{figure}[tbh!]
    \centering
    \includegraphics[width=0.9\textwidth]{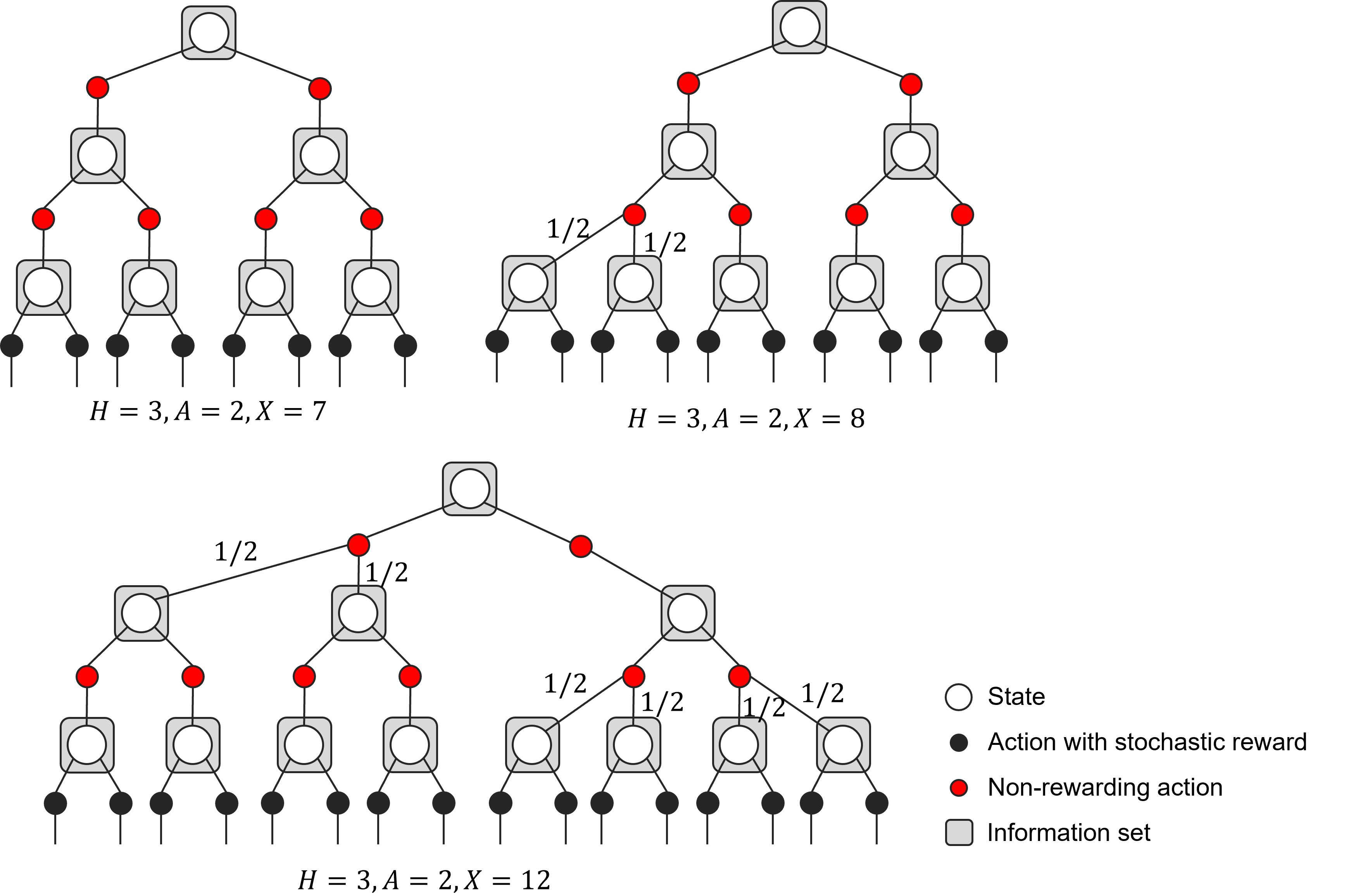}
    \caption{A hard game instance with information-set-independent action space from the view point of the max-player.}
    \label{fig:hard instance with fixed A}
\end{figure}

Figure~\ref{fig:hard instance with fixed A} depicts the hard game instance under different conditions.
In all cases, all information sets are singleton. Unless a number is given, all edges from action nodes to state nodes mean deterministic transitions.
Numbers along edges indicate state-transition probabilities of the edges.
In all cases, the max-player receives reward $1$ at leaf action nodes with some probability specified later in the proof.

The top left panel shows the case where $X = (A^H - 1) / (A-1)$. In this case, the game tree is $A$-ary tree with deterministic transitions. This case is dealt in \citet{bai2022nearoptimal}, and we obtain a similar (pseudo-)regret bound.

The top right panel shows the case where $X = (A^H - 1) / (A-1) + 1$.
In this case, an additional $A$-ary tree with depth $1$ is appended at the left-most action node at $(H-1)$-th time step.
State-transition probabilities to the successor state nodes are $0.5$.
Up to $(A^H - 1) / (A-1) + A^{H-1}$, additional $A$-ary trees with depth $1$ are added similarly to different action nodes at $(H-1)$-th time step one by one from left to right action nodes.
For example, if $X = (A^H - 1) / (A-1) + 2$, another additional $A$-ary tree with depth $1$ is appended at the second left-most action node at $(H-1)$-th time step.

The bottom panel shows the case where $X = (A^H - 1) / (A-1) + A^{H-1} + 1$.
In this case, we remove the additional $A$-ary tree at the left-most action node at $(H-1)$-th time step and append an $A$-ary tree with depth $2$ at the left-most action node at $(H-2)$-th time step.
Up to $X = (A^H - 1) / (A-1) + A^{H-1} + A^{H-2}$, additional $A$-ary trees with depth $1$ are added similarly to different action nodes at $(H-2)$-th time step one by one from left to right action nodes.
This procedure can be repeated until all action nodes at the first time step have additional branches. At this point, there are $(A^H - 1) / (A-1) + A^{H-1} + \cdots + A = 2 (A^H - 1) / (A-1) - 1$ information sets.

Now we specify reward probabilities at leaf action nodes and construct $A^H + A \min \{ X - (A^H - 1) / (A-1), A^{H-1} \} + 1$ games.
We index leaf action nodes by integers from $1$ to $2 A^H$.
We let $a_i^*$ and $x_i^*$ denote the corresponding action of $i$-th node and its predecessor information set (i.e., an information set at which $a_i^*$ can be taken).
In $0$-th game , we set all reward probabilities to $0.5$.
In $k$-th game, we set reward probabilities of all leaf action nodes except $k$-th one to $0.5$, and the reward probability of $k$-th leaf action node to $0.5 + \Delta$, where $\Delta \in (0, 0.5)$ is specified later in the proof.
In other words, an optimal leaf action node in $k$-th game is $k$-th one.
We let $\P_k$ denote the probability measure induced by the interconnection of the algorithm and game $k$.
Expectation under $\P_k$ is denoted by $\E_k$.
We also let $\P_k'$ and $\E_k'$ denote the law of $(x_h^u, a_h^u, r_h^u)_{u=1, h=1}^{T, H}$ and expectation under $\P_k'$ when the algorithm is run in game $k$.

By using the almost same proof as that of Lemma~\ref{lemma:probability lower bound lemma}, we can deduce the following lemma.

\begin{lemma}\label{lemma:probability lower bound lemma fixed action version}
    Fix scalars $\rho \in (0, 1)$ and $v \in (0, \infty)$.
    For brevity let $K \df 2 A^H$.
    If $\Delta \leq 0.3$, for any measurable function $f: \cAX^{TH} \times \{0, 1\}^{TH} \to \{0, 1\}$ and $k \in [K]$, it holds that
    \begin{align*}
        \P_k \pa{ f (H_{T+1}) = 1 }
        >
        \exp \pa{ - s - v } \pa{
            \P_0 \pa{ f (H_{T+1}) = 1 } - \rho - \frac{4 \Delta^2}{v} n_k
        }\,,
    \end{align*}
    where $s \df \dfrac{4 \Delta}{3} \log \dfrac{1}{\rho} + \sqrt{2 v \log \dfrac{1}{\rho}}\,$ and $n_k \df \E_0 \bra{ \sum_{t=1}^T \indic{x_H^t=x_k^* a_H^t=a_k^*} }$.
\end{lemma}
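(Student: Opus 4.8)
The plan is to replay the proof of Lemma~\ref{lemma:probability lower bound lemma} almost word for word, the only structural change being that the single perturbed arm is now the leaf action node $k$, identified by the pair $(x_k^*, a_k^*)$ reached at the final step, instead of the first-step action $a_1 = k$. Concretely, every occurrence of the event $\{a_1^t = k\}$ is replaced by $\{x_H^t = x_k^*,\, a_H^t = a_k^*\}$, and the constant $K = \floor{\AX/H}$ is replaced by $K \df 2 A^H$.

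First I would set $E \df \{f(H_{T+1}) = 1\}$ and perform the change of measure $\P_k(E) = \E_0'\bra{ f(H_{T+1})\, \tfrac{d\P_k'}{d\P_0'}(H_{T+1}) }$, exactly as in \eqref{eq:intermediate lower bound for P_theta}. Because games $0$ and $k$ share the same transition dynamics and differ only in the Bernoulli reward parameter at leaf $k$ ($0.5$ against $0.5 + \Delta$), on the discrete trajectory space the Radon--Nikodym derivative factorizes over episodes and is trivial on every episode that does not reach leaf $k$. This gives $\tfrac{d\P_k'}{d\P_0'}(H_{T+1}) = \prod_{t=1}^T (1 + P_t)$ with $P_t \df 2\Delta(2 r_H^t - 1)\,\indic{x_H^t = x_k^*,\, a_H^t = a_k^*}$.

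Next I would reuse the elementary inequality $1 + x \geq \exp(x - x^2)$, valid since $|P_t| \leq 2\Delta \leq 0.6$ under $\Delta \leq 0.3$, to get $\tfrac{d\P_k'}{d\P_0'}(H_{T+1}) \geq \exp(-S_T - V_T)$ with $S_T \df -\sum_t P_t$ and $V_T \df \sum_t P_t^2 = 4\Delta^2 \sum_t \indic{x_H^t = x_k^*,\, a_H^t = a_k^*}$. Restricting the expectation to $\{S_T < s,\, V_T \leq v\}$ bounds the likelihood ratio below by $\exp(-s-v)$; the set identity $A \cap B = (A \cup B^c) \cap B$ then lets me replace the indicator of this event by $f(H_{T+1}) - \indic{S_T \geq s,\, V_T \leq v} - \indic{V_T > v}$ under $\E_0$, leaving two error terms. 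Markov's inequality gives $\P_0(V_T > v) \leq \E_0[V_T]/v = 4\Delta^2 n_k / v$ with the new $n_k \df \E_0\bra{ \sum_t \indic{x_H^t = x_k^*,\, a_H^t = a_k^*} }$, which is precisely the term appearing in the statement.

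The only step needing genuine care, and the main obstacle, is re-establishing Freedman's inequality, i.e. $\P_0(S_T \geq s,\, V_T \leq v) \leq \rho$, under the new indexing. The subtlety is the choice of filtration: in Lemma~\ref{lemma:probability lower bound lemma} the decision and its reward happened at step $1$, whereas here the perturbed reward $r_H^t$ is revealed only at the leaf. I would therefore take $\cF_t'$ to be generated by the entire past $(x_h^u, a_h^u)_{u=1, h=1}^{t-1, H}$ together with the \emph{whole} current-episode action trajectory $(x_h^t, a_h^t)_{h=1}^H$, so that $\indic{x_H^t = x_k^*,\, a_H^t = a_k^*}$ is $\cF_t'$-measurable while $r_H^t$ is not. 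Under $\P_0$ the leaf reward is Bernoulli$(0.5)$, hence $\E_0[2r_H^t - 1 \mid \cF_t'] = 0$, so $(P_t)_t$ is a martingale difference sequence with $|P_t| \leq 2\Delta$ and conditional variances $\E_0[P_t^2 \mid \cF_t'] = P_t^2$ summing to $V_T$. Freedman's inequality then applies verbatim with the stated $s \df \tfrac{4\Delta}{3}\log\tfrac1\rho + \sqrt{2v\log\tfrac1\rho}$, and combining the three bounds reproduces the displayed inequality.
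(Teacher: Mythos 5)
Your proposal is correct and is essentially the paper's own argument: the paper simply states that Lemma~\ref{lemma:probability lower bound lemma fixed action version} follows by "the almost same proof" as Lemma~\ref{lemma:probability lower bound lemma}, with the event $\{a_1^t=k\}$ replaced by $\{x_H^t=x_k^*,\,a_H^t=a_k^*\}$, which is exactly the substitution you carry out. Your explicit handling of the filtration for Freedman's inequality (conditioning on the full current-episode action trajectory so that the leaf indicator is measurable while $r_H^t$ remains Bernoulli$(0.5)$ under $\P_0$) is precisely the adjustment the paper's shortcut implicitly relies on.
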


\paragraph{Regret Lower Bound.}
Now we are ready to prove the regret lower bound.
For each $k \in [K]$, let $E_k$ be an event $\sum_{t=1}^T \mu^t_{1:H} (x_k^*, a_k^*) \leq 0.5 T$.
Note that while implicit, $\sum_{t=1}^T \mu^t_{1:H} (x_k^*, a_k^*)$ is a function of $H_{T+1}$, and thus,
$E_k$ can be rewritten in a form of $f (H_{T+1}) = 1$.
As we verify later, our choice of $\Delta$ is smaller than $0.3$.
Therefore from the previous lemma,
\begin{align*}
    \max_{k \in [K]} \P_k (E_k)
    \geq
    \frac{1}{K} \sum_{k \in [K]} \P_k (E_k)
    >
    \exp \pa{ - s - v } \pa{
        \frac{1}{K} \sum_{k = 1}^K \P_0 \pa{E_k} - \rho - \frac{4 T \Delta^2}{K v}
    }\,.
\end{align*}
Note that $\sum_{k \in [K]} \P_0 \pa{E_k} = \E_0 [ \sum_{k \in [K]} \indic{E_k} ] \geq K-A^{H-1}$.
To see why it is the case, suppose that $\indic{E_k} = 0$ for some $k$, and let $(x_h^*, a_h^*)_{h=1}^H$ be information sets and actions from the root to $(x_k^*, a_k^*)$.
Then $\sum_{t=1}^T \mu^t_{1:H} (x_k^*, a_k^*) > 0.5 T$ means that $\sum_{t=1}^T \mu^t_{1:h} (x_h^*, a_h^*) > 0.5 T$ for all $h$ since $\mu^t_{1:h} (x_h^*, a_h^*) = \mu^t_{1:h-1} (x_{h-1}^*, a_{h-1}^*) \mu^t_h (a_h^* | x_h^*)$.
Furthermore it also holds that $\sum_{t=1}^T \mu^t_{h} (a_h^* | x_h^*) > 0.5 T$ for any $h$.
If there is no branch along the path to the leaf action node to the root node, every trajectory to leaf action nodes certainly contains $x_1^*$.
Therefore $\sum_{t=1}^T \mu^t_1 (a_1 | x_1^*) \leq 0.5 T$ holds for any $a_1 \in \cA \setminus \{a_1^*\}$, and $\indic{E_k} = 0$ for some $k$ implies that $\indic{E_j} = 1$ for some $j \neq k$.
If there is an additional branch, the same argument holds for all leaf action nodes except those in the branch.
In the worst case, the number of leaf action nodes in a branch is $A^{H-1}$.
Thus,
\begin{align*}
    \max_{k \in [K]} \P_k (E_k)
    >
    \exp \pa{ - s - v } \pa{
        1 - \frac{1}{A} - \rho - \frac{4 T \Delta^2}{K v}
    }\,.
\end{align*}
Setting $v = \dfrac{32 T \Delta^2}{K}$ and $\rho = \dfrac{1}{8}$,
\begin{align*}
    \max_{k \in [K]} \P_k (E_k)
    &>
    \frac{1}{4} \exp \pa{ - 4 \Delta \log 2 - 8 \Delta \sqrt{\dfrac{3T}{K} \log 2} - \dfrac{32 T \Delta^2}{K} }\,,
\end{align*}
where we used an assumption that $A \geq 2$.
Equating the right hand side to $\delta$ and solving for $\Delta$, we choose
\begin{align}
    \Delta
    =
    \sqrt{
        \pa{ \frac{K}{16T} \log 2 + \frac{1}{8} \sqrt{\frac{3K}{T} \log 2} }^2
        + \frac{K}{32T} \log \frac{1}{4 \delta}
    } - \frac{K}{16T} \log 2 - \frac{1}{8} \sqrt{\frac{3K}{T} \log 2}
    <
    \sqrt{
        \frac{K}{32T} \log \frac{1}{4 \delta}
    } < 0.3\,,\label{eq:Delta value 2}
\end{align}
where the middle inequality follows since $\sqrt{a + b} < \sqrt{a} + \sqrt{b}$ for $a, b \in (0, \infty)$.
Then it holds that $\max_{k \in [K]} \P_k (E_k) > \delta$.
Because the event $E_k$ implies that the regret is greater than or equal to $0.25 T \Delta H$ in $k$-th game, we have a lower bound $\Omega (\sqrt{ - T A^H \log (4\delta) }) = \Omega (\sqrt{ - TXA \log (4\delta) })$, where $X \leq 2 (1 + A + \cdots + A^{H-1}) = X_H (1 + 1/A + \cdots + 1/A^{H-1}) \leq 2 X_H$ is used.

\paragraph{Sample Complexity Lower Bound.}
Now we are ready to prove the sample complexity lower bound.
Let $E_k$ be an event $\mu^{T+1}_1 (x_k^*, a_k^*) \leq 0.5$ for each $k \in [K]$.
By the same argument as the one in the regret lower bound proof, $\max_{k \in [K]} \P_k (E_k) > \delta$ for $\Delta$ in Equation~\ref{eq:Delta value 2}.
Since in $k$-th game, the event $E_k$ implies that the simple regret of $\mu^{T+1}$ is greater than or equal to $0.25 \Delta H$, we have a lower bound for the simple regret of $\Omega ( \sqrt{ - XA / T \log (4\delta) } )$.
Therefore unless $T \geq - XA / \varepsilon^2 \log (4\delta)$,
the algorithm fails to output $\varepsilon$-optimal policy with probability greater than $\delta$.

\section{General tools}
\label{app:general_tools}

All proofs of the following sections will focus on the max player. We will denote by $\cAX^t$ its history at episode $t$ and use $\filtration$ the filtration such that $\salgebra_0=\{\emptyset,\Omega\}$, $\salgebra_t=\sigma\pa{\nu^1,r^1,\cAX^1 ... \:\nu^t,r^t,\cAX^t,\nu^{t+1}}$ and $\salgebra_{T}=\sigma\pa{\nu^1,r^1,\cAX^1, ... \:, \nu^{T},r^T \cAX^T}$ is the filtration of $\sigma$-algebra generated by the relevant random variables for the max-player up to the round $t+1$.

We first provide a slight generalization of Lemma~1 by  \citet{neu2015explorea} for our settings, as $\gamma^t$ is not fixed in advance in the adaptive case.

\begin{lemma}\label{lemma:concentration}
Let $h\in[H]$, $(x_h,a_h)\in\cAXh$, $\tau_h(x_h,a_h)$ a stopping time with respect to $\filtration$, and $\gamma'_h(x_h,a_h)>0$ a fixed constant such that for all $t\leq \tau_h(x_h,a_h)$, $\gamma_h^t(x_h,a_h)\geq \gamma'_h(x_h,a_h)$. Then, with probability $1-\delta'$,

\[\tL^{\tau_h(x_h,a_h)}_h(x_h,a_h)-L^{\tau_h(x_h,a_h)}_h(x_h,a_h)\leq \frac{\log(1/\delta')}{2\gamma'_h(x_h,a_h)}\,.\]
\end{lemma}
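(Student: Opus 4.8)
The plan is to mirror the exponential-supermartingale argument of \citet{neu2015explorea}, the one new ingredient being a monotonicity step that lets the time- and state-dependent parameter $\gamma_h^t(x_h,a_h)$ be replaced by its deterministic lower bound $\gamma'_h(x_h,a_h)$. Throughout I fix $(x_h,a_h)$ and abbreviate $\tell^t=\tell_h^t(x_h,a_h)$, $\ell^t=\ell_h^t(x_h,a_h)$, $\mu^t=\mu_{1:h}^t(x_h,a_h)$, $\gamma^t=\gamma_h^t(x_h,a_h)$ and $\gamma'=\gamma'_h(x_h,a_h)$, and I reuse the unbiased estimator $\hat\ell^t:=\widehat{\ell}_h^{\,t}(x_h,a_h)=\indic{x_h^t=x_h,a_h^t=a_h}(1-r_h^t)/\mu^t$ of Section~\ref{sec:tree_structure_known}. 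Since $\mu^t$, $\gamma^t$ and $\nu^t$ are all $\salgebra_{t-1}$-measurable, the identity $\E\bra{\indic{x_h^t=x_h,a_h^t=a_h}(1-r_h^t)\mid\salgebra_{t-1}}=\mu^t\ell^t$ gives $\E\bra{\hat\ell^t\mid\salgebra_{t-1}}=\ell^t$.

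First I would establish the pointwise bound $\tell^t\leq \tfrac{1}{2\gamma^t}\log(1+2\gamma^t\hat\ell^t)$. When the indicator is zero both sides vanish; otherwise, writing $a=1-r_h^t\in[0,1]$, this reduces to $\tfrac{a}{\mu^t+\gamma^t}\leq \tfrac{1}{2\gamma^t}\log(1+2\gamma^t a/\mu^t)$, which follows from the elementary inequality $\log(1+u)\geq 2u/(2+u)$, valid for $u\geq 0$, together with $a\leq 1$. The key observation is then that $\gamma\mapsto \tfrac{1}{2\gamma}\log(1+2\gamma x)$ is nonincreasing for $x\geq 0$ (because $\log(1+u)/u$ is nonincreasing in $u$), so on the event $\{\tau\geq t\}$, where $\gamma^t\geq\gamma'$ by hypothesis, the bound upgrades to $\tell^t\leq \tfrac{1}{2\gamma'}\log(1+2\gamma'\hat\ell^t)$, i.e. $\exp(2\gamma'\tell^t)\leq 1+2\gamma'\hat\ell^t$. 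This is exactly the fixed-parameter inequality that Neu's analysis requires.

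Next I would build the supermartingale. Set $Z_t:=\indic{\tau\geq t}(\tell^t-\ell^t)$ and $M_t:=\exp\pa{2\gamma'\sum_{s=1}^t Z_s}$, with $M_0=1$. Because $\tau$ is an $\filtration$-stopping time, $\{\tau\geq t\}\in\salgebra_{t-1}$; on this event the displayed inequality and $1+2\gamma'\ell^t\leq e^{2\gamma'\ell^t}$ give $\E\bra{\exp(2\gamma'(\tell^t-\ell^t))\mid\salgebra_{t-1}}\leq 1$, while on $\{\tau<t\}$ one has $Z_t=0$. Hence $\E\bra{\exp(2\gamma' Z_t)\mid\salgebra_{t-1}}\leq 1$, so $(M_t)$ is a supermartingale with $\E[M_T]\leq 1$. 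Since $\sum_{s=1}^T Z_s=\tL_h^{\tau}(x_h,a_h)-L_h^{\tau}(x_h,a_h)$ (using $\tau\leq T$, as there are $T$ episodes), Markov's inequality yields $\P\pa{\tL_h^{\tau}-L_h^{\tau}>\log(1/\delta')/(2\gamma')}=\P(M_T>1/\delta')\leq \delta'\,\E[M_T]\leq \delta'$, which is the claim.

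The main obstacle is handling the randomness of $\gamma^t$ cleanly: unlike in \citet{neu2015explorea}, here $\gamma^t$ is history-dependent, so the pointwise inequality by itself does not deliver a fixed-rate exponential bound. The monotonicity step resolves this, and the stopping time $\tau$, whose event $\{\tau\geq t\}$ is $\salgebra_{t-1}$-measurable, is precisely what guarantees $\gamma^t\geq\gamma'$ on exactly the indices contributing to $M_t$, keeping each increment bound valid. The remaining steps are routine.
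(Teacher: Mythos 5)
Your proof is correct and follows essentially the same route as the paper: the exponential supermartingale argument of \citet{neu2015explorea}, with the random IX parameter $\gamma_h^t(x_h,a_h)$ replaced by its deterministic lower bound $\gamma'_h(x_h,a_h)$ on the pre-stopping indices, followed by Markov's inequality. The only cosmetic differences are that you pass from $\gamma^t$ to $\gamma'$ via monotonicity of $\gamma\mapsto\frac{1}{2\gamma}\log(1+2\gamma x)$ (the paper substitutes $\gamma^t\geq\gamma'(1-r_h^t)$ directly in the denominator) and that you encode the stopping time with indicators $\indic{\tau\geq t}$ rather than truncating the sum at $t\wedge\tau$; both are equivalent to the paper's steps.
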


\begin{proof}

We first define the random process $(S^t)_{t\in[|0,t|]}$ with respect to $\filtration$

\[S^t:=\exp\bra{2\gamma'_h(x_h,a_h)\sum_{k=1}^{t\wedge \tau_h(x_h,a_h)}\pa{\tell^k_h(x_h,a_h)-\ell^k_h(x_h,a_h)}}\]

With the inequality $\frac{z}{1+z/2}\leq \log(1+z)$ for $z\geq 0$, we have for all $t\leq \tau_h(x_h,a_h)$, with $\omu_{1:h}^t:=\mu_{1:h}^t(x_h,a_h)$,

\begin{align*}
    2\gamma'_h(x_h,a_h)\tell^t_h(x_h,a_h)&=2\gamma'_h(x_h,a_h)\frac{1-r_h^t}{\omu_{1:h}^t+\gamma_h^t(x_h,a_h)}\indic{x_h=x_h^t,a_h=a_h^t}\\
    &\leq 2\gamma'_h(x_h,a_h)\frac{1-r_h^t}{\omu_{1:h}^t+\gamma'_h(x_h,a_h)(1-r_h^t)}\indic{x_h=x_h^t,a_h=a_h^t}\\
    &=\frac{2\gamma'_h(x_h,a_h)(1-r_h^t)/\omu_{1:h}^t}{1+\gamma'_h(x_h,a_h)(1-r_h^t)/\omu_{1:h}^t}\indic{x_h=x_h^t,a_h=a_h^t}\\
    &\leq \log(1+2\gamma'_h(x_h,a_h)\hl_h^t(x_h,a_h))\,.
\end{align*}

Which gives, using $1+z\leq e^z$,

\begin{align*}
    \E\bra{\exp\pa{2\gamma'_h(x_h,a_h)\tell^t_h(x_h,a_h)} | \salgebra_{t-1}}&\leq \E\bra{ 1+2\gamma'_h(x_h,a_h)\hl_h^t(x_h,a_h)| \salgebra_{t-1}}\\
    &=1+2\gamma'_h(x_h,a_h) \ell^t_h(x_h,a_h)\\
    &\leq \exp\pa{2\gamma'_h(x_h,a_h) \ell^t_h(x_h,a_h)}\,.
\end{align*}

We thus have that $(S^t)_{t\in[|0,T|]}$ is a super-martingale, and using the Markov inequality we get

\begin{align*}
    \P\pa{\tL_h^{\tau_h(x_h,a_h)}(x_h,a_h)-L_h^{\tau_h(x_h,a_h)}(x_h,a_h)\geq \log(1/\delta')/\pa{2\gamma'_h(x_h,a_h)}}&=\P\pa{S^T\geq 1/\delta'}\\
    &\leq \E\pa{S^T}\delta'\\
    &\leq \delta'\,.
\end{align*}
\end{proof}

We also state a simple consequence of Azuma-Hoeffding inequality that we will use multiple times in the following sections.

\begin{lemma}\label{lemma:azuma}

Let $(u_t)_{t\in[T]}$ be a random process adapted to $(\salgebra_t)_{t\in[T]}$ such that $0\leq u_t\leq H$ for all $t\in [T]$. Then, with a probability at least $1-\delta'$,

\[\sum_{t=1}^T \bra{u_t-\E\pa{u_t | \salgebra_{t-1}}}\leq H\sqrt{2T\log(1/\delta')}\,.\]

The same property can also be shown for $\sum_{t=1}^T \bra{\E\pa{u_t | \salgebra_{t-1}}-u_t}$.
\end{lemma}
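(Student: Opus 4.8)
The plan is to recognize this as a direct instance of the Hoeffding--Azuma inequality applied to the martingale difference sequence built from $(u_t)$. First I would set $D_t := u_t - \E\bra{u_t \mid \salgebra_{t-1}}$ for each $t\in[T]$. Since $(u_t)$ is adapted to $(\salgebra_t)$, each $u_t$ is $\salgebra_t$-measurable and (being bounded) integrable, hence so is $D_t$; moreover $\E\bra{D_t \mid \salgebra_{t-1}} = 0$ by construction, so $(D_t)_{t\in[T]}$ is a martingale difference sequence with respect to $\filtration$.

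Next I would record the boundedness, which is the only place the hypothesis enters. From $0\leq u_t\leq H$ we get $0\leq \E\bra{u_t\mid\salgebra_{t-1}}\leq H$ as well, and therefore $|D_t|\leq H$ almost surely. I would note that one could instead exploit the sharper fact that the conditional range of $u_t$ has length $H$ to shave a constant, but the crude bound $|D_t|\leq H$ is precisely what produces the stated constant.

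With these two facts in hand, the Hoeffding--Azuma inequality gives, for every $\lambda>0$,
\[\P\pa{\sum_{t=1}^T D_t \geq \lambda} \leq \exp\pa{-\frac{\lambda^2}{2 T H^2}}.\]
Setting the right-hand side equal to $\delta'$ and solving for $\lambda$ yields $\lambda = H\sqrt{2T\log(1/\delta')}$, which is exactly the claimed threshold, so with probability at least $1-\delta'$ the displayed bound on $\sum_{t=1}^T\bra{u_t-\E\pa{u_t\mid\salgebra_{t-1}}}$ holds. Finally, for the symmetric statement I would apply the identical argument to the sequence $(-D_t)_{t\in[T]}$, which is again a martingale difference sequence bounded in absolute value by $H$.

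Since every step is a routine verification, there is no genuine obstacle here; the only point requiring minor care is the choice of boundedness constant, which fixes the factor $\sqrt{2T}$ (rather than $\sqrt{T/2}$) appearing in the final bound.
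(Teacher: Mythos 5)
Your proof is correct and follows essentially the same route as the paper: both form the martingale (difference) sequence $u_t - \E\pa{u_t \mid \salgebra_{t-1}}$, bound its increments by $H$ using $0\leq u_t\leq H$, and apply Azuma--Hoeffding with the threshold $H\sqrt{2T\log(1/\delta')}$, handling the symmetric statement by negation. Your side remark that the conditional range of length $H$ could shave the constant is also accurate, but as you note the crude bound is what matches the stated inequality.
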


\begin{proof}
    We define the martingale $(M_t)_{t\in[|0,T|]}$ adapted to $(\salgebra_t)_{t\in[|0,T|]}$ with
    
    \[M_t:=\sum_{k=1}^t \bra{u_k-\E\pa{u_k | \salgebra_{k-1}}}\,.\]
    
    We have, thanks to the hypothesis, for all $1\leq t\leq T$, that $-H\leq M_t-M_{t-1} \leq H$. Azuma-Hoeffding inequality then allows us to conclude
    
    \[\P\pa{M_T \geq H\sqrt{2T\log(1/\delta')}}\leq \exp\pa{-2\frac{2H^2T\log(1/\delta')}{4H^2T}}=\delta'\,.\]
    
    The proof for $\sum_{t=1}^T \bra{\E\pa{u_t | \salgebra_{t-1}}-u_t}$ is exactly the same.
\end{proof}

Finally, we state a lemma useful for the analysis of \AdaptiveFTRL.

\begin{lemma}\label{lemma_logbound}
    Let $\pa{u_t}_{1\leq t\leq T}$ be a non-negative sequence that verifies for all $t\in [T]$, $u_t\leq 1+U_{t-1}$ where $U_t=\sum_{k=1}^t u_k$. Then
    \[\sum_{t=1}^T \frac{u_t}{1+U_{t-1}}\leq \log_2(1+U_T)\,.\]
\end{lemma}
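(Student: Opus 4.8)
The plan is to rewrite the target sum as a telescoping sum of logarithms and reduce everything to a single elementary inequality. First I would introduce, for each $t$, the shorthand $z_t := u_t/(1+U_{t-1})$, so that the quantity to bound is exactly $\sum_{t=1}^T z_t$. The hypothesis $u_t \leq 1 + U_{t-1}$, together with $u_t \geq 0$ and $1 + U_{t-1} \geq 1 > 0$, says precisely that $z_t \in [0,1]$ for every $t$; keeping track of this range is what makes the argument work.

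The key observation is the multiplicative identity $1 + U_t = (1+U_{t-1})(1+z_t)$, which follows immediately from $U_t = U_{t-1} + u_t$. Taking $\log_2$ gives $\log_2(1+z_t) = \log_2(1+U_t) - \log_2(1+U_{t-1})$, so that summing over $t$ telescopes. Using $U_0 = 0$, hence $\log_2(1+U_0) = 0$, this yields
\[
\sum_{t=1}^T \log_2(1+z_t) = \log_2(1+U_T)\,.
\]
Thus it suffices to prove the pointwise bound $z_t \leq \log_2(1+z_t)$ for each $t$ and then sum.

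The main (and essentially only) step is this elementary inequality $z \leq \log_2(1+z)$ for $z \in [0,1]$, which I would establish by concavity. The map $z \mapsto \log_2(1+z)$ is concave, it equals $0$ at $z=0$ and $1$ at $z=1$, so on $[0,1]$ its graph lies above the chord joining these endpoints, and that chord is exactly the line $z \mapsto z$. Applying this with $z = z_t \in [0,1]$ and combining with the telescoping identity gives
\[
\sum_{t=1}^T z_t \leq \sum_{t=1}^T \log_2(1+z_t) = \log_2(1+U_T)\,,
\]
which is the claim. There is no genuine obstacle; the only points requiring care are checking that the hypothesis forces $z_t \leq 1$ so that the concavity bound is applied on the correct interval, and handling the base case $U_0 = 0$ in the telescoping.
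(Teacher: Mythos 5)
Your proof is correct and follows essentially the same route as the paper: both define $z_t = u_t/(1+U_{t-1}) \in [0,1]$, apply the concavity bound $z \leq \log_2(1+z)$ on $[0,1]$, and conclude by the telescoping identity $\sum_t \log_2(1+z_t) = \log_2(1+U_T)$. The only difference is cosmetic---you spell out the chord argument for concavity and the multiplicative identity $1+U_t = (1+U_{t-1})(1+z_t)$, which the paper leaves implicit.
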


\begin{proof}
    Let $\lambda_t=\frac{u_t}{1+U_{t-1}}$. Using the concavity of $\log_2$, we get $\lambda_t\leq \log_2(1+\lambda_t)$ as $\lambda_t \in[0,1]$ by assumption. Summing over $t$ then yields
    \[\sum_{t=1}^T \lambda_t\leq \sum_{t=1}^T \log_2(1+\lambda_t)=\sum_{t=1}^T\log_2\pa{\frac{1+U_t}{1+U_{t-1}}}=\log_2(1+U_T)\,.\]
\end{proof}

\section{Proof of \texorpdfstring{\BalancedFTRL}{Balanced-FTRL}} \label{appendix:bal}

\paragraph{Space of realization plans} 

Let $\maxpibis:=\left\{\mu_{1:},\mu\in\maxpi\right\}$ the set of max-player realization plans. We can notice that this space is a restriction of  $\R_{\geq 0}^\AX$ to an affine subspace of $\R^\AX$, defined by the $X$ constraints, for each $x_h\in \cX$, 
\[\sum_{a_h\in\cAs{x_h}}\mu_{1:h}(x_h,a_h)=\mu_{1:h-1}(x_{h-1},a_{h-1})\,,\]

with the convention $\mu_{1:0}(x_0,a_0)=1$. We thus write $\maxpibis= (F+u)\cap\R_{\geq 0}^{\AX}$, with $F$ a linear subspace of $\R^{\AX}$ and $u\in\maxpibis$.

\paragraph{Average profile}\label{par:averaged}

We especially get that the average $\frac{1}{T}\sum_{t=1}^T \mu_{1:}^t$ is also in $\maxpibis$. We define an average realization plan $\overline{\mu}$ of the max-player as one of the $\overline{\mu}\in \maxpi$ that verifies
\[\overline{\mu}_{1:}=\frac{1}{T}\sum_{t=1}^T \mu_{1:}^t\,.\]

Doing the same step for $\overline{\nu}$, an average realization of the min-player, we get the existence of an average profile $(\overline{\mu},\overline{\nu})$. Note that if the average profile is positive over all information states, then it is unique. It has the same properties as the one defined in \citet{kozuno2021learning}.

\paragraph{Global regularizer} We will assume that the $\Psi_h$ functions are supported on $\R_{\geq 0}^{A\pa{\cX_h}}$, and denote by $\Psi:\R^{\AX}_{\geq 0} \to \R$ the function  $\Psi(\mu_{1:})=\sum_{h=1}^H\frac{1}{\eta_h}\Psi_h(\mu^{}_{1:h}\cdot p^\star_{1:h})$. This function is strictly convex and non-positive when using either Tsallis or Shannon entropy as $\Psi_h$ function. Its definition with the coordinates of the realization plan will be important as we will consider its gradient. The associated Bregman divergence is

\[\mathcal{D}_{\Psi}(\mu_{1:}^1,\mu_{1:}^2):=\Psi(\mu_{1:}^1)-\Psi(\mu_{1:}^2)-\scal{\nabla\Psi(\mu_{1:}^2)}{\mu_{1:}^1-\mu_{1:}^2}\,.\]

We first give a lemma that justifies the use of the balanced transitions.

\begin{lemma}
\label{lemma:balanced}
    For any $\nu\in\minpi$, we have
    \[\sum_{h=1}^H\sum_{x_h\in\cX_h}\As{x_h}\frac{p^\nu_{1:h}(x_h)}{p^\star_{1:h}(x_h)}=\AX\,.\]
    Furthermore, when the size of the action set is fixed to $A$, we also have for any $h\in[H]$
    \[\sum_{x_h\in\cX_h}\As{x_h}\frac{p^\nu_{1:h}(x_h)}{p^\star_{1:h}(x_h)}\leq A^{h-H} \AX \,.\]
\end{lemma}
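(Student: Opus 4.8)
The plan is to prove the two displayed identities separately: the first (the exact equality) by a leaves-to-root induction on the information-set tree, and the second (the per-level inequality in the fixed action-size case) by a direct telescoping estimate.

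For the equality, I would introduce for each information set $x$ at depth $h$ the conditional realization plans $p^\nu(x'\mid x):=p^\nu_{1:h'}(x')/p^\nu_{1:h}(x)$ and $p^\star(x'\mid x):=p^\star_{1:h'}(x')/p^\star_{1:h}(x)$ for every descendant $x'\geq x$ (at depth $h'$), and define the subtree sum
\[W(x):=\sum_{x'\geq x}\As{x'}\,\frac{p^\nu(x'\mid x)}{p^\star(x'\mid x)}.\]
The target is the invariant $W(x)=\Ataus{x}$. Granting it, grouping the left-hand side of the lemma by the (unique, by perfect recall) root ancestor $x_1\in\cX_1$ and using multiplicativity of the realization plans gives $\sum_{x_1}\frac{p^\nu_{1:1}(x_1)}{p^\star_{1:1}(x_1)}W(x_1)=\sum_{x_1}\frac{p_0(x_1)\,\AX}{\Ataus{x_1}}\Ataus{x_1}=\AX$, where I use $p^\star_0(x_1)=\Ataus{x_1}/\AX$, $p^\nu_{1:1}(x_1)=p_0(x_1)$, and $\sum_{x_1\in\cX_1}p_0(x_1)=1$.

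The invariant itself I would show by backward induction on depth. At a leaf the subtree is $\{x\}$, so $W(x)=\As{x}=\Ataus{x}$. For an internal $x$ at depth $h$, splitting the sum on the first transition yields
\[W(x)=\As{x}+\sum_{a\in\cA(x)}\sum_{x''\in\cX_{h+1}(x,a)}\frac{p^\nu_h(x''\mid x,a)}{p^\star_h(x''\mid x,a)}\,W(x'').\]
Writing $c(x,a):=\sum_{x''\in\cX_{h+1}(x,a)}\Ataus{x''}$, the definition $p^\star_h(x''\mid x,a)=\Ataus{x''}/c(x,a)$ together with the inductive hypothesis $W(x'')=\Ataus{x''}$ collapses $W(x'')/p^\star_h(x''\mid x,a)$ to $c(x,a)$; then $\sum_{x''}p^\nu_h(x''\mid x,a)=1$ makes the adversarial transition disappear, leaving $W(x)=\As{x}+\sum_a c(x,a)=\Ataus{x}$, which is exactly the recursive form of the definition $\Ataus{x}=\sum_{x'\geq x}\As{x'}$. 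This twofold cancellation---the balanced kernel is built so the descendant weight $\Ataus{x''}$ cancels, and the true kernel $p^\nu$ is a probability distribution that sums to one---is the heart of the argument, and I expect the only delicate point to be getting the conditioning and the index bookkeeping exactly right.

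For the inequality (fixed action size $A$, so $\As{x}\equiv A$) I would bound each summand directly rather than induct. Telescoping the balanced realization plan gives $p^\star_{1:h}(x_h)=\frac{\Ataus{x_h}}{\AX}\prod_{h'=1}^{h-1}\frac{\Ataus{x_{h'}}}{c(x_{h'},a_{h'})}$, and since $c(x_{h'},a_{h'})\leq\Ataus{x_{h'}}$ every factor is at least one, so $p^\star_{1:h}(x_h)\geq\Ataus{x_h}/\AX$. Separately, counting actions level by level down to depth $H$ shows $\Ataus{x_h}\geq A^{H-h+1}$. Combining these two bounds with $\sum_{x_h\in\cX_h}p^\nu_{1:h}(x_h)\leq1$ gives
\[\sum_{x_h\in\cX_h}\As{x_h}\frac{p^\nu_{1:h}(x_h)}{p^\star_{1:h}(x_h)}\leq\sum_{x_h}A\,p^\nu_{1:h}(x_h)\frac{\AX}{A^{H-h+1}}=A^{h-H}\AX\sum_{x_h}p^\nu_{1:h}(x_h)\leq A^{h-H}\AX,\]
as claimed. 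Here the step to justify carefully is the subtree lower bound $\Ataus{x_h}\geq A^{H-h+1}$: it uses that the fixed-length, perfect-recall structure forces every information-set--action pair below depth $H$ to have at least one successor, so depth $H$ contains at least $A^{H-h}$ descendants of $x_h$, each contributing $A$ actions.
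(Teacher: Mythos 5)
Your proof of the first identity is correct, and it is essentially the paper's argument run in the opposite direction: the paper performs a forward induction on the depth, maintaining the invariant that the partial sum over depths $1,\dots,h-1$ plus $\sum_{x_h\in\cX_h}\Ataus{x_h}\,p^\nu_{1:h}(x_h)/p^\star_{1:h}(x_h)$ equals $\AX$, whereas you prove the equivalent subtree invariant $W(x)=\Ataus{x}$ by backward induction from the leaves; both rest on exactly the same two cancellations (the balanced kernel is built so that the weight $\Ataus{x''}$ cancels, and $p^\nu_h(\cdot\mid x,a)$ sums to one). One cosmetic caution: define the conditional plans $p^\nu(x'\mid x)$ as products of one-step transitions along the unique path from $x$ to $x'$ rather than as quotients, since $p^\nu_{1:h}(x)$ may vanish.

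The second part, however, has a genuine gap: the normalization $\sum_{x_h\in\cX_h}p^\nu_{1:h}(x_h)\leq 1$ that you invoke is false. The adversarial plan $p^\nu_{1:h}$ contains only the chance/min-player transition factors, not the max-player's own action probabilities, so the sum over $\cX_h$ grows with the max-player's branching: summing the identity $\sum_{x_{h}\in\cX_{h}(x_{h-1},a_{h-1})}p^\nu_{h-1}(x_{h}\mid x_{h-1},a_{h-1})=1$ over all pairs $(x_{h-1},a_{h-1})$ gives, in the fixed-action-size case, $\sum_{x_h\in\cX_h}p^\nu_{1:h}(x_h)=A\sum_{x_{h-1}\in\cX_{h-1}}p^\nu_{1:h-1}(x_{h-1})=A^{h-1}$, not something bounded by one. (It is the product $p^\nu_{1:h}\cdot\mu_{1:h}$ that is a probability distribution on $\cAXh$, not $p^\nu_{1:h}$ on $\cX_h$.) With the correct value your chain only yields $A^{h-1}\cdot A^{h-H}\AX$, off by an exponentially large factor; the pointwise bound $p^\star_{1:h}(x_h)\geq\Ataus{x_h}/\AX$, while true, is too lossy to absorb this branching. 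The repair stays entirely within your own framework: since every term of the equality you proved is non-negative, multiplicativity of the plans and your invariant $W(x_h)=\Ataus{x_h}$ give
\[\sum_{x_h\in\cX_h}\Ataus{x_h}\frac{p^\nu_{1:h}(x_h)}{p^\star_{1:h}(x_h)}=\sum_{x_h\in\cX_h}\frac{p^\nu_{1:h}(x_h)}{p^\star_{1:h}(x_h)}W(x_h)=\sum_{h'=h}^H\sum_{x_{h'}\in\cX_{h'}}\As{x_{h'}}\frac{p^\nu_{1:h'}(x_{h'})}{p^\star_{1:h'}(x_{h'})}\leq \AX\,,\]
and combining this with your (correct) counting bound $\Ataus{x_h}\geq\As{x_h}A^{H-h}$ immediately yields $\sum_{x_h\in\cX_h}\As{x_h}\,p^\nu_{1:h}(x_h)/p^\star_{1:h}(x_h)\leq A^{h-H}\AX$. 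This $\Ataus{}$-weighted, per-depth bound is precisely how the paper concludes, using the running invariant it establishes at depth $h$ in the course of its forward induction.
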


\begin{proof}
    We first show by induction on $h$ that that for all $x_h\in\cX_h$,
    \begin{equation}\label{eqn:rec_bal}
        \sum_{h'=1}^{h-1}\sum_{x_{h'}\in\X_{h'}}\As{x_{h'}}\frac{p^\nu_{1:h'}(x_{h'})}{p^\star_{1:h'}(x_{h'})}+\sum_{x_h\in\X_{h}}\Ataus{x_h}\frac{p^\nu_{1:h}(x_{h})}{p^\star_{1:h}(x_{h})}=\AX\,.
    \end{equation}
    
    For $h=1$, we have
    
    \[\sum_{x_1\in\cX_1}\Ataus{x_1}\frac{p^\nu_{1:1}(x_1)}{p^\star_{1:1}(x_1)}=\AX\sum_{x_1\in\cX_1}p_0(x_1) =\AX\,.\]
    
    % And for the induction, we use for all $h\in[H-1]$, assuming the property holds for $h$,
    Assuming the property holds at step $h$, to obtain the property at step $h+1$ we use
    \begin{align*}
        \sum_{x_{h+1}\in\cX_{h+1}}\Ataus{x_{h+1}}\frac{p_{1:{h+1}}^\nu(x_{h+1})}{p_{1:{h+1}}^\star(x_{h+1})}&=\sum_{(x_h,a_h)\in\cAXh}\sum_{(...,x_h,a_h,x_{h+1})}\Ataus{x_{h+1}}\frac{p_{1:{h+1}}^\nu(x_{h+1})}{p_{1:{h}}^\star(x_{h}) p_h^\star(x_{h+1}|x_h,a_h) }\\
        &=\sum_{(x_h,a_h)\in\cAXh}\sum_{(...,x_h,a_h,x_{h+1})}\frac{p_{1:{h+1}}^\nu(x_{h+1})}{p_{1:{h}}^\star(x_{h})}\sum_{(...,x_h,a_h,x'_{h+1})}\Ataus{x'_{h+1}}\\
        &=\sum_{(x_h,a_h)\in\cAXh}\frac{p_{1:{h}}^\nu(x_{h})}{p_{1:{h}}^\star(x_{h})}\sum_{(...,x_h,a_h,x'_{h+1})}\Ataus{x'_{h+1}}\\
        &=\sum_{x_h\in\cX_h}\frac{p_{1:{h}}^\nu(x_{h})}{p_{1:{h}}^\star(x_{h})}(\Ataus{x_h}-\As{x_h})\,.
    \end{align*}
    
As for any $x_H\in\cX_H$, $\Ataus{x_H}=\As{x_H}$ (as no state follows a final state), we obtain the first equality using \eqref{eqn:rec_bal} with $h=H$.

For the inequality, we can notice that, if the size of the action set is fixed equal to $A$, we have for any $h\in[H-1]$:
\begin{align*}
    \Ataus{x_h}&=\sum_{x'\geq x_h}A(x')\geq \sum_{a_h\in \cA(x_h)}\sum_{(...,x_h,a_h,...x_H)}\As{x_H}\\
    &=A\sum_{a_h\in\cA(x_h)}\textrm{Card}\{x_H\in \cX_H | (...,x_h,a_h,...x_H)\}\geq A\sum_{a_h\in\cA(x_h)}A^{H-h-1}=\As{x_h}A^{H-h}\,.
\end{align*}

The inequality $\Ataus{x_h}\geq \As{x_h}A^{H-h}$ also trivially holds for $h=H$ as explained before. Using again \eqref{eqn:rec_bal}, this time with any $h\in [H]$, we obtain
    \begin{align*}
        \AX&=\sum_{h'=1}^{h-1}\sum_{x_{h'}\in\X_{h'}}\As{x_{h'}}\frac{p^\nu_{1:h'}(x_{h'})}{p^\star_{1:h'}(x_{h'})}+\sum_{x_h\in\X_{h}}\Ataus{x_h}\frac{p^\nu_{1:h}(x_{h})}{p^\star_{1:h}(x_{h})}\\
        &\geq \sum_{x_h\in\X_{h}}\Ataus{x_h}\frac{p^\nu_{1:h}(x_{h})}{p^\star_{1:h}(x_{h})}\\
        &\geq A^{H-h}\sum_{x_h\in\X_{h}}\As{x_h}\frac{p^\nu_{1:h}(x_{h})}{p^\star_{1:h}(x_{h})}\,.
    \end{align*}
    Which concludes the proof.
\end{proof}

\begin{lemma}\label{lemma:fixed_size}
    Assume that the size of the action set is fixed equal to $A\geq 2$, then $H\leq\sqrt{\AX}$ and we have the inequality:
    \[\sum_{h=1}^H A^{(h-H)/2}\leq 2+\sqrt{2}\,.\]
\end{lemma}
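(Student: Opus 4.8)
The claim has two parts: first, that $H \leq \sqrt{\AX}$ when the action set has fixed size $A \geq 2$; second, the geometric-sum bound $\sum_{h=1}^H A^{(h-H)/2} \leq 2 + \sqrt{2}$.

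For the first part, the perfect-recall assumption forces the information-set tree to contain at least a full $A$-ary tree of depth $H$. Concretely, as already computed in the fixed-action-set lower bound proof, $X \geq 1 + A + \cdots + A^{H-1} = (A^H-1)/(A-1)$, and correspondingly $\AX = AX \geq A(A^H-1)/(A-1) \geq A^H$. So it suffices to show $H \leq \sqrt{A^H}= A^{H/2}$ for $A \geq 2$, i.e. $H \leq 2^{H/2}$. This is a clean one-variable inequality that holds for all $H \geq 1$ (it can be checked directly for small $H$ and then by an easy induction or by comparing $\log_2 H \leq H/2$). Chaining gives $H \leq A^{H/2} \leq \sqrt{\AX}$.

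For the second part, I would simply reindex the sum. Setting $j = H - h$ so that $h$ running from $1$ to $H$ gives $j$ running from $H-1$ down to $0$, the sum becomes $\sum_{j=0}^{H-1} A^{-j/2} = \sum_{j=0}^{H-1} (A^{-1/2})^j$. This is a finite geometric series with ratio $A^{-1/2} \leq 2^{-1/2} < 1$, so it is bounded above by the infinite sum $\sum_{j=0}^\infty (A^{-1/2})^j = 1/(1 - A^{-1/2})$. Since $A \geq 2$ the ratio $A^{-1/2}$ is largest at $A = 2$, giving the bound $1/(1 - 2^{-1/2}) = \sqrt{2}/(\sqrt{2}-1) = \sqrt{2}(\sqrt{2}+1) = 2 + \sqrt{2}$ after rationalizing. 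Monotonicity in $A$ confirms this is the worst case.

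Neither part is genuinely hard; the only thing to be careful about is the first-part counting step, making sure the perfect-recall lower bound on $\AX$ is invoked correctly so that $\AX \geq A^H$ is justified before passing to the scalar inequality $H \leq A^{H/2}$. The geometric sum is routine. I expect the writeup to be short, with the scalar inequality $H \le 2^{H/2}$ being the one place a reader might want a line of justification.
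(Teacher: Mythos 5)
Your proof of the second inequality (the geometric sum) is correct and is essentially the paper's argument: reindex, dominate by the infinite geometric series with ratio $A^{-1/2}\le 2^{-1/2}$, and compute $1/(1-1/\sqrt{2})=2+\sqrt{2}$ (your computation even fixes a typo in the paper's last line, which writes $2+2\sqrt{2}$). However, the first part has a genuine gap: the scalar inequality you reduce to, $H\le 2^{H/2}$ (equivalently $\log_2 H\le H/2$), is \emph{false} at $H=3$, since $2^{3/2}=2\sqrt{2}\approx 2.83<3$. The failure traces back to your weakening $\AX\ge A(A^H-1)/(A-1)\ge A^H$: for $A=2$ this discards roughly a factor of $2$, and $A^H\ge H^2$ is precisely the inequality that fails at $A=2$, $H=3$ (where $8<9$). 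So the chain $H\le A^{H/2}\le\sqrt{\AX}$ breaks in a case the lemma explicitly covers.

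The paper avoids this by not discarding the lower-order terms of the sum: perfect recall gives $\AX\ge\sum_{h=1}^H A^h\ge\sum_{h=1}^H 2^h=2^{H+1}-2$, and one then checks $2^{H+1}-2\ge H^2$ for all $H\ge 1$ (directly for $H=1,2,3$, and by monotonicity of $(2^{H+1}-2)/H^2$ for $H\ge 3$). Note that your counting step is the same as the paper's --- $X\ge (A^H-1)/(A-1)$, hence $\AX\ge A(A^H-1)/(A-1)=\sum_{h=1}^H A^h$ --- so only the final scalar bound is at fault, and the fix is minimal: keep $2^{H+1}-2$ rather than rounding down to $2^H$, and verify $2^{H+1}-2\ge H^2$ instead of $2^H\ge H^2$.
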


\begin{proof}
The first inequality is simply obtained with
    \[\AX\geq \sum_{h=1}^H A^h\geq \sum_{h=1}^H 2^h\geq (2^{H+1}-2)\geq H^2\]
where the first inequality comes from the fact that, at each time step $h$, the player remembers its $h-1$ past actions, and the last comes from the fact that $(2^{H+1}-2)/H^2$ is increasing for $H\geq 3$ and is more than $1$ for $H=1,2,3$.

For the second inequality we use

\[\sum_{h=1}^H A^{(h-H)/2}\leq \sum_{h=1}^H 2^{(h-H)/2}\leq \sum_{h=0}^{+\infty} A^{h/2}\leq 1/(1-1/\sqrt{2})=2+2\sqrt{2}\,.\]
\end{proof}

We now try to bound the regret. We first decompose it, using the same decomposition as in \citet{zimmert2019optimal} for the expected part.

\begin{lemma}\label{lemma:bal_decompos}
    The regret of \BalancedFTRL can be decomposed into
    
    \begin{align*}
        \regret_T&\leq \underbrace{\sum_{t=1}^T\scal{\w_{1:}^t}{\ell^t-\tell^t}}_{\textrm{BIAS I}}+\underbrace{\max_{\mu^\dagger\in\maxpi}\sum_{t=1}^T\scal{\w_{1:}^\dagger}{\tell^t-\ell^t}}_{\textrm{BIAS II}}\\
        &\qquad+\underbrace{\max_{\w\in\maxpi}\bra{-\Psi(\w^{}_{1:})}}_{\textrm{REG}}+\underbrace{\sum_{t=1}^T\mathcal{D}_{\Psi^\star}\pa{\nabla\Psi(\w_{1:}^t)-\tell^t,\nabla\Psi(\w_{1:}^t)}}_{\textrm{VAR}}\,.
    \end{align*}
    % \todoCo{How to put the name of the terms under the equation?}
    % \todoPi{
    % Like that?
    % \[
    % \underbrace{x^2+3}_{\text{A1}}
    % \]
    % }
    % \todoCo{Yes!}
\end{lemma}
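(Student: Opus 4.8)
The plan is to isolate the bias introduced by replacing the true losses $\ell^t$ with their IX estimates $\tell^t$, and then to control the remaining \emph{estimated regret} with a standard \FTRL potential argument. Starting from $\regret_T=\max_{\mu^\dagger\in\maxpi}\sum_{t=1}^T\scal{\mu_{1:}^t-\mu_{1:}^\dagger}{\ell^t}$, I would add and subtract $\tell^t$ inside each inner product. The term $\sum_{t=1}^T\scal{\mu_{1:}^t}{\ell^t-\tell^t}$ does not depend on $\mu^\dagger$, so it factors out as BIAS~I; applying subadditivity of the maximum to the remainder splits it into BIAS~II, namely $\max_{\mu^\dagger}\sum_t\scal{\mu_{1:}^\dagger}{\tell^t-\ell^t}$, and the estimated regret $\widetilde{\regret}_T:=\max_{\mu^\dagger\in\maxpi}\sum_{t=1}^T\scal{\mu_{1:}^t-\mu_{1:}^\dagger}{\tell^t}$. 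It then remains to prove $\widetilde{\regret}_T\le\mathrm{REG}+\mathrm{VAR}$.

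For that bound I would run the conjugate-duality telescoping for \FTRL. Let $\Psi^\star(\theta):=\sup_{\mu\in\maxpi}\{\scal{\mu_{1:}}{\theta}-\Psi(\mu_{1:})\}$ be the convex conjugate of $\Psi$ restricted to the realization-plan set $\maxpibis$. Since the update~\eqref{eqn:U1} reads $\mu_{1:}^t=\textrm{argmin}_{\mu}\{\scal{\mu_{1:}}{\tL^{t-1}}+\Psi(\mu_{1:})\}$, we have $\mu_{1:}^t=\nabla\Psi^\star(-\tL^{t-1})$ and, with $\tL^0=0$, the boundary value $\Psi^\star(0)=\max_{\mu}[-\Psi(\mu_{1:})]=\mathrm{REG}$. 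Telescoping the potential $\Psi^\star(-\tL^t)$ and expanding each increment through the definition of the Bregman divergence of $\Psi^\star$, using $-\tL^t+\tL^{t-1}=-\tell^t$ and $\nabla\Psi^\star(-\tL^{t-1})=\mu_{1:}^t$, yields
\[\sum_{t=1}^T\scal{\mu_{1:}^t}{\tell^t}=\mathrm{REG}-\Psi^\star(-\tL^T)+\sum_{t=1}^T\mathcal{D}_{\Psi^\star}\pa{-\tL^t,-\tL^{t-1}}.\]
Because $\Psi\le 0$ for both the Tsallis and the Shannon regularizer, evaluating the supremum defining $\Psi^\star(-\tL^T)$ at the minimizer of $\scal{\cdot}{\tL^T}$ gives $-\Psi^\star(-\tL^T)\le\min_{\mu^\dagger}\scal{\mu_{1:}^\dagger}{\tL^T}$, whence $\widetilde{\regret}_T\le\mathrm{REG}+\sum_{t=1}^T\mathcal{D}_{\Psi^\star}(-\tL^t,-\tL^{t-1})$.

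Finally I would rewrite each Bregman increment in the form appearing in VAR. By Fenchel's identity for a Legendre $\Psi$, $\mu_{1:}^t=\nabla\Psi^\star(-\tL^{t-1})$ is equivalent to $\nabla\Psi(\mu_{1:}^t)=-\tL^{t-1}$, so that $-\tL^t=\nabla\Psi(\mu_{1:}^t)-\tell^t$ and each term becomes exactly $\mathcal{D}_{\Psi^\star}(\nabla\Psi(\mu_{1:}^t)-\tell^t,\nabla\Psi(\mu_{1:}^t))$, completing the claimed decomposition.

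The main obstacle is precisely this last identification, because $\maxpibis=(F+u)\cap\R_{\geq 0}^{\AX}$ is not full dimensional: the affine constraints defining the realization-plan polytope prevent the naive stationarity relation $\nabla\Psi(\mu_{1:}^t)=-\tL^{t-1}$ from holding in $\R^{\AX}$. I expect to resolve this by working in the affine hull of $\maxpibis$, equivalently by folding the constraint set into $\Psi$ as an indicator before conjugating: since the losses only act on $\maxpibis$, all inner products depend on a loss vector solely through its component along the subspace $F$, so the Legendre duality and the gradient identity can be read off within that subspace without affecting the value of $\mathcal{D}_{\Psi^\star}$. Strict convexity and nonpositivity of $\Psi$, valid for both entropies as noted after its definition, guarantee that $\Psi^\star$ is differentiable so that all gradients above are well defined.
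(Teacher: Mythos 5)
You follow essentially the same route as the paper: the same add-and-subtract splitting into BIAS~I and BIAS~II, the same conjugate/potential telescoping for the \FTRL iterates, and the same device of absorbing the affine constraints of $\maxpibis=(F+u)\cap\R_{\geq 0}^{\AX}$ into shifts by $F^\perp$. Your ``stationarity only holds modulo the constraints'' fix is exactly the paper's relation $\tL^{t-1}+\nabla\Psi(\w_{1:}^t)+g^t=0$ with $g^t\in F^\perp$, and the Bregman divergence of the \emph{constrained} conjugate is indeed invariant under a common $F^\perp$-shift of both arguments, so that part of your plan is sound.

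There is, however, one genuine gap, and it is distinct from the obstacle you flagged. Your whole argument uses a single function $\Psi^\star$, which you define as the conjugate restricted to the realization-plan set, i.e.\ $\Phi(y)=\sup_{\mu\in\maxpi}\scal{\mu_{1:}}{y}-\Psi(\mu_{1:})$ in the paper's notation; this choice is forced, since your identities $\mu_{1:}^t=\nabla\Psi^\star(-\tL^{t-1})$ and the exact $F^\perp$-invariance are false for any other conjugate. But the $\Psi^\star$ appearing in the lemma's VAR term is a different object: the paper defines $\Psi^\star(y)=\sup_{x\in\R_{\geq 0}^{\AX}}\scal{x}{y}-\Psi(x)$, the conjugate over the whole nonnegative orthant with no flow constraints. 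Your proof therefore ends with $\sum_{t=1}^T\mathcal{D}_{\Phi}\pa{\nabla\Psi(\w_{1:}^t)-\tell^t,\nabla\Psi(\w_{1:}^t)}$, and passing from $\mathcal{D}_{\Phi}$ to $\mathcal{D}_{\Psi^\star}$ is a real step (inequality~(3) in the paper's proof): it uses that $\Phi\leq\Psi^\star$ everywhere, while at the base point $\nabla\Psi(\w_{1:}^t)$ the unconstrained supremum is attained at $\w_{1:}^t\in\maxpibis$, so that the values \emph{and} the gradients of $\Phi$ and $\Psi^\star$ coincide there; only the combination of these two facts gives $\mathcal{D}_{\Phi}\leq\mathcal{D}_{\Psi^\star}$ at these particular arguments. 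The distinction is not cosmetic: the subsequent variance bound (Lemma~\ref{lemma:bal_var}) computes the VAR term through the coordinatewise factorization $\Psi^\star(y)=\sum_{h}\sum_{(x_h,a_h)}\Psi^\star_{x_h,a_h}(y(x_h,a_h))$, which holds for the orthant conjugate but fails for your constrained $\Phi$ because the realization-plan constraints couple the coordinates. So, as written, you prove a tighter variant of the decomposition that the rest of the analysis cannot consume; inserting the comparison step above closes the gap.
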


\begin{proof}
    Let $\mu^\dagger\in\maxpi$ be some realization plan. For all $t\leq T$, we decompose the instantaneous regret against $\mu^\dagger$ at step $t$ into
    \begin{align*}
        \scal{\w_{1:}^t-\w_{1:}^\dagger}{\ell^t}&=\scal{\w_{1:}^t}{\ell^t-\tell^t}+\scal{\w^\dagger}{\tell^t-\ell^t}\\
            &+\bra{\Phi\pa{-\tL^{t-1}}-\Phi\pa{-\tL^t}-\scal{\w_{1:}^\dagger}{\tell^t}}\\
            &+\bra{\scal{\w_{1:}^t}{\tell^t}+\Phi\pa{-\tL^t}-\Phi\pa{-\tL^{t-1}}}
    \end{align*}
    
    where $\Phi(y)=\sup_{\w\in\maxpi} \scal{\w_{1:}}{y}-\Psi(\w_{1:})$.
    
    Summing the first two terms over $t$ gives the two $\textrm{BIAS}$ terms. For the $\textrm{REG}$ term, summing over $t$ yields, by telescoping,
    
    \begin{align*}
        \sum_{t=1}^T\bra{\Phi\pa{-\tL^{t-1}}-\Phi\pa{-\tL^t}-\scal{\w_{1:}^\dagger}{\tell^t}}&=\Phi(0)-\Phi(-\tL^t)-\scal{\w_{1:}^\dagger}{\tL^t}\\
        &\leq \max_{\w\in\maxpi}\bra{-\Psi(\w_{1:})}+\Psi(\w_{1:}^\dagger)\\
        &\leq \max_{\w\in\maxpi}\bra{-\Psi(\w_{1:})}
    \end{align*}
    
    as $\mu_{1:}^\dagger\in\maxpibis$ for the first inequality, and $\Psi$ is a non-positive function for the second inequality.
    
    We then define $\Psi^\star$ the convex conjugate of $\Psi$, with $\Psi^\star(y)=\sup_{x\in \R^{\AX}_{\geq 0}}\scal{x}{y}-\Psi(x)$, and, thanks to the decomposition $\maxpibis= (F+u)\cap\R_{\geq 0}^{\AX}$, get

    \begin{align*}
    \scal{\w_{1:}^t}{\tell^t}+&\Phi\pa{-\tL^t}-\Phi\pa{-\tL^{t-1}}\\
    &=\scal{\w_{1:}^t}{\tell^t}+\Phi\pa{\nabla\Psi(\w_{1:}^t)+g_t-\tell^t}-\Phi\pa{\nabla\Psi(\w_{1:}^t)+g_t}\tag{1}\\
    &=\scal{\w_{1:}^t}{\tell_t}+\Phi\pa{\nabla\Psi(\w_{1:}^t)-\tell^t}-\Phi\pa{\nabla\Psi(\w_{1:}^t)}\tag{2}\\
    &\leq\scal{\w_{1:}^t}{\tell_t}+\Psi^*\pa{\nabla\Psi(\w_{1:}^t)-\tell^t}-\Psi^*\pa{\nabla\Psi(\w_{1:}^t)}\tag{3}\\
    &=\mathcal{D}_{\Psi^\star}\pa{\nabla\Psi(\w_{1:}^t)-\tell^t,\nabla\Psi(\w_{1:}^t)}\,.
\end{align*}

Where we used successively:

(1) As $\w^t=\argmin_{\w\in\maxpi} \scal{\w_{1:}}{\tL^{t-1}}+\Psi(\w_{1:})$, we have
\[\tL^{t-1}+\nabla\Psi\pa{\w_{1:}^t}+g^t=0\]
where $g^t\in F^\perp$.

(2) For all $y\in\R^{\AX}$, $\Phi\pa{y+g^t}=\Phi\pa{y}+\scal{u}{g^t}$.

(3) Because $\Phi$ is a constrained version over $\maxpibis$ of $\Psi^*$, $\Phi\leq\Psi^*$.
And as $\Psi$ is convex, $\w_{1:}^t=\textrm{argmax}_{x\in\R_{\geq 0}^{\AX}}\scal{x}{\nabla\Psi(\mu_{1:}^t)}-\Psi(x)$, which implies that the maxima associated to $\Psi^*(\nabla\Psi(\w_{1:}^t))$ is reached on $\maxpibis$.
\end{proof}

We first give upper bounds on the $\textrm{BIAS}$ terms when using the IX estimations.

\begin{lemma}\label{lemma:bal_bias}
For any sequence $\gamma_h=\gamma>0$, with probability at least $1-2\delta/3$, we have

\[\textrm{BIAS I}\leq H\sqrt{2T\iota} + \gamma  \AX T\qquad \textrm{and}\qquad \textrm{BIAS II}\leq H\frac{\iota}{2\gamma}\,.\]

If the size of the action set is fixed and $\gamma_h=A^{(H-h)/2}\gamma$ instead, with probability at least $1-2\delta/3$, we have

\[\textrm{BIAS I}\leq H\sqrt{2T\iota} + (2+\sqrt{2})\gamma  \AX T\qquad \textrm{and}\qquad \textrm{BIAS II}\leq (1+1/\sqrt{2})\frac{\iota}{\gamma}\,.\]

\end{lemma}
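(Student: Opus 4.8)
The plan is to bound the two bias terms separately, spending a probability budget of $\delta/3$ on each, and to combine the balanced-transition identity of Lemma~\ref{lemma:balanced} with the IX concentration tools of Appendix~\ref{app:general_tools}.

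First I would handle $\textrm{BIAS II}$, which only requires the deviation of the cumulative estimate $\tL^T_h(x_h,a_h)$ from $L^T_h(x_h,a_h)$. For \BalancedFTRL the IX parameter $\gamma^\star_h(x_h)=\gamma_h/p^\star_{1:h}(x_h)$ is fixed in advance, so I apply Lemma~\ref{lemma:concentration} with the constant stopping time $\tau_h(x_h,a_h)=T$ and $\gamma'_h(x_h,a_h)=\gamma^\star_h(x_h)$, then take a union bound over the $\AX$ pairs $(x_h,a_h)$ with $\delta'=\delta/(3\AX)$. This gives, with probability $1-\delta/3$, the simultaneous bound $\tL^T_h(x_h,a_h)-L^T_h(x_h,a_h)\le \iota/(2\gamma^\star_h(x_h))$ for every $(x_h,a_h)$. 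Writing $\textrm{BIAS II}=\max_{\mu^\dagger\in\maxpi}\sum_{h,(x_h,a_h)}\mu^\dagger_{1:h}(x_h,a_h)\bigl(\tL^T_h-L^T_h\bigr)$ and substituting $1/\gamma^\star_h(x_h)=p^\star_{1:h}(x_h)/\gamma_h$, the key simplification is that $\sum_{a_h}\mu^\dagger_{1:h}(x_h,a_h)\le 1$ and $\sum_{x_h\in\cX_h}p^\star_{1:h}(x_h)=1$ (since $p^\star$ is a transition kernel), so each depth contributes at most $\iota/(2\gamma_h)$. Summing over $h$ with $\gamma_h=\gamma$ yields $H\iota/(2\gamma)$; in the fixed-action case $\gamma_h=A^{(H-h)/2}\gamma$ and the geometric bound $\sum_h A^{(h-H)/2}\le 2+\sqrt2$ from Lemma~\ref{lemma:fixed_size} gives $(1+1/\sqrt2)\iota/\gamma$.

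For $\textrm{BIAS I}=\sum_t\scal{\mu^t_{1:}}{\ell^t-\tell^t}$ I would decompose into an expected-bias part and a martingale part. The crucial observation is that, with the filtration of Appendix~\ref{app:general_tools}, both $\mu^t$ (computed from $\tL^{t-1}$) and $\nu^t$ are $\salgebra_{t-1}$-measurable, hence $\scal{\mu^t_{1:}}{\ell^t}$ is deterministic given $\salgebra_{t-1}$ and only $v_t:=\scal{\mu^t_{1:}}{\tell^t}\in[0,H]$ carries the episode-$t$ randomness. Thus
\[\textrm{BIAS I}=\sum_{t=1}^T\scal{\mu^t_{1:}}{\ell^t-\E[\tell^t\mid\salgebra_{t-1}]}+\sum_{t=1}^T\bigl(\E[v_t\mid\salgebra_{t-1}]-v_t\bigr).\]
A direct computation of the conditional expectation gives $\E[\tell^t_h(x_h,a_h)\mid\salgebra_{t-1}]=\tfrac{p^{\nu^t}_{1:h}(x_h)\mu^t_{1:h}(x_h,a_h)}{\mu^t_{1:h}(x_h,a_h)+\gamma^\star_h(x_h)}(1-r^{\nu^t}_h)$, so the per-round bias equals $\sum_{h,(x_h,a_h)}\mu^t_{1:h}(x_h,a_h)\,p^{\nu^t}_{1:h}(x_h)(1-r^{\nu^t}_h)\,\tfrac{\gamma^\star_h(x_h)}{\mu^t_{1:h}+\gamma^\star_h}$. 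Bounding $(1-r^{\nu^t}_h)\le 1$ and $\mu^t_{1:h}/(\mu^t_{1:h}+\gamma^\star_h)\le 1$ leaves $\sum_{h,(x_h,a_h)}p^{\nu^t}_{1:h}(x_h)\gamma^\star_h(x_h)=\sum_h\gamma_h\sum_{x_h}\As{x_h}\,p^{\nu^t}_{1:h}(x_h)/p^\star_{1:h}(x_h)$. Lemma~\ref{lemma:balanced} (its equality for $\gamma_h=\gamma$, its inequality $\le A^{h-H}\AX$ in the fixed-action case) bounds this by $\gamma\AX$ and by $(2+\sqrt2)\gamma\AX$ respectively, so summing over $t$ controls the first sum by $\gamma\AX T$ (resp.\ $(2+\sqrt2)\gamma\AX T$).

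It remains to control the martingale part, which I would do with Lemma~\ref{lemma:azuma} applied to $u_t=v_t\in[0,H]$ and $\delta'=\delta/3$: with probability $1-\delta/3$ it is at most $H\sqrt{2T\log(3/\delta)}\le H\sqrt{2T\iota}$. A final union bound combines the two events into probability $1-2\delta/3$. The main obstacle, and the only place real care is needed, is the $\textrm{BIAS I}$ decomposition: one must correctly identify which quantities are $\salgebra_{t-1}$-measurable so that $\scal{\mu^t_{1:}}{\ell^t}$ cancels in the martingale increment, choose the bounded quantity $v_t=\scal{\mu^t_{1:}}{\tell^t}$ (rather than the unbounded $\tell^t$) as the object of Azuma's inequality, and reduce the per-round bias to exactly the left-hand side of Lemma~\ref{lemma:balanced}; the fixed-action bookkeeping then follows from the exponent cancellation $A^{(H-h)/2}A^{h-H}=A^{(h-H)/2}$.
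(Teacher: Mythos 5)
Your BIAS I analysis and the overall probability bookkeeping are correct and essentially identical to the paper's proof: the same split into a conditional-expectation bias plus a martingale part, Lemma~\ref{lemma:azuma} applied to $\scal{\mu^t_{1:}}{\tell^t}\in[0,H]$ with budget $\delta/3$, and the per-round bias reduced to $\gamma_h\sum_{x_h}\As{x_h}\,p^{\nu^t}_{1:h}(x_h)/p^\star_{1:h}(x_h)$, which Lemma~\ref{lemma:balanced} (its equality, resp.\ its depth-wise inequality combined with Lemma~\ref{lemma:fixed_size}) bounds as required.

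Your BIAS II argument, however, contains a genuine error. You bound $\sum_{(x_h,a_h)\in\cAXh}\mu^\dagger_{1:h}(x_h,a_h)\,p^\star_{1:h}(x_h)$ by first using $\sum_{a_h}\mu^\dagger_{1:h}(x_h,a_h)\le 1$ and then asserting $\sum_{x_h\in\cX_h}p^\star_{1:h}(x_h)=1$ ``since $p^\star$ is a transition kernel''. That identity is false: $p^\star_{1:h}$ is \emph{not} a probability distribution over $\cX_h$. The kernel $p^\star_h(\cdot\,|\,x_h,a_h)$ normalizes to one separately for \emph{each} pair $(x_h,a_h)$, so chaining gives $\sum_{x_{h+1}\in\cX_{h+1}}p^\star_{1:h+1}(x_{h+1})=\sum_{x_h\in\cX_h}\As{x_h}\,p^\star_{1:h}(x_h)$, i.e.\ the total mass multiplies by the number of actions at every depth; for instance, with a single root and constant action size $A$ one gets $\sum_{x_h}p^\star_{1:h}(x_h)=A^{h-1}$. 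With the correct value of this sum, your decoupled bound only yields a per-depth contribution of order $A^{h-1}\iota/(2\gamma_h)$, which does not give the claimed $H\iota/(2\gamma)$. The inequality you need is true, but for a non-decoupled reason: for any $\mu^\dagger\in\maxpi$, the vector $p^\star_{1:h}\cdot\mu^\dagger_{1:h}$ is a probability distribution over $\cAXh$ (the very fact used to define update~\eqref{eqn:U1} in Section~\ref{sec:tree_structure_known}), hence $\sum_{(x_h,a_h)\in\cAXh}\mu^\dagger_{1:h}(x_h,a_h)\,p^\star_{1:h}(x_h)=1$ exactly --- the max-player's realization plan and the balanced transitions normalize \emph{jointly}, not separately. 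This is the step the paper's proof invokes. Substituting it for your two-step bound, the rest of your BIAS II argument (Lemma~\ref{lemma:concentration} with $\delta'=\delta/(3\AX)$ and constant stopping time $T$, the union bound over $\cAX$, and the geometric sum $\sum_h A^{(h-H)/2}\le 2+\sqrt{2}$ in the fixed-action case) goes through and recovers the stated bounds.
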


\begin{proof}
    We can first decompose $\textrm{BIAS I}$ into
    
    \[\textrm{BIAS I}=\sum_{t=1}^T \scal{\w_{1:}^t}{\E[\tell^t | \salgebra_{t-1}]-\tell^t}+\sum_{t=1}^T\scal{\w_{1:}^t}{\ell^t-\E[\tell^t | \salgebra_{t-1}]}\,.\]
    Using Lemma~\ref{lemma:azuma}, as $\scal{\w_{1:}^t}{\tell^t}\leq H$, we can bound the first term with probability $1-\delta/3$ by
    
    \[\sum_{t=1}^T \scal{\w_{1:}^t}{\E[\tell^t | \salgebra{t-1}]-\tell^t}\leq H\sqrt{2T\log(3/\delta)}\leq H\sqrt{2T\iota}\,.\]
    
    The second term can be upper-bounded by using the Lemma~\ref{lemma:balanced} and $\gamma_h=\gamma$ for the last equality,
    
    \begin{align*}
        \sum_{t=1}^T\scal{\w_{1:}^t}{l^t-\E[\tell^t | \salgebra_{t-1}]}&\leq\sum_{t=1}^T\sum_{h=1}^H\sum_{(x_h,a_h)\in\cAXh} p^{\nu^t}_{1:h}(x_h)\mu_{1:h}^t(x_h,a_h)\pa{1-\frac{\mu_{1:h}^t(x_h,a_h)}{\mu_{1:h}^t(x_h,a_h)+\gamma^\star_h(x_h,a_h)}}\\
        &=\sum_{t=1}^T\sum_{h=1}^H\sum_{(x_h,a_h)\in\cAXh}p^{\nu^t}_{1:h}(x_h)\frac{\mu_{1:h}^t(x_h,a_h)\gamma_h(x_h,a_h)}{\mu_{1:h}^t(x_h,a_h)+\gamma^\star_h(x_h,a_h)}\\
        &\leq \sum_{t=1}^T\sum_{h=1}^H\sum_{(x_h,a_h)\in\cAXh}p^{\nu^t}_{1:h}(x_h) \gamma^\star_h(x_h,a_h)\\
        &=\sum_{t=1}^T\sum_{h=1}^H\gamma_h\sum_{x_h\in\cX_h}\As{x_h} \frac{p^{\nu^t}_{1:h}(x_h)}{p^\star_{1:h}(x_h)}\\
        &=\gamma T \AX\,.
    \end{align*}
    
    In the setting with a fixed size of the action set, using the depth-wise inequality of Lemma~\ref{lemma:balanced}, and Lemma~\ref{lemma:fixed_size} for the last inequality, we can get instead 
    \begin{align*}
        \sum_{t=1}^T\scal{\w_{1:}^t}{l^t-\E[\tell^t | \salgebra_{t-1}]}&\leq\sum_{t=1}^T\sum_{h=1}^H\gamma_h\sum_{x_h\in\cX_h}A \frac{p^{\nu^t}_{1:h}(x_h)}{p^\star_{1:h}(x_h)}\\
        &\leq \gamma \AX\sum_{t=1}^T\sum_{h=1}^H A^{(H-h)/2}A^{h-H}\\
        &= \gamma \AX T\sum_{h=1}^H A^{(h-H)/2}\\
        &\leq (2+\sqrt{2})\gamma \AX T\,.
    \end{align*}
    
    In order to bound $\textrm{BIAS II}$, we can use Lemma~\ref{lemma:concentration}. Indeed for all $(x_h,a_h)\in\cAXh$, using $\delta'=\delta/(3\AX)$, the constant stopping time $\tau_h(x_h,a_h)=T$ and $\gamma'_h(x_h,a_h)=\gamma^\star_h(x_h,a_h)$, we have, as $\iota=\log(3A_{\cX}/\delta)$, with probability  at least $1-\delta/(3\AX)$,
    
    \[\tL_h^T(x_h,a_h)-L_h^T(x_h,a_h) \leq \frac{\iota}{2\gamma^\star_h(x_h,a_h)}\,.\]
    
    By a union bound the inequality holds for all $(x_h,a_h)\in\cAXh$ with probability at least  $1-\delta/3$. In this case, we get the bound, for any $\mu^\dagger\in\maxpi$, whith $\gamma_h=\gamma$
    \begin{align*}
    \sum_{t=1}^T\scal{\w_{1:}^\dagger}{\tell^t-\ell^t}
    &=\sum_{h=1}^H\sum_{(x_h,a_h)\in\cAXh} \mu_h^\dagger(x_h,a_h) \pa{\tL_h^T(x_h,a_h)-L_h^T(x_h,a_h)}\\
    &\leq \iota \sum_{h=1}^H\sum_{(x_h,a_h)\in\cAXh} \frac{\mu_h^\dagger(x_h,a_h)}{2\gamma^\star_h(x_h,a_h)}\\
    &=\frac{\iota}{2} \sum_{h=1}^H\frac{1}{\gamma_h}\sum_{(x_h,a_h)\in\cAXh} \mu_h^\dagger(x_h,a_h) p^\star_{1:h}(x_h)\\
    &=\frac{\iota}{2} \sum_{h=1}^H\frac{1}{\gamma_h}\\
    &=\frac{\iota}{2\gamma}H
    \end{align*}

which gives the bound on $\textrm{BIAS II}$ by taking the maximum over $\mu^\dagger\in\maxpi$. In the second setting we have instead, again with Lemma~\ref{lemma:fixed_size}

\[\sum_{t=1}^T\scal{\w_{1:}^\dagger}{\tell^t-\ell^t}\leq\frac{\iota}{2} \sum_{h=1}^H\frac{1}{\gamma_h}= \frac{\iota}{2\gamma}\sum_{h=1}^H A^{(h-H)/2}\leq (1+1/\sqrt{2})\frac{\iota}{\gamma}\,.\]
\end{proof}

Depending on the regularizer used in \BalancedFTRL, we then state some upper bounds of the \textrm{REG} term. 

\begin{lemma}{\label{lemma:bal_reg}} Assume that $\eta_h=\eta$, then with Tsallis entropy
\[\textrm{REG}\leq \frac{H^\tsallis}{\eta}\AX^{1-\tsallis}\,,\]
and with Shannon entropy
\[\textrm{REG}\leq \frac{H}{\eta}\log(\AX)\,.\]

Furthermore, if the size of the action set is fixed and $\eta_h=A^{(H-h)/2}\eta$ instead, we have with Shannon entropy
\[\textrm{REG}\leq \frac{2+\sqrt{2}}{\eta}\log(\AX)\,.\]

\end{lemma}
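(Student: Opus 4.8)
The plan is to exploit the fact that, by construction, $p^\star$ is a transition kernel on $\cX$, so that for each depth $h$ the vector $w_h := p^\star_{1:h}\cdot\mu_{1:h}$ is a genuine probability distribution over the finite set $\cAXh$, whose cardinality is $|\cAXh| = \sum_{x_h\in\cX_h}\As{x_h}$ with $\sum_{h=1}^H |\cAXh| = \AX$. Since $\Psi(\mu_{1:}) = \sum_{h=1}^H \Psi_h(w_h)/\eta_h$ with each $\Psi_h$ supported on the simplex $\Delta(\cAXh)$, and since the coupling constraints defining $\maxpibis$ can only shrink the feasible set, I would first relax the maximization and bound
\[
\textrm{REG} = \max_{\mu\in\maxpi}\bra{-\Psi(\mu_{1:})} \leq \sum_{h=1}^H \frac{1}{\eta_h}\max_{w_h\in\Delta(\cAXh)}\bra{-\Psi_h(w_h)}.
\]
This decouples the problem into $H$ independent simplex maximizations.

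Next I would compute each per-depth maximum, both being attained at the uniform distribution. For the Tsallis regularizer, concavity of $z\mapsto z^\tsallis$ on $\R_{\geq 0}$ together with Jensen's inequality gives $\sum_{(x_h,a_h)} w_h(x_h,a_h)^\tsallis \leq |\cAXh|^{1-\tsallis}$. For the Shannon regularizer, the maximum of the entropy $-\sum_{(x_h,a_h)} w_h\log w_h$ over $\Delta(\cAXh)$ is $\log|\cAXh|$. Plugging these in with the constant learning rate $\eta_h = \eta$ yields $\textrm{REG}\leq\frac{1}{\eta}\sum_{h=1}^H |\cAXh|^{1-\tsallis}$ and $\textrm{REG}\leq\frac{1}{\eta}\sum_{h=1}^H\log|\cAXh|$ respectively.

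Finally I would combine the depth contributions. For Tsallis, a second application of Jensen to the concave map $z\mapsto z^{1-\tsallis}$ with uniform weights over $h\in[H]$ gives $\frac{1}{H}\sum_{h=1}^H |\cAXh|^{1-\tsallis}\leq \pa{\frac{1}{H}\sum_{h=1}^H|\cAXh|}^{1-\tsallis} = \pa{\AX/H}^{1-\tsallis}$, hence $\sum_{h=1}^H |\cAXh|^{1-\tsallis}\leq H^\tsallis\AX^{1-\tsallis}$, which is exactly the claimed bound. For Shannon, the cruder inequality $|\cAXh|\leq\AX$ for every $h$ suffices, giving $\sum_{h=1}^H\log|\cAXh|\leq H\log\AX$. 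In the fixed-action-set regime the per-depth bound $\log|\cAXh|\leq\log\AX$ is combined with the geometric learning rate $\eta_h = A^{(H-h)/2}\eta$, so that $\textrm{REG}\leq\frac{\log\AX}{\eta}\sum_{h=1}^H A^{(h-H)/2}$, and Lemma~\ref{lemma:fixed_size} bounds the geometric sum by $2+\sqrt{2}$.

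The computations are all elementary; the only genuinely load-bearing point, and thus the step I would treat most carefully, is the reduction in the first paragraph. One must verify that $w_h$ really is normalized on $\cAXh$, which uses that $p^\star$ is a transition \emph{kernel} (so the masses sum correctly over successors), not merely a nonnegative weighting, and that dropping the realization-plan constraints linking consecutive depths is harmless because the relaxation only increases the maximum. Everything downstream is two invocations of Jensen's inequality plus the geometric-series estimate of Lemma~\ref{lemma:fixed_size}.
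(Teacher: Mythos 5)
Your proof is correct and follows essentially the same route as the paper: both arguments hinge on the fact that $p^\star_{1:h}\cdot\mu_{1:h}$ is a probability distribution on $\cAXh$, bound each depth's (negative) regularizer by its maximum over the simplex (attained at uniform), and finish the fixed-action-set case with the geometric sum of Lemma~\ref{lemma:fixed_size}. The only cosmetic difference is in the Tsallis term, where you apply Jensen's inequality twice (within each depth, then across depths to the counts $|\cAXh|$) while the paper applies a single H\"older inequality to the combined sum over $h$ and $(x_h,a_h)$; these are equivalent power-mean arguments and yield the identical bound $H^\tsallis\AX^{1-\tsallis}/\eta$.
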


\begin{proof}
    The upper-bound with Tsallis entropy is a consequence of Hölder inequality, as for all $\w_{1:}\in\maxpibis$,
    
    \begin{align*}
        -\Psi(\w_{1:})&=\sum_{h=1}^H\sum_{(x_h,a_h)\in\cAXh}\frac{1}{\eta_h}\pa{p^\star_{1:h}(x_h) \mu_{1:h}(x_h,a_h)}^\tsallis\\
        &= \frac{1}{\eta}\sum_{h=1}^H\sum_{(x_h,a_h)\in\cAXh}\pa{p^\star_{1:h}(x_h) \mu_{1:h}(x_h,a_h)}^\tsallis\\
        &\leq \frac{1}{\eta}\pa{\sum_{h=1}^H\sum_{(x_h,a_h)\in\cAXh}p^\star_{1:h}(x_h) \mu_{1:h}(x_h,a_h)}^\tsallis \pa{\sum_{h=1}^H\sum_{(x_h,a_h)\in\cAXh}1}^{1-\tsallis}\\
        &= \frac{H^\tsallis}{\eta}\AX^{1-\tsallis}\,.
    \end{align*}
    
    When using Shannon entropy, we use that $p^\star_{1:h}\cdot\mu_{1:h}$ is a probability distribution on $\cAXh$ for all $h\in[H]$, and get with Jensen inequality
    \begin{align*}
        -\Psi(\w)&\leq\sum_{h=1}^H \frac{-1}{\eta_h}\Psi_h(p^\star_{1:h}.\mu_{1:h})\\
        &\leq \sum_{h=1}^H \frac{1}{\eta_h}\log(\AXh)\\
        &\leq \sum_{h=1}^H \frac{1}{\eta_h}\log(\AX)\\
        &=\frac{H}{\eta}\log(\AX)\,.
    \end{align*}
    
    If the size of the action set is fixed and $\eta_h=A^{(H-h)/2}\eta$, we instead get with Lemma~\ref{lemma:fixed_size}
    
    \[-\Psi(\w)\leq \sum_{h=1}^H \frac{1}{\eta_h}\log(\AX)= \sum_{h=1}^H \frac{A^{(h-H)/2}}{\eta}\log(\AX)\leq \frac{2+\sqrt{2}}{\eta}\log(\AX)\,.\]
\end{proof}

Upper bounding the \textrm{VAR} term is a little harder, but the idea is the same between the two regularizers, with each term of the sum over time being bounded separately in expectation.

\begin{lemma}\label{lemma:bal_var}

Let $v_t=\mathcal{D}_{\Psi^\star}\pa{\nabla\Psi(\w_{1:}^t)-\tell^t,\nabla\Psi(\w_{1:}^t)}$ for all $t\in[T]$. Then with a probability at least $1-\delta/3$,

\[\textrm{VAR}\leq  \sum_{t=1}^T\E[v_t | F_{t-1}]+H\sqrt{2T\iota}\,.\]

Furthermore, if $\eta_h=\eta$, we have with Tsallis entropy

\[\sum_{t=1}^T\E[v_t | F_{t-1}]\leq  \eta\frac{T\AX^\tsallis}{2\tsallis(1-\tsallis)}H^{1-\tsallis}\]

and with Shannon entropy

\[\sum_{t=1}^T\E[v_t | F_{t-1}]\leq  \frac{\eta}{2}T\AX\,.\]

If $\eta_h=A^{(H-h)/2}\eta$ and the size of the action set is fixed to $A$ instead, we then have with Shannon entropy
\[\sum_{t=1}^T\E[v_t | F_{t-1}]\leq  (1+1/\sqrt{2})\eta T\AX\,.\]

\end{lemma}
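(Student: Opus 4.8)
I would split the statement into two independent pieces: a martingale concentration step that converts $\textrm{VAR}=\sum_t v_t$ into $\sum_t\E[v_t\mid\salgebra_{t-1}]$ plus a deviation term, and a per-round curvature bound on each $\E[v_t\mid\salgebra_{t-1}]$ that specializes to the three regularizer choices. The whole computation rests on the fact that, \emph{in the coordinates of the realization plan}, the regularizer $\Psi(m)=\sum_{h,(x_h,a_h)}\tfrac1{\eta_h}\psi\pa{p^\star_{1:h}(x_h)\,m_h(x_h,a_h)}$ is \emph{separable}, with $\psi(z)=-z^{\tsallis}$ (Tsallis) or $\psi(z)=z\log z$ (Shannon); hence both $\Psi^\star$ and $\mathcal{D}_{\Psi^\star}$ decompose coordinatewise.

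\textbf{Concentration.} I would apply Lemma~\ref{lemma:azuma} to $u_t=v_t$, which needs $0\le v_t\le H$. Non-negativity is immediate since $\mathcal{D}_{\Psi^\star}$ is a Bregman divergence of the convex $\Psi^\star$. For the upper bound, set $\theta^t:=\nabla\Psi(\w_{1:}^t)$, so that $\nabla\Psi^\star(\theta^t)=\w_{1:}^t$ and $v_t=\Psi^\star(\theta^t-\tell^t)-\Psi^\star(\theta^t)+\scal{\w_{1:}^t}{\tell^t}$. Because $\Psi$ is supported on $\R^{\AX}_{\ge 0}$, its conjugate $\Psi^\star(\theta)=\sup_{x\ge 0}\scal{x}{\theta}-\Psi(x)$ is nondecreasing in each coordinate, and since $\tell^t\ge 0$ this gives $\Psi^\star(\theta^t-\tell^t)\le\Psi^\star(\theta^t)$, hence $v_t\le\scal{\w_{1:}^t}{\tell^t}$. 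Each visited coordinate contributes $\mu_{1:h}^t(x_h^t,a_h^t)\tell_h^t(x_h^t,a_h^t)=\tfrac{\mu_{1:h}^t(x_h^t,a_h^t)}{\mu_{1:h}^t(x_h^t,a_h^t)+\gamma_h^\star(x_h^t)}(1-r_h^t)\le 1$, so $\scal{\w_{1:}^t}{\tell^t}\le H$ and $0\le v_t\le H$. Lemma~\ref{lemma:azuma} with $\delta'=\delta/3$ then yields the stated high-probability inequality.

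\textbf{Curvature bound.} Using separability and a second-order Taylor expansion of the Bregman divergence, the remainder is evaluated at some $\xi^t\in[\theta^t-\tell^t,\theta^t]$. Since $\tell^t\ge 0$ forces $\xi^t\le\theta^t$ coordinatewise, and for both regularizers $(\psi^\star)''$ is increasing along the primal--dual map, I may replace $\xi^t$ by $\theta^t$ (where the primal point is $z_i=p^\star_{1:h}(x_h)\mu_{1:h}^t(x_h,a_h)$ and $(\psi^\star)''=1/\psi''(z_i)$), only increasing each term. This yields $v_t\le\tfrac12\sum_{h,(x_h,a_h)}\tfrac{\eta_h}{p^\star_{1:h}(x_h)^2\,\psi''(z_i)}\pa{\tell_h^t(x_h,a_h)}^2$. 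Taking $\E[\cdot\mid\salgebra_{t-1}]$ and using $\P(x_h^t=x_h,a_h^t=a_h\mid\salgebra_{t-1})=p^{\nu^t}_{1:h}(x_h)\mu_{1:h}^t(x_h,a_h)$ with $(1-r_h^t)^2\le 1$ gives $\E[(\tell_h^t(x_h,a_h))^2\mid\salgebra_{t-1}]\le p^{\nu^t}_{1:h}(x_h)/\mu_{1:h}^t(x_h,a_h)$. For Tsallis, $1/\psi''(z)=z^{2-\tsallis}/(\tsallis(1-\tsallis))$, and simplifying leaves a sum of terms $\pa{p^{\nu^t}_{1:h}(x_h)/p^\star_{1:h}(x_h)}^{\tsallis}\pa{\mu_{1:h}^t(x_h,a_h)p^{\nu^t}_{1:h}(x_h)}^{1-\tsallis}$; Hölder with exponents $1/\tsallis,1/(1-\tsallis)$ bounds it by $\AX^{\tsallis}H^{1-\tsallis}$, since $\sum_{h,x_h}\As{x_h}p^{\nu^t}_{1:h}(x_h)/p^\star_{1:h}(x_h)=\AX$ by Lemma~\ref{lemma:balanced} and $p^{\nu^t}_{1:h}\cdot\mu_{1:h}^t$ is a distribution on $\cAXh$. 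For Shannon, $1/\psi''(z)=z$, the $p^\star_{1:h}$ factors collapse, and $\E[v_t\mid\salgebra_{t-1}]\le\tfrac{\eta}{2}\sum_{h,x_h}\As{x_h}p^{\nu^t}_{1:h}(x_h)/p^\star_{1:h}(x_h)=\tfrac{\eta}{2}\AX$ directly by Lemma~\ref{lemma:balanced}. The fixed-action case is identical, keeping $\eta_h=A^{(H-h)/2}\eta$, invoking the depth-wise inequality $\sum_{x_h}\As{x_h}p^{\nu^t}_{1:h}(x_h)/p^\star_{1:h}(x_h)\le A^{h-H}\AX$ of Lemma~\ref{lemma:balanced} and then $\sum_h A^{(h-H)/2}\le 2+\sqrt{2}$ from Lemma~\ref{lemma:fixed_size}. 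Summing over $t\in[T]$ produces all three expectation bounds.

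\textbf{Main obstacle.} The delicate point is the curvature step: the IX estimates $\tell^t$ are as large as $1/\mu_{1:h}^t$, so the Taylor remainder is a priori evaluated at $\xi^t$ far from $\theta^t$. Justifying the replacement of $\xi^t$ by the current iterate is the crux, and it is precisely here that the monotonicity of $(\psi^\star)''$ combined with the sign constraint $\tell^t\ge 0$ (which gives $\xi^t\le\theta^t$) is indispensable; this is also the only place the explicit form of $\psi$ enters before the closing Hölder/balanced-transition computation.
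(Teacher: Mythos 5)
Your proposal is correct and takes essentially the same route as the paper's proof: the identical concentration step (non-negativity of the Bregman divergence, $v_t\le\scal{\w_{1:}^t}{\tell^t}\le H$ via monotonicity of $\Psi^\star$ on the non-negative orthant, then Lemma~\ref{lemma:azuma}), and the identical expectation bound finished with $\E[(\tell_h^t)^2\mid\salgebra_{t-1}]\le p^{\nu^t}_{1:h}(x_h)/\mu^t_{1:h}(x_h,a_h)$, H\"older, Lemma~\ref{lemma:balanced}, and Lemma~\ref{lemma:fixed_size}. The only (cosmetic) difference is the mechanism for the per-round curvature bound: you use a coordinatewise mean-value expansion of $\mathcal{D}_{\Psi^\star}$ together with monotonicity of $(\psi^\star)''$ along the segment $\xi^t\le\theta^t$, whereas the paper integrates $f_t'(u)$ for $f_t(u)=\mathcal{D}_{\Psi^\star}\pa{\nabla\Psi(\w_{1:}^t)-u\tell^t,\nabla\Psi(\w_{1:}^t)}$ and uses the scalar inequalities $(1+y)^{-1/(1-\tsallis)}\ge 1-y/(1-\tsallis)$ and $e^{-y}\ge 1-y$ --- two equivalent expressions of the same curvature comparison, yielding the same quadratic bound.
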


\begin{proof}

We can first notice, as $\Psi$ is supported on $\R_{\geq 0}^{\AX}$ and the estimated losses $\tell^t$ are non-negative, that

\[v_t=\scal{\w_{1:}^t}{\tell^t}+\Psi^\star\pa{\nabla\Psi(\w_{1:}^t)-\tell^t}-\Psi^\star\pa{\nabla\Psi(\w_{1:}^t)}\leq \scal{\w_{1:}^t}{\tell^t}\leq H\,.\]

Which lets us use Lemma~\ref{lemma:azuma} to get, with a probability at least $1-\delta/3$,

\[\sum_{t=1}^T \big(v_t-\E[v_t | \salgebra_{t-1}]\big)\leq \sqrt{2T\log(3/\delta)}\leq \sqrt{2T\iota}\,.\]

And obtain the first inequality with the Doob decomposition of the random process $(\sum_{k=1}^t v_k)_{t\in[T]}$, as $\textrm{VAR}=\sum_{k=1}^T v_k$ .

For the upper-bounds of $\sum_{t=1}^T\E\bra{v_t | F_{t-1}}$, we define for all $t\in[T]$, $f_t(u)=\mathcal{D}_{\Psi^\star}\pa{\nabla\Psi(\w_{1:}^t)-u\tell^t,\nabla\Psi(\w_{1:}^t)}$ for $u\in[0,1]$, such that $f_t(0)=0$ and $f_t(1)=v_t$. As for both entropy, $\Psi(\mu_{1:})$ (respectively $\Psi^\star(y)$) can be decomposed into $\Psi(\mu_{1:})=\sum_{h=1}^H\sum_{(x_h,a_h)\in\cAXh}\Psi_{x_h,a_h}(\mu_{1:h}(x_h,a_h))$ (respectively $\Psi^\star(y)=\sum_{h=1}^H\sum_{(x_h,a_h)\in\cAXh}\Psi^\star_{x_h,a_h}(y(x_h,a_h))$), the derivative of $f_t$ can be expressed with
\begin{equation}\label{eq:derivh}
    f_t'(u)=\sum_{h=1}^H\sum_{(x_h,a_h)\in\cAXh}\tell_h^t(x_h,a_h)\bra{\mu_{1:h}^t(x_h,a_h)-\nabla{\Psi^\star_{x_h,a_h}}\pa{\nabla\Psi_{x_h,a_h}(\mu_{1:h}^t(x_h,a_h))-u\tell^t_h(x_h,a_h)}}\,.
\end{equation}

When using Tsallis entropy, we have
\begin{align*}
    \nabla{\Psi_{x_h,a_h}}(\mu_{1:h}(x_h,a_h))&=- \frac{\tsallis}{\eta_h}p^\star_{1:h}(x_h)^{\tsallis}\mu_{1:h}(x_h,a_h)^{\tsallis-1}\quad\\
    \nabla{{\Psi^\star_{x_h,a_h}}}(y(x_h,a_h))&=\pa{-\frac{\eta_h y(x_h,a_h)}{\tsallis p^\star_{1:h}(x_h)^{\tsallis}}}^{\frac{-1}{1-\tsallis}}
\end{align*}

which yields
\begin{align*}
    \nabla{\Psi^\star_{x_h,a_h}}&\pa{\nabla\Psi_{x_h,a_h}(\mu_{1:h}^t(x_h,a_h))-u\tell^t_h(x_h,a_h)}\\
    &= \bra{\frac{-\eta_h}{\tsallis p^\star_{1:h}(x_h)^{\tsallis}}\pa{\frac{-\tsallis}{\eta_h}p^\star_{1:h}(x_h)\mu_{1:h}^t(x_h,a_h)^{\tsallis-1}-u\tell_h^t(x_h,a_h)}}^{\frac{-1}{1-\tsallis}}\\
    &=\mu_{1:h}^t(x_h,a_h)\bra{1-u\frac{\eta_h\tell_h^t(x_h,a_h)}{\tsallis p^\star_{1:h}(x_h)^{\tsallis}}\mu_{1:h}^t(x_h,a_h)^{1-\tsallis}}^{\frac{-1}{1-\tsallis}}\\
    &\geq \mu_{1:h}^t(x_h,a_h)\bra{1-u\frac{\eta_h\tell_h^t(x_h,a_h)}{\tsallis(1-\tsallis) p^\star_{1:h}(x_h)^{\tsallis}}\mu_{1:h}^t(x_h,a_h)^{1-\tsallis}}
\end{align*}

where we used $\pa{1+y}^{\frac{-1}{1-\tsallis}}\geq 1-\frac{y}{1-\tsallis}$ for $y\geq 0$. Plugging this in \eqref{eq:derivh}, along with $\tell_h^t(x_h,a_h)\mu_{1:h}^t(x_h,a_h)\leq \indic{x_h^t=x_h,a_h^t=a_h}$, gives

\begin{align*}
    f'_t(u)&\leq u\sum_{h=1}^H\sum_{(x_h,a_h)\in\cAXh}\tell_h^t(x_h,a_h)\mu_{1:h}^t(x_h,a_h)\frac{\eta_h\tell_h^t(x_h,a_h)}{\tsallis(1-\tsallis) p^\star_{1:h}(x_h)^{\tsallis}}\mu_{1:h}^t(x_h,a_h)^{1-\tsallis}\\
    &\leq u\sum_{h=1}^H\sum_{(x_h,a_h)\in\cAXh} \indic{x_h^t=x_h,a_h^t=a_h}\frac{\eta_h}{\tsallis(1-\tsallis) p^\star_{1:h}(x_h)^{\tsallis}}\mu_{1:h}^t(x_h,a_h)^{-\tsallis}\,.
\end{align*}

Summing as $u$ goes from $0$ to $1$ and conditioning yields
\begin{align*}
    \E[v_t | \salgebra_{t-1}]&\leq \sum_{h=1}^H\sum_{(x_h,a_h)\in\cAXh}\E\bra{ \indic{x_h^t=x_h,a_h^t=a_h}\frac{\eta_h}{2\tsallis(1-\tsallis) p^\star_{1:h}(x_h)^\tsallis}\mu_{1:h}^t(x_h,a_h)^{-\tsallis}|\salgebra_{t-1}}\\
    &=\sum_{h=1}^H\sum_{(x_h,a_h)\in\cAXh}\frac{\eta_h p_{1:h}^{\nu^t}(x_h)}{2\tsallis(1-\tsallis) p^\star_{1:h}(x_h)^{\tsallis}}\mu_{1:h}^t(x_h,a_h)^{1-\tsallis}\\
    &\leq \frac{\eta}{2\tsallis(1-\tsallis)}\pa{\sum_{h=1}^H\sum_{(x_h,a_h)\in\cAXh}\frac{p_{1:h}^{\nu^t}(x_h)}{p_{1:h}^\star(x_h)}}^\tsallis\pa{\sum_{h=1}^H\sum_{(x_h,a_h)\in\cAXh}p_{1:h}^{\nu^t}(x_h)\mu_{1:h}^t(x_h,a_h)}^{1-\tsallis}\\
    &=\frac{\eta H^{1-\tsallis}}{2\tsallis(1-\tsallis)}\AX^\tsallis
\end{align*}

where we used $\eta_h=\eta$, the Hölder inequality, and Lemma~\ref{lemma:balanced} for the last equality.

When using Shannon entropy, we have
\begin{align*}
    \nabla{\Psi_{x_h,a_h}}(\mu_{1:h}(x_h,a_h))&= \frac{p^\star_{1:h}(x_h)}{\eta_h}\bra{\log(p^\star_{1:h}(x_h)\mu_{1:h}(x_h,a_h))-1}\quad\\
    \nabla{{\Psi^\star_{x_h,a_h}}}(y(x_h,a_h))&=\exp\bra{\frac{\eta_h}{p_{1:h}^\star(x_h)}\pa{y(x_h,a_h)}+1-\log(p_{1:h}^\star(x_h))}
\end{align*}
and
\begin{align*}
    \nabla{\Psi^\star_{x_h,a_h}}&\pa{\nabla\Psi_{x_h,a_h}(\mu_{1:h}^t(x_h,a_h))-u\tell^t_h(x_h,a_h)}\\
    &= \exp\bra{\frac{\eta_h}{p_{1:h}^\star(x_h)}\pa{\frac{p^\star_{1:h}(x_h)}{\eta_h}\log(p^\star_{1:h}(x_h)\mu_{1:h}^t(x_h,a_h))-u\tell^t_h(x_h,a_h)}-\log(p_{1:h}^\star(x_h))}\\
    &=\mu_{1:h}^t(x_h,a_h)\exp{\bra{-u\frac{\eta_h\tell_h^t(x_h,a_h)}{p^\star_{1:h}(x_h)}}}\\
    &\geq \mu_{1:h}^t(x_h,a_h)\bra{1-u\frac{\eta_h\tell_h^t(x_h,a_h)}{p^\star_{1:h}(x_h)}}\,.
\end{align*}

Using this time  $1-y\geq \exp(-y)$ for all $y\geq 0$. With \eqref{eq:derivh}, this gives
\begin{align*}
    h_t'(u)&\leq u\sum_{h=1}^H\sum_{(x_h,a_h)\in\cAXh}\tell^t_h(x_h,a_h)\mu_{1:h}^t(x_h,a_h)\frac{\eta_h\tell_h^t(x_h,a_h)}{p^\star_{1:h}(x_h)}\\
    &\leq u\sum_{h=1}^H\sum_{(x_h,a_h)\in\cAXh}\indic{x_h^t=x_h,a_h^t=a_h}\frac{\eta_h}{p^\star_{1:h}(x_h)\mu_{1:h}^t(x_h,a_h)}\,.
\end{align*}

Again, summing from $0$ to $1$, using $\eta_h=\eta$ and Lemma~\ref{lemma:balanced}, leads to
\begin{align*}
    \E[v_t | \salgebra_{t-1}]&\leq \sum_{h=1}^H\sum_{(x_h,a_h)\in\cAXh}\frac{\eta_h p^{\nu^t}_{1:h}(x_h,a_h)}{2p^\star_{1:h}(x_h)}\\
    &=\frac{\eta}{2} \AX\,.
\end{align*}

If the size of the action set is fixed and $\eta_h=A^{(H-h)/2}\eta$, we have instead, still with Shannon entropy,
\begin{align*}
    \E[v_t | \salgebra_{t-1}]&\leq \sum_{h=1}^H\sum_{(x_h,a_h)\in\cAXh}\frac{\eta_h p^{\nu^t}_{1:h}(x_h,a_h)}{2p^\star_{1:h}(x_h)}\\
    &\leq \frac{\eta}{2}\AX \sum_{h=1}^H A^{(h-H)/2}\\
    &\leq (1+1/\sqrt{2})\eta\AX
\end{align*}

where we used the depth-wise inequality of Lemma~\ref{lemma:balanced} and Lemma~\ref{lemma:fixed_size}.

\end{proof}

We can now prove the main theorem of Section~\ref{sec:tree_structure_known} using the previous lemmas.

\thmbal*

\begin{proof}

    From the decomposition of Lemma~\ref{lemma:bal_decompos}, we have
    
    \[\regret_\mathrm{max}^T\leq \textrm{BIAS I}+\textrm{BIAS II}+\textrm{REG}+\textrm{VAR}\,.\]
    
    Lemma~\ref{lemma:bal_bias} directly yields with probability  at least $1-2\delta/3$
    
    \[\textrm{BIAS I}+\textrm{BIAS II}\leq H\sqrt{2T\iota} + \gamma T \AX+H\frac{\iota}{2\gamma}\]
    
    that is minimized with $\gamma=\sqrt{{H\iota}/{(2\AX T)}}$.
    
    Then, using Tsallis entropy we have for the two other terms, from Lemma~\ref{lemma:bal_bias} and Lemma~\ref{lemma:bal_var}, with probability at least $1-\delta/3$
    \[\textrm{REG}+\textrm{VAR}\leq H\sqrt{2\iota T}+\frac{H^\tsallis}{\eta}\AX^{1-\tsallis}+\eta\frac{T\AX^\tsallis}{2\tsallis(1-\tsallis)}H^{1-\tsallis}\]
    
    minimized with $\eta=\sqrt{{2\tsallis(1-\tsallis)}/{T}}\:H^{\tsallis-1/2}\AX^{1/2-\tsallis}$. With the trivial upperbound $H\leq \AX$, we then get by summing the two previous inequalities, with a probability at least $1-\delta$
    
    \[\regret_\mathrm{max}^T\leq 3\sqrt{2H\iota \AX T}+\sqrt{\frac{2}{\tsallis(1-\tsallis)}H\AX T}\,.\]
    
    When using Shannon entropy, we instead have (with a probability at least $1-\delta/3$) from the same lemmas:
    
    \[\textrm{REG}+\textrm{VAR}\leq H\sqrt{2\iota T}+\frac{H}{\eta}\log(\AX)+\frac{\eta}{2} \AX T\]
    
    minimized with $\eta=\sqrt{{2H\log(\AX)}\pa{T\AX}}$, which in this case yields by summing, again with probability at least $1-\delta$,
    
    \[\regret_\mathrm{max}^T\leq 3\sqrt{2HT\AX\iota}+\sqrt{2HT\AX\log(\AX)}\,.\]
\end{proof}

In the special case of a fixed action set considered in the previous lemma, we can get a better rate for the regret of the max-player, here shown for Shannon entropy.

\thmfixed*

\begin{proof}
    The idea is the same as in the previous theorem, but using instead the upper bounds obtained with a fixed size of the action set that all holds with probability at least $1-\delta$, as both $\eta_h$ and $\gamma_h$ parameters are in the form $\eta_h=A^{(H-h)/2}\eta$ and $\gamma_h=A^{(H-h)/2}\gamma$. The case $A=1$ is ignored, as the regret is then trivially $0$. With Lemma~\ref{lemma:bal_bias}, we now have:
    
    \[\textrm{BIAS I}+\textrm{BIAS II}\leq H\sqrt{2T\iota} + (2+\sqrt{2})\gamma T \AX+(1+1/\sqrt{2})\frac{\iota}{\gamma}\]
    
    minimized with $\gamma=\sqrt{\iota/(2T\AX)}$. The $\textrm{REG}$ and $\textrm{VAR}$ terms are in this case upper bounded, using Lemma~\ref{lemma:bal_reg} and Lemma~\ref{lemma:bal_var}, by
    
    \[\textrm{REG}+\textrm{VAR}\leq H\sqrt{2T\iota}+\frac{2+\sqrt{2}}{\eta}\log(\AX)+(1+1/\sqrt{2})\eta T\AX\]
    
    minimized with $\eta=\sqrt{2\log(\AX)/(T\AX)}$. By summing everything, we now get, using this time  the better inequality $H\leq \sqrt{\AX}$ from Lemma~\ref{lemma:fixed_size},
    
    \[\regret_{\mathrm{max}}^T\leq (2+4\sqrt{2})\sqrt{T\AX\iota} +(2+2\sqrt{2})\sqrt{T\AX\log(\AX)}\]
    
    and we conclude using $2\sqrt{2}\leq 3$ for readability.
\end{proof}

\section{Efficient updates and proof of correctness}
\label{app:algorithmic_update}

\paragraph{Dilated entropy equivalence} We first state a useful property related to the dilated divergence. For any list of learning rates $\eta^r:=(\eta^r_h(x_h))_{h,x_h}$, we define the dilated entropy $\Psi_{\eta^r}$ by

\[\Psi_{\eta^r}(\mu_{1:}):=\sum_{h=1}^H\sum_{(x_h,a_h)\in \cAXh}\frac{\mu_{1:h}(x_h,a_h)}{\eta^r_h(x_h)}\log\pa{\frac{\mu_{1:h}(x_h,a_h)}{\sum_{a'_h\in\cA(x_h)}\mu_{1:h}(x_h,a'_h)}}\,.\]

The following lemma is the straight-forward extension of the Lemma 9 by \citet{kozuno2021learning} to non-constant learning rates.

\begin{lemma}\label{lemma:entropy_div}
   For any vector $\eta^r$ of learning rates and $(\mu^1,\mu^2)\in\maxpi^2$, the Bregman divergence $\mathcal{D}_{\Psi_{\eta^r}}(\mu^1_{1:},\mu^2_{1:})$ between the realization plans coincides with the dilated divergence $\mathcal{D}_{\eta^r}(\mu^1,\mu^2)$ defined in Section \ref{prg:dilated}.
\end{lemma}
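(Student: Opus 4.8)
The plan is to compute the gradient of $\Psi_{\eta^r}$ explicitly and substitute it into the definition of the Bregman divergence, after which almost everything cancels. Throughout I would treat the realization plan $\mu_{1:}$ as a free vector in $\R^{\AX}_{\geq 0}$, so that each coordinate $\mu_{1:h}(x_h,a_h)$ is an independent variable: this is the correct viewpoint, since the Bregman divergence $\mathcal{D}_{\Psi_{\eta^r}}$ is defined through the ambient gradient of $\Psi_{\eta^r}$, and the normalizing factor $\sum_{a'_h\in\cA(x_h)}\mu_{1:h}(x_h,a'_h)$ in $\Psi_{\eta^r}$ involves only depth-$h$ coordinates attached to the same information set $x_h$. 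I would restrict to strictly positive policies (extending by the usual convention $0\log 0=0$) so that the logs and the gradient are well defined.

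First I would fix a depth $h$, an information set $x_h$ and an action $b_h$, and differentiate $\Psi_{\eta^r}$ with respect to $\mu_{1:h}(x_h,b_h)$. With $S:=\sum_{a'_h\in\cA(x_h)}\mu_{1:h}(x_h,a'_h)$, this coordinate enters the inner sum $\sum_{a_h}\mu_{1:h}(x_h,a_h)\log\pa{\mu_{1:h}(x_h,a_h)/S}$ both directly via the $a_h=b_h$ term and through $S$ in every term, giving a derivative $\log\mu_{1:h}(x_h,b_h)+1-\log S-\tfrac1S\sum_{a_h}\mu_{1:h}(x_h,a_h)$. The key cancellation is that $\tfrac1S\sum_{a_h}\mu_{1:h}(x_h,a_h)=1$, so the $+1$ disappears and the derivative collapses to $\log\pa{\mu_{1:h}(x_h,b_h)/S}=\log\mu_h(b_h|x_h)$. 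Including the weight, the gradient is the local log-policy,
\[
\big[\nabla\Psi_{\eta^r}(\mu_{1:})\big]_{(x_h,b_h)}=\frac{1}{\eta^r_h(x_h)}\log\mu_h(b_h|x_h)\,.
\]

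Next I would substitute this into $\mathcal{D}_{\Psi_{\eta^r}}(\mu^1_{1:},\mu^2_{1:})=\Psi_{\eta^r}(\mu^1_{1:})-\Psi_{\eta^r}(\mu^2_{1:})-\scal{\nabla\Psi_{\eta^r}(\mu^2_{1:})}{\mu^1_{1:}-\mu^2_{1:}}$ and collect everything as a single sum over $h$ and $(x_h,a_h)$ with the common weight $1/\eta^r_h(x_h)$. Abbreviating $\mu^i_h=\mu^i_h(a_h|x_h)$, the bracketed expression for each coordinate is $\mu^1_{1:h}\log\mu^1_h-\mu^2_{1:h}\log\mu^2_h-\log\mu^2_h\,(\mu^1_{1:h}-\mu^2_{1:h})$; the two $\mu^2_{1:h}\log\mu^2_h$ terms cancel, leaving $\mu^1_{1:h}(x_h,a_h)\log\pa{\mu^1_h(a_h|x_h)/\mu^2_h(a_h|x_h)}$. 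Summing over all $h$ and $(x_h,a_h)$ reproduces exactly the dilated divergence $D_{\eta^r}(\mu^1,\mu^2)$ of Section~\ref{prg:dilated}, which is the claim.

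The calculation is almost entirely routine; the one point that needs care---and the main conceptual obstacle---is justifying that the gradient is purely local in the sense used above. One must verify that no coordinate $\mu_{1:h}(x_h,b_h)$ leaks into the depth-$(h{+}1)$ summands of $\Psi_{\eta^r}$: this holds precisely because those summands are normalized by $\sum_{a'_{h+1}}\mu_{1:h+1}(x_{h+1},a'_{h+1})$, a sum of depth-$(h{+}1)$ coordinates, rather than by the parent coordinate $\mu_{1:h}(x_h,b_h)$, with which it agrees only on the polytope $\maxpibis$ and not off it. Since we differentiate off the manifold, the computation decouples across depths and information sets, which is exactly what makes the final cancellation clean. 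As this is the direct non-constant-rate analogue of Lemma~9 of \citet{kozuno2021learning}, I would otherwise follow their argument verbatim.
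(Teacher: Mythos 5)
Your proof is correct and follows essentially the same route as the paper, whose own proof consists only of stating the gradient identity \eqref{eq:deriv_entropy} and deferring to Lemma~9 of \citet{kozuno2021learning}; you fill in exactly the details that argument relies on, namely that the ambient gradient of $\Psi_{\eta^r}$ decouples across depths and information sets (thanks to the normalization by $\sum_{a'_h}\mu_{1:h}(x_h,a'_h)$ rather than by the parent coordinate) and equals the weighted local log-policy, after which the Bregman cancellation is immediate. Note that your gradient formula, with the $1/\eta^r_h(x_h)$ factor included, is the correct one: the paper's displayed equation \eqref{eq:deriv_entropy} omits this weight, an apparent typo, since the factor reappears when the identity is invoked in the proof of Proposition~\ref{prop:update}.
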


The proof is essentially the same, and is based on the fact (later used in Proposition \ref{prop:update}) that for all positive $\mu\in\maxpi$
\begin{equation}
    \nabla_{x_h,a_h}\Psi_{\eta^r}(\mu_{1:})=\log(\mu_h(a_h|x_h)) \label{eq:deriv_entropy}\,.
\end{equation}
We then state the result on the equivalence between update~\eqref{eqn:U1} and update~\eqref{eqn:U2}.

\begin{proposition}\label{prop:equiv_u12}
    Let $(\mu^t)_{t\in [|0,T|]}$ obtained using \BalancedFTRL with Shannon entropy and any sequence $(\eta_h)_{h\in [H]}$ of learning rates. Then there exists a vector $\eta^\star$ of learning rates and policy $\mu^\star\in\maxpi$ that can be computed with a time complexity $\cO(\AX)$ such that it coincides with update~\eqref{eqn:U2},
    \[\mu^t =\argmin_{\mu\in \maxpi} \scal{\mu}{\tilde{L}^{t-1}}+ D_{\eta^\star}\pa{\mu,\mu^\star}\,.\]
\end{proposition}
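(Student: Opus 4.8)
The plan is to show that, for the Shannon regularizer, the global regularizer $\Psi$ of \BalancedFTRL equals a \emph{weighted dilated entropy} $\Psi_{\eta^\star}$ up to a linear term, and then to absorb that linear term into the base policy of an update \eqref{eqn:U2} by using the freedom to perturb the loss by any vector of $F^\perp$, which leaves the $\argmin$ over $\maxpibis=(F+u)\cap\R^{\AX}_{\ge 0}$ unchanged (exactly as in the proof of Lemma~\ref{lemma:bal_decompos}). Throughout I work with positive $\mu\in\maxpi$, which is where the Shannon-type updates live.

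First I would carry out the main computation. Splitting the logarithm gives $\Psi(\mu_{1:})=\tilde\Psi(\mu_{1:})+\scal{c}{\mu_{1:}}$, where $\tilde\Psi(\mu_{1:}):=\sum_{h=1}^H\eta_h^{-1}\sum_{(x_h,a_h)}p^\star_{1:h}(x_h)\,\mu_{1:h}(x_h,a_h)\log\mu_{1:h}(x_h,a_h)$ and $c_h(x_h,a_h):=\eta_h^{-1}p^\star_{1:h}(x_h)\log p^\star_{1:h}(x_h)$ is a fixed vector. In $\tilde\Psi$ I substitute $\log\mu_{1:h}(x_h,a_h)=\sum_{h'=1}^h\log\mu_{h'}(a_{h'}\mid x_{h'})$ along the unique history of $(x_h,a_h)$ and swap the order of summation. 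Collecting the coefficient of a fixed $\log\mu_{h'}(a_{h'}\mid x_{h'})$ and factoring $p^\star_{1:h}(x_h)=p^\star_{1:h'}(x_{h'})\prod_{j=h'}^{h-1}p^\star_j(\cdot)$ and $\mu_{1:h}(x_h,a_h)=\mu_{1:h'}(x_{h'},a_{h'})\prod_{j=h'+1}^{h}\mu_j(\cdot)$, the inner sum over all $(x_h,a_h)$ of a fixed depth $h$ whose history passes through $(x_{h'},a_{h'})$ telescopes to $1$: each summation over a child uses $\sum_{x_{j+1}}p^\star_j(x_{j+1}\mid x_j,a_j)=1$ (this is where $p^\star$ being a genuine transition kernel is essential) and each summation over an action uses $\sum_{a_j}\mu_j(a_j\mid x_j)=1$. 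Hence the coefficient equals $p^\star_{1:h'}(x_{h'})\mu_{1:h'}(x_{h'},a_{h'})\sum_{h=h'}^H\eta_h^{-1}$, so $\tilde\Psi=\Psi_{\eta^\star}$ with the $\mu$-independent weights
\[\eta^\star_{h'}(x_{h'}):=\Big(p^\star_{1:h'}(x_{h'})\textstyle\sum_{h=h'}^H\eta_h^{-1}\Big)^{-1},\]
computable in $\cO(\AX)$ by a downward pass for $p^\star_{1:}$ and suffix sums for the $\eta_h^{-1}$.

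Next I would identify the updates. Set $\mu^\star:=\argmin_{\mu\in\maxpi}\scal{\mu_{1:}}{c}+\Psi_{\eta^\star}(\mu_{1:})$, which is precisely the first \BalancedFTRL iterate ($t=1$, $\tilde L^0=0$), is computable in $\cO(\AX)$, and is positive since $\Psi_{\eta^\star}$ is a barrier on $\partial\R^{\AX}_{\ge 0}$. Its first-order optimality condition over the affine set reads $c+\nabla\Psi_{\eta^\star}(\mu^\star_{1:})\in F^\perp$. By Lemma~\ref{lemma:entropy_div}, $D_{\eta^\star}(\mu,\mu^\star)=\mathcal D_{\Psi_{\eta^\star}}(\mu_{1:},\mu^\star_{1:})$, so dropping $\mu$-independent constants the update \eqref{eqn:U2} with base $\mu^\star$ and rates $\eta^\star$ reads $\argmin_{\mu\in\maxpi}\scal{\mu_{1:}}{\tilde{L}^{t-1}}+\Psi_{\eta^\star}(\mu_{1:})-\scal{\nabla\Psi_{\eta^\star}(\mu^\star_{1:})}{\mu_{1:}}$. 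Writing $g:=-c-\nabla\Psi_{\eta^\star}(\mu^\star_{1:})\in F^\perp$ and using that $\scal{\mu_{1:}}{g}=\scal{u}{g}$ is constant over $\maxpibis$, this equals $\argmin_{\mu\in\maxpi}\scal{\mu_{1:}}{\tilde{L}^{t-1}}+\scal{c}{\mu_{1:}}+\Psi_{\eta^\star}(\mu_{1:})=\argmin_{\mu\in\maxpi}\scal{\mu_{1:}}{\tilde{L}^{t-1}}+\Psi(\mu_{1:})=\mu^t$, the last equality being Step~1. This is the claimed identity.

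The hard part will be the telescoping collapse in Step~1: a priori the coefficient of each $\log\mu_{h'}$ is a $\mu$-dependent sum over its whole subtree, and it is only because $p^\star$ is a transition kernel (children probabilities summing to one) together with the policy normalization that this sum collapses to the purely structural factor $\sum_{h=h'}^H\eta_h^{-1}$, which is what permits identification with a \emph{fixed} dilated-entropy weight $\eta^\star$. Everything after that is the standard Bregman/\FTRL manipulation combined with the $F^\perp$-invariance already exploited in Lemma~\ref{lemma:bal_decompos}.
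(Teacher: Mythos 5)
Your proof is correct and follows essentially the same route as the paper: the same telescoping computation showing $\Psi=\Psi_{\eta^\star}+\langle c,\cdot\rangle$ with the identical formula for $\eta^\star$, the same choice $\mu^\star=\argmin_{\mu\in\maxpi}\Psi(\mu_{1:})$, and the same use of Lemma~\ref{lemma:entropy_div} together with invariance of the $\argmin$ under $F^\perp$ perturbations. Your version is in fact slightly more explicit than the paper's on the first-order optimality condition $c+\nabla\Psi_{\eta^\star}(\mu^\star_{1:})\in F^\perp$ (which the paper uses implicitly when writing $\Psi^1(\mu_{1:})=\mathcal{D}_{\Psi^1}(\mu_{1:},\mu^\star_{1:})+\Psi^1(\mu^\star_{1:})$), but this is a refinement of the same argument, not a different one.
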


\begin{proof}
    From update \eqref{eqn:U1}, $\mu^t=\argmin_{\mu\in\maxpi} \scal{\mu_{1:}}{\tL^{t-1}}+\Psi^1(\mu_{1:})$ where
    
     \[\Psi^1(\mu_{1:})=\sum_{h=1}^H\sum_{(x_h,a_h)\in\cAXh}\frac{p^\star_{1:h}(x_h)\mu_{1:h}(x_h,a_h)}{\eta_h}\log\pa{p^\star_{1:h}(x_h) \mu_{1:h}(x_h,a_h)}\]
     we first notice that
     
     \begin{align*}
         \Psi^1&(\mu_{1:})-\sum_{h=1}^H\sum_{(x_h,a_h)\in\cAXh}\frac{p^\star_{1:h}(x_h)\mu_{1:h}(x_h,a_h)}{\eta_h}\log\pa{p^\star_{1:h}(x_h)}\\
         &=\sum_{h=1}^H\sum_{(x_h,a_h)\in\cAXh}\frac{p^\star_{1:h}(x_h)\mu_{1:h}(x_h,a_h)}{\eta_h}\log(\mu_{1:h}(x_h,a_h))\\
         &=\sum_{h=1}^H\sum_{(x_h,a_h)\in\cAXh}\sum_{h'=1}^h\frac{p^\star_{1:h}(x_h)\mu_{1:h}(x_h,a_h)}{\eta_h}\log(\mu_{h'}(a_{h'}|x_{h'}))\\         &=\sum_{h'=1}^H\sum_{(x_{h'},a_{h'})\in\cAs{\cX_{h'}}}\log(\mu_{h'}(a_{h'}|x_{h'}))\sum_{h=h'}^H\sum_{(...,x_{h'},a_{h'},...,x_h,a_h)}\frac{p^\star_{1:h}(x_h)\mu_{1:h}(x_h,a_h)}{\eta_h}\\
         &=\sum_{h'=1}^H\sum_{(x_{h'},a_{h'})\in\cAs{\cX_{h'}}}p^\star_{1:{h'}}(x_{h'})\mu_{1:{h'}}(x_{h'},a_{h'})\log(\mu_{h'}(a_{h'}|x_{h'}))\sum_{h=h'}^H\frac{1}{\eta_{h}}\\
         &=\sum_{h'=1}^H\sum_{(x_{h'},a_{h'})\in\cAs{\cX_{h'}}}\frac{\mu_{1:h'}(x_{h'},a_{h'})}{\eta^\star_{h'}}\log(\mu_{h'}(a_{h'}|x_{h'}))\\
         &=\Psi_{\eta^\star}(\mu_{1:})
     \end{align*}
     
where we defined 

\[\eta^\star_{h'}(x_{h'}):=1/\pa{p^\star_{1:h'}(x_{h'})\sum_{h=h'}^H\frac{1}{\eta_h}}\,.\]

This shows that $\Psi^1$ only differs from $\Psi_{\eta^\star}$ by a linear term of the realization plan: the Bregman divergence $\mathcal{D}_{\Psi^1}$ of $\Psi^1$ is thus exactly $\mathcal{D}_{\Psi_{\mu^\star}}$. We can then conclude using Lemma~\ref{lemma:entropy_div} with $\mu^\star_{1:}:=\argmin_{\mu^\star\in\maxpibis} \Psi'(\mu^\star_{1:})$

\[\Psi^1(\mu_{1:})=\mathcal{D}_{\Psi^1}(\mu_{1:})+\Psi^1(\mu^\star_{1:})=\mathcal{D}_{\Psi_{\mu^\star}}(\mu,\mu^\star)+\Psi'(\mu^\star_{1:})=\mathcal{D}_{\mu^\star}(\mu,\mu^\star)+\Psi'(\mu^\star_{1:})\,.\]

This shows that the two updates minimize two functions of $\mu$ that only differ by a constant terms, and are thus equivalent.

The calculation of the $\eta^\star$ is straight-forward with the above formula, while the policy $\mu^\star$ is the minimum over a dilated entropy function that can be recursively computed in $\cO(\AX)$, starting from the leaves. 
\end{proof}

We now state Algorithm \ref{al:adapt_update} and a proof of its correctness.

\begin{algorithm}[t]
\caption{Fast Tree Update}
\label{al:adapt_update}
\begin{algorithmic}[1]
            \STATE \textbf{Input:}\\
			~~~~ $\mu^{t-1}$ given by update \eqref{eqn:U2}\\
			~~~~ Trajectory $(x_1^{t-1},a_1^{t-1}, ..., x_H^{t-1},a_H^{t-1})$\\
			~~~~ Loss $\tell^{t-1}$, old and new learning rates $\eta^{t-1}$ and $\eta^t$ along the trajectory\\
			~~~~ Base policy $\mu^0$\\
			\STATE \textbf{Init:}\\
			~~~~ $Z_{H+1}^{t-1}\gets 1$\\
			~~~~ $\alpha_h^{t-1}=\eta_h^t(x_h^{t-1})/\eta_h^{t-1}(x_h^{t-1})$
			\STATE \textbf{Iterate:}\\
			\textbf{For} $h=H$ to $1$\\\vspace{.05cm}
			~~~~ $\lell_h^{t-1}\gets \tell_h^{t-1}(x_h^{t-1},a_h^{t-1})-\log(Z_{h+1}^{t-1})/\eta_{h+1}^t(x_{h+1}^{t-1})$\\
			~~~~ $Z_h^{t-1}\gets \sum_{a_h\in\cA\pa{x_h^t}}\mu_h^t(a_h|x_h^{t-1})^{\alpha_h^{t-1}}\mu_h^0(a_h|x_h^{t-1})^{1-\alpha_h^{t-1}}\exp\pa{-\indic{a_h=a_h^{t-1}}\eta_h^t(x_h^{t-1})\lell_h^t}$\\
			~~~~ \textbf{For} all $a_h\in \cA\pa{x_h^{t-1}}$ :\\
			~~~~~~~~ $\mu_{h}^{t}(a_h|x_h^{t-1}) \gets \mu_h^{t-1}(a_h|x_h^{t-1})^{\alpha_h^{t-1}}\mu_h^0(a_h|x_h^{t-1})^{1-\alpha_h^{t-1}}\exp\pa{-\log(Z_h^{t-1})-\indic{a_h=a_h^{t-1}}\eta_h^t(x_h^{t-1})\lell_h^t}$
			
			\textbf{For} all other information sets $x_h$ :\\
			~~~~ $\mu_{h}^{t}(. | x_h)\gets \mu_{h}^{t-1}(. | x_h)$\\\vspace{.05cm}
                \STATE \textbf{Output:} $\mu^t$ 
                %given by update \eqref{eqn:U2}
			
\end{algorithmic}
\end{algorithm}	

\begin{proposition}\label{prop:update}
    Assume that for all $t\in[T]$, the vector $\eta^t$ of learning rates is only updated along the trajectory of episode $t$. Let $(\mu^t)_{t\in[T]}$ be the sequence of policies computed with update \eqref{eqn:U2}. Then it coincides with the sequence of policies that would be computed with Algorithm \ref{al:adapt_update}.
\end{proposition}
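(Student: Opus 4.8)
The plan is to show that the single iterate produced by Algorithm~\ref{al:adapt_update} coincides with the unique minimizer of \eqref{eqn:U2}, by first deriving a closed, backward-recursive form for the exact \FTRL solution and then checking that the incremental rule reconstructs it. First I would invoke Lemma~\ref{lemma:entropy_div} to rewrite \eqref{eqn:U2} as the minimization over $\maxpibis$ of $\scal{\mu_{1:}}{\tL^{t-1}}+\mathcal{D}_{\Psi_{\eta^t}}(\mu_{1:},\mu^0_{1:})$; since the dilated entropy is strictly convex and the objective is coercive on $\maxpibis$, the minimizer $\mu^t$ is unique and positive on every reachable coordinate. Writing the objective in the conditionals $\mu_h(\cdot\mid x_h)\in\Delta(\cA(x_h))$ through $\mu_{1:h}(x_h,a_h)=\mu_{1:h-1}(x_{h-1},a_{h-1})\mu_h(a_h\mid x_h)$ makes it factorize over the information-set forest, so it can be solved by a tree dynamic program: setting $V^t_{H+1}\equiv 0$ and, backward in $h$, the augmented loss $\bar L^t_h(x_h,a_h):=\tL^{t-1}_h(x_h,a_h)+\sum_{x_{h+1}\in\cX_{h+1}(x_h,a_h)}V^t_{h+1}(x_{h+1})$, the normalizer $W^t_h(x_h)=\sum_{a_h}\mu^0_h(a_h\mid x_h)\exp(-\eta^t_h(x_h)\bar L^t_h(x_h,a_h))$ and the value $V^t_h(x_h)=-\log W^t_h(x_h)/\eta^t_h(x_h)$, a Lagrangian computation on each simplex (cf.\ the gradient formula \eqref{eq:deriv_entropy}) yields the tilted soft-max $\mu^t_h(a_h\mid x_h)=\mu^0_h(a_h\mid x_h)\exp(-\eta^t_h(x_h)\bar L^t_h(x_h,a_h))/W^t_h(x_h)$.

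Next I would establish a locality property. Between rounds $t-1$ and $t$ the data entering $\bar L$ changes only through the added loss $\tell^{t-1}$ and the rate change $\eta^{t-1}\!\to\!\eta^t$, both supported on the visited path $(x^{t-1}_1,\dots,x^{t-1}_H)$. By perfect recall each information set has a unique history, so the set of nodes whose subtree contains a visited node is exactly this path; hence for $x_h\neq x^{t-1}_h$ the subtree data is unchanged, giving $V^t_h(x_h)=V^{t-1}_h(x_h)$ and $\mu^t_h(\cdot\mid x_h)=\mu^{t-1}_h(\cdot\mid x_h)$. This matches the "other information sets" branch of the algorithm, so it remains only to treat the path.

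On the path I would argue by backward induction from $h=H$ to $1$. With $\alpha^{t-1}_h=\eta^t_h(x^{t-1}_h)/\eta^{t-1}_h(x^{t-1}_h)$, inserting the round-$(t-1)$ soft-max form of $\mu^{t-1}_h(\cdot\mid x^{t-1}_h)$ shows that $(\mu^{t-1}_h)^{\alpha^{t-1}_h}(\mu^0_h)^{1-\alpha^{t-1}_h}$ collapses, up to a multiplicative constant absorbed by $Z^{t-1}_h$, to $\mu^0_h\exp(-\eta^t_h\bar L^{t-1}_h)$, precisely because $\alpha^{t-1}_h\eta^{t-1}_h=\eta^t_h$ rescales the old exponent onto the new learning rate. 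The crux is the identity $-\log Z^{t-1}_{h+1}/\eta^t_{h+1}(x^{t-1}_{h+1})=V^t_{h+1}(x^{t-1}_{h+1})-V^{t-1}_{h+1}(x^{t-1}_{h+1})$: unfolding $Z^{t-1}_{h+1}=(W^{t-1}_{h+1})^{-\alpha^{t-1}_{h+1}}W^t_{h+1}$ exhibits it as a \emph{relative} (increment) partition function, so that $\lell^{t-1}_h=\tell^{t-1}_h-\log Z^{t-1}_{h+1}/\eta^t_{h+1}$ equals exactly the change $\bar L^t_h-\bar L^{t-1}_h$ on the taken action $a^{t-1}_h$. Consequently the stray round-$(t-1)$ value carried inside $\mu^{t-1}_h$ cancels, and the normalized update of the algorithm equals $\mu^0_h\exp(-\eta^t_h\bar L^t_h)/W^t_h=\mu^t_h(\cdot\mid x^{t-1}_h)$; the base case is $Z^{t-1}_{H+1}=1$, matching $V_{H+1}\equiv 0$.

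The main obstacle will be exactly this last bookkeeping: recognizing that the quantity propagated up the path is a value \emph{increment} (a relative log-partition) rather than the absolute value $V^t_{h+1}$, and that the geometric interpolation through $\alpha^{t-1}_h$ is what converts the old regularizer weight into the new one. Getting these two facts right is what makes the previous-round values telescope away and forces equality with the exact minimizer; the remaining manipulations are routine given strict convexity (hence uniqueness) and the locality argument above.
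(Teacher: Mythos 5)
Your proposal is correct, and it reaches the conclusion by a genuinely different route than the paper. The paper never writes down a closed-form solution of \eqref{eqn:U2}: it works with the implicit first-order optimality conditions in realization-plan space, using the decomposition $\maxpibis=(F+u)\cap\R_{\geq 0}^{\AX}$ and Lemma~\ref{lemma:entropy_div} to write $\nabla\Psi_{\eta^t}(\mu^t_{1:})-\nabla\Psi_{\eta^t}(\mu^0_{1:})-\tL^{t-1}\in F^{\perp}$, then \emph{differences} these conditions between rounds $t-1$ and $t$, parameterizes the unknown element of $F^{\perp}$ by one Lagrange multiplier $\lambda^t(x)$ per information set, proves $\lambda^t=0$ off the trajectory by backward induction, and on the trajectory exponentiates (via the gradient formula \eqref{eq:deriv_entropy}) to recover the multiplicative rule of Algorithm~\ref{al:adapt_update} with $Z_h^{t-1}=\exp\pa{\eta_h^t(x_h^{t-1})\lambda^t(x_h^{t-1})}$. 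You instead construct the exact minimizer of \eqref{eqn:U2} at \emph{every} round as a bottom-up softmax/value-function dynamic program, prove the same off-path invariance through the locality of subtree data under perfect recall, and then verify that the algorithm's incremental rule maps the round-$(t-1)$ closed form to the round-$t$ one; your identity $Z_h^{t-1}=(W_h^{t-1})^{-\alpha_h^{t-1}}W_h^t$ is exactly the paper's multiplier identity in disguise, since $-\log Z_h^{t-1}/\eta_h^t(x_h^{t-1})=V_h^t(x_h^{t-1})-V_h^{t-1}(x_h^{t-1})$ identifies $\lambda^t(x_h^{t-1})$ with the value increment $V_h^{t-1}-V_h^t$. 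I checked your key computations (the geometric-interpolation collapse using $\alpha_h^{t-1}\eta_h^{t-1}=\eta_h^t$, the relative-partition-function identity, and the cancellation that yields $\mu^0_h\exp(-\eta^t_h\bar L^t_h)/W^t_h$) and they all go through. The trade-off: your argument is more constructive and interpretable (it says what the iterates \emph{are}, not just that they match), but it requires establishing the DP closed form as an extra lemma — a standard but nontrivial fact about dilated-entropy \FTRL; the paper's differencing argument avoids ever solving the optimization explicitly, at the cost of more opaque $F^{\perp}$/multiplier bookkeeping. Both proofs share the same skeleton (off-path invariance plus backward induction along the visited path) and both rest on Lemma~\ref{lemma:entropy_div} and \eqref{eq:deriv_entropy}.
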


\begin{proof}
    We remind that update \eqref{eqn:U2} define the sequence $(\mu^t)_{t\in[T]}$ with
    \[\mu^t=\textrm{argmin}_{\mu\in \maxpi} \scal{\mu_{1:}}{\tL^{t-1}} +\mathcal{D}_{\eta^t}(\mu,\mu^0)\,.\]
    Using the decomposition $\maxpibis= (F+u)\cap\R_{\geq 0}^{\AX}$, Lemma~\ref{lemma:entropy_div} (on the equivalence between $\mathcal{D}_{\eta^t}$ and $\mathcal{D}_{\Psi_{\eta^t}}$), we then know the existence for all $t\in[T]$ of $g^t\in F^\perp$ such that
    \[\nabla\Psi_{\eta^t}(\mu^t)-\nabla\Psi_{\eta^t}(\mu^0)-\tL^{t-1}=g^t\,.\]
    Taking the difference on the last equation between $t$ and $t-1$ yields:
    \begin{equation}\label{eq:nabla_alg}
        \nabla\Psi_{\eta^t}(\mu^t)=\nabla\Psi_{\eta^{t-1}}(\mu^{t-1})+\Psi_{\eta^{t-1}}(\mu^0)-\Psi_{\eta^t}(\mu^0)-\tl^{t-1}+\Delta_t
    \end{equation}
    where $\Delta_t:=g_t-g_{t-1}\in F^\perp$. And from the normalizing constraints on $\maxpibis$, $\Delta_t$ is characterized by the vector $\lambda^t=(\lambda^t(x))_{x\in\cX}$ with
    \[\Delta_t(x_h,a_h)=-\lambda^t(x_h)+\sum_{(x_{h+1})\in \cX_{h+1}(x_h,a_h)}\lambda(x_{h+1})\] with the sum on the second term being null when $h=H$.
    
    We first show by backward induction on $h\in[H]$ that for all $x_h$ not visited at
    episode $t-1$, $\lambda^t(x_h)=0$. Indeed, in this case $\eta_h^{t-1}(x_h)=\eta_h^t(x_h)$, and the induction property leads to $\Delta^t(x_h,a_h)=-\lambda^t(x_h)$ for all $a_h$, as the perfect recall assumption implies any $x_{h+1}$ is also not visited at episode $t$. For all $a_h\in\cA(x_h)$, using equation~\eqref{eq:nabla_alg} on the component $(x_h,a_h)$  gives with the expression of the gradient of equation~\eqref{eq:deriv_entropy}
    \[\frac{1}{\eta_h^t(x_h)}\log(\mu^t_h(a_h|x_h))=\frac{1}{\eta_h^{t}(x_h)}\log(\mu^{t-1}_h(a_h|x_h))-\lambda^t(x_h)\]
    which is equivalent to
    \[\mu^t_h(a_h|x_h))=\mu^{t-1}_h(a_h|x_h)\exp\pa{-\eta^t_h(x_h)\lambda_h^t(x_h)}\,.\]
    Summing over $a_h\in\cAs{x_h}$ yields $\eta^t_h(x_h)\lambda_h^t(x_h)=0$ and thus $\lambda_h^t(x_h)=0$ as $\eta^t_h(x_h)\neq 0$, which concludes the induction. We finally get from the previous equality $\mu_h^t(.|x_h)=\mu_h^{t-1}(.|x_h)$ for all $x_h$ outside of the trajectory at time $t-1$, as in Algorithm \ref{al:adapt_update}.
    
    For all $h\in[H]$, we are then interested in the visited state $x_h^{t-1}$, and define by convention $\lambda^t(x_{H+1}^t)=0$. Then we have for $a_h\in\cAs{x_h^{t-1}}/\{a_h^{t-1}\}$, $\Delta^t(x_h^{t-1},a_h)=-\lambda^t(x_h^{t-1})$ and $\Delta^t(x_h^{t-1},a_h^{t-1})=-\lambda^t(x_h^{t-1})+\lambda^t(x_{h+1}^{t-1})$ using the previous property that $\lambda^t$ is null on unvisited states. For all $a_h\in\cAs{x_h^{t-1}}$, we now have with equations \eqref{eq:deriv_entropy} and \eqref{eq:nabla_alg}:
    \begin{align*}
        \frac{1}{\eta_h^{t}(x_h^{t-1})}\log(\mu_h^t(a_h|x_h^{t-1}))=\frac{1}{\eta_h^{t-1}(x_h^{t-1})}&\log(\mu_h^{t-1}(a_h|x_h^{t-1}))+\pa{\frac{1}{\eta_h^{t}(x_h^{t-1})}-\frac{1}{\eta_h^{t-1}(x_h^{t-1})}}\log(\mu_h^0(a_h|x_h^{t-1}))\\
        &+\indic{a_h=a_h^{t-1}}\pa{\lambda^t(x_{h+1}^{t-1})-\tl_h^{t-1}(x_h^{t-1},a_h^{t-1})}-\lambda^t(x_h^{t-1})\,.
    \end{align*}
    Multiplying by $\eta_h^{t}(x_h^{t-1})$ and taking the exponential then leads to:
    \[\mu_{h}^{t}(a_h|x_h^{t-1})= \mu_h^{t-1}(a_h|x_h^{t-1})^{\alpha_h^{t-1}}\mu_h^0(a_h|x_h^{t-1})^{1-\alpha_h^{t-1}}\exp\pa{-\eta_h^{t}(x_h^{t-1})\lambda^t(x_h^{t-1})-\indic{a_h=a_h^{t-1}}\eta_h^t(x_h^{t-1})\lell_h^t}\]
    where $\lell_h^{t-1}=\tell_h^{t-1}(x_h^{t-1},a_h^{t-1})-\lambda^t(x_{h+1}^{t-1})$ and $\alpha_h^{t-1}=\eta_h^t(x_h^{t-1})/\eta_h^{t-1}(x_h^{t-1})$. We again recognize the update of algorithm \ref{al:adapt_update} with $Z_h^{t-1}=\exp\pa{\eta_h^t(x_h^{t-1})\lambda^t(x_h^{t-1})}$, as summing the previous equation over $a_h\in\cAs{x_h^{t-1}}$ yields:
    \[Z_h^{t-1}=\sum_{a_h\in\cA\pa{x_h^{t-1}}}\mu_h^t(a_h|x_h^{t-1})^{\alpha_h^{t-1}}\mu_h^0(a_h|x_h^{t-1})^{1-\alpha_h^{t-1}}\exp\pa{-\indic{a_h=a_h^{t-1}}\eta_h^t(x_h^{t-1})\lell_h^t}\,.\]

    We therefore obtained that policy $\mu^t$ obtained with update \eqref{eqn:U2} coincides with the output of Algorithm \ref{al:adapt_update} both in visited and unvisited information set at episode $t-1$, which concludes the proof. 
\end{proof}

\section{Proof of \texorpdfstring{\AdaptiveFTRL}{Adaptive-FTRL}}
\label{app:adaptative_FTRL_proof}

In all this section, we will use the constants $\iota'=\log(3\AX T/\delta)$ and $\logtwo=1+\log_2(T)$ of theorem \ref{thm:main_ada} for readability.

This first lemma centralizes the necessary inequalities for the analysis of the regret which can all hold with a high probability. 

\begin{lemma}\label{lemma:approx_ada}
    We assume $\iota'/\gamma\leq 1/2$ and $\gamma\geq 1$. Then, when using \AdaptiveFTRL, the following inequalities all hold with a probability at least $1-\delta$:
    \begin{align}
        \tP_{1:}^T(x_h,a_h)&\leq 1+2P_{1:}^T(x_h) \label{p_ineq}\,,\\      
        \tL^T(x_h,a_h)-L^T(x_h,a_h)&\leq 1+\frac{2\iota'}{\gamma}P_{1:}^T(x_h)\label{l_ineq}\,,\\
        \sum_{t=1}^T \scal{\mu_{1:}^t}{\ell^t-\tell^t}&\leq H\sqrt{2T\iota'} \label{azuma_ineq}\,.
    \end{align}
    We denote by $\eventadapt$ the event under which all these inequalities hold.
\end{lemma}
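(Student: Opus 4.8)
The plan is to prove the three inequalities \eqref{p_ineq}, \eqref{l_ineq}, \eqref{azuma_ineq} separately and let $\eventadapt$ be the intersection of the associated high-probability events, with the union bound over the $\AX$ information-set--action pairs (and over dyadic scales, see below) absorbed into $\iota'=\log(3\AX T/\delta)$. Throughout I fix $(x_h,a_h)\in\cAX$ and write $P^t_{1:h}(x_h):=\sum_{k\le t}p^{\nu^k}_{1:h}(x_h)$ for the true cumulative transition. The single useful observation is that $\tp^{\,t}_h(x_h,a_h)$ is exactly the loss estimate $\tell^t_h(x_h,a_h)$ with the factor $(1-r^t_h)$ replaced by $1$, so Lemma~\ref{lemma:concentration} applies verbatim to transitions: for any stopping time $\tau$ and any deterministic $\gamma'>0$ with $\gamma^t_h(x_h,a_h)\ge\gamma'$ for all $t\le\tau$, with probability $1-\delta'$ one has $\tP^\tau_{1:h}(x_h,a_h)-P^\tau_{1:h}(x_h)\le \log(1/\delta')/(2\gamma')$, the correct comparison since $\E[\tp^{\,t}_h(x_h,a_h)\mid\salgebra_{t-1}]\le p^{\nu^t}_{1:h}(x_h)$.

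For \eqref{p_ineq} the obstacle is that $\gamma^t_h(x_h,a_h)=\gamma/\bigl(1+\tP^{t-1}_{1:h}(x_h,a_h)\bigr)$ is itself random, so its natural lower bound is not a deterministic constant. I would resolve this with a stopping-time / fixed-point argument: define $\tau$ as the first time $t$ at which $\tP^t_{1:h}(x_h,a_h)>1+2P^t_{1:h}(x_h)$, and $\tau=T$ otherwise. For every $t\le\tau$ one has $\tP^{t-1}_{1:h}(x_h,a_h)\le 1+2P^{t-1}_{1:h}(x_h)$, hence $\gamma^t_h(x_h,a_h)\ge \gamma/\bigl(1+\tP^{T}_{1:h}(x_h,a_h)\bigr)$. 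To turn this into a legitimate deterministic $\gamma'$, I would union-bound the concentration step over a dyadic grid of values of $1+\tP^T_{1:h}(x_h,a_h)$ (at most $\log_2(1+T)\le T$ relevant scales, which is what the factor $T$ inside $\iota'$ pays for), taking $\delta'=\delta/(3\AX T)$ so that $\log(1/\delta')=\iota'$. The concentration then gives $\tP^T_{1:h}(x_h,a_h)\le P^T_{1:h}(x_h)+\iota'\bigl(1+\tP^T_{1:h}(x_h,a_h)\bigr)/(2\gamma)$, and plugging in $\iota'/\gamma\le 1/2$ (so $\iota'/(2\gamma)\le 1/4$) and solving the resulting affine inequality yields $\tfrac34\tP^T_{1:h}(x_h,a_h)\le P^T_{1:h}(x_h)+\tfrac14$, i.e. $\tP^T_{1:h}(x_h,a_h)\le 1+2P^T_{1:h}(x_h)$. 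This contradicts $\tau<T$, proving $\tau=T$ and hence \eqref{p_ineq}.

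Inequality \eqref{l_ineq} follows by combining \eqref{p_ineq} with Lemma~\ref{lemma:concentration} applied to the loss estimate. On the event of \eqref{p_ineq} one has, for all $t\le T$, $\gamma^t_h(x_h,a_h)\ge \gamma/\bigl(1+\tP^T_{1:h}(x_h,a_h)\bigr)\ge \gamma/\bigl(2+2P^T_{1:h}(x_h)\bigr)$, so with $\gamma'=\gamma/\bigl(2+2P^T_{1:h}(x_h)\bigr)$ (again derandomized through the same dyadic union bound) Lemma~\ref{lemma:concentration} gives $\tL^T(x_h,a_h)-L^T(x_h,a_h)\le \iota'\bigl(1+P^T_{1:h}(x_h)\bigr)/\gamma$; expanding and using $\iota'/\gamma\le 1/2\le 1$ yields $\tL^T(x_h,a_h)-L^T(x_h,a_h)\le 1+\tfrac{2\iota'}{\gamma}P^T_{1:h}(x_h)$, which is \eqref{l_ineq}.

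For \eqref{azuma_ineq} I would apply Lemma~\ref{lemma:azuma} to $u_t:=\scal{\mu^t_{1:}}{\tell^t}$. Since $\mu^t_{1:h}(x_h,a_h)\,\tell^t_h(x_h,a_h)\le\indic{x^t_h=x_h,a^t_h=a_h}$ and at most one such indicator fires per depth, $0\le u_t\le H$; moreover $\mu^t_{1:}$ and $\ell^t=\ell^{\nu^t}$ are $\salgebra_{t-1}$-measurable, so $\scal{\mu^t_{1:}}{\ell^t}$ is predictable and $\E[u_t\mid\salgebra_{t-1}]\le\scal{\mu^t_{1:}}{\ell^t}$. Lemma~\ref{lemma:azuma} (second form, with $\delta'=\delta/3$) bounds the martingale sum $\sum_t\bigl(\E[u_t\mid\salgebra_{t-1}]-u_t\bigr)$ by $H\sqrt{2T\iota'}$, which is the fluctuation term appearing in \eqref{azuma_ineq}. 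Intersecting the three events and summing the failure probabilities gives $\eventadapt$ with probability at least $1-\delta$. I expect the main obstacle to be precisely the derandomization of $\gamma'$ in \eqref{p_ineq}: the self-referential IX parameter forces the stopping-time and dyadic-scale bookkeeping, and arranging the constants so that the fixed-point inequality closes under $\iota'/\gamma\le 1/2$ is where the care is needed.
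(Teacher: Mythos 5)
Your treatment of \eqref{p_ineq} and \eqref{l_ineq} is essentially the paper's own argument: Lemma~\ref{lemma:concentration} applied to the transition estimates (valid since $\E[\tp_{1:h}^{\,t}(x_h,a_h)\mid\salgebra_{t-1}]\le p^{\nu^t}_{1:h}(x_h)$), derandomized over a dyadic grid of IX levels $\gamma/2^i$, $i\in[T]$, via stopping times and a union bound of size $\AX T$ absorbed into $\iota'$, followed by solving the self-referential affine inequality under $\iota'/\gamma\le 1/2$, and then plugging \eqref{p_ineq} back into the loss concentration to get \eqref{l_ineq}. The paper's implementation uses the stopping times $\tau^i_h(x_h,a_h)=\max\{t\le T:\ \gamma^t_h(x_h,a_h)\ge\gamma/2^i\}$ (legitimate because $\gamma^t_h$ is predictable and non-increasing) rather than your first-violation time, and it justifies that the grid $\{\gamma/2^i\}_{i\in[T]}$ covers all realizable values by first proving $1+\tP_{1:h}^t(x_h,a_h)\le 2^t$ by induction from $\gamma\ge 1$. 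Your parenthetical claim that $\log_2(1+T)$ scales suffice is not justified a priori (before the good event is established one only knows $\tP^T_{1:h}(x_h,a_h)\le 2^T-1$), but since you budget $\delta'=\delta/(3\AX T)$ for $T$ scales this is a slip in wording, not in substance.

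The genuine gap is in your proof of \eqref{azuma_ineq}. Applying Lemma~\ref{lemma:azuma} to $u_t=\scal{\mu^t_{1:}}{\tell^t}$ controls $\sum_{t}\pa{\E[u_t\mid\salgebra_{t-1}]-u_t}=\sum_t\scal{\mu_{1:}^t}{\E[\tell^t\mid\salgebra_{t-1}]-\tell^t}$, which is not the left-hand side of \eqref{azuma_ineq}: the difference is the IX bias $\sum_t\scal{\mu_{1:}^t}{\ell^t-\E[\tell^t\mid\salgebra_{t-1}]}\ge 0$, which your argument leaves uncontrolled, and which is not negligible --- it is of order $\gamma\logtwo\AX$, larger than $H\sqrt{2T\iota'}$ for the tuned $\gamma$ whenever $\logtwo\AX\gtrsim H$. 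The paper instead applies Lemma~\ref{lemma:azuma} to $u_t=\scal{\mu_{1:}^t}{\hl^t}$ built from the \emph{unbiased} estimator $\hl^t$: one still has $0\le u_t\le H$ because $\mu^t_{1:h}(x_h,a_h)\hl_h^t(x_h,a_h)=\indic{x_h^t=x_h,a_h^t=a_h}(1-r_h^t)$, but now $\E[u_t\mid\salgebra_{t-1}]=\scal{\mu_{1:}^t}{\ell^t}$ holds \emph{exactly}, so the martingale sum is exactly $\sum_t\scal{\mu_{1:}^t}{\ell^t-\hl^t}$. That is the inequality actually proved and actually used downstream: in Lemma~\ref{lemma:bias_ada} the term $\textrm{BIAS I}$ is split as $\sum_t\scal{\mu_{1:}^t}{\ell^t-\hl^t}+\sum_t\scal{\mu_{1:}^t}{\hl^t-\tell^t}$, with the first piece bounded by \eqref{azuma_ineq} and the second (the realized IX bias) bounded by $\gamma\logtwo\AX$ via Lemma~\ref{lemma_logbound} and \eqref{p_ineq}; the $\tell^t$ appearing in the displayed statement \eqref{azuma_ineq} is a typo for $\hl^t$. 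Your version bounds a conditional-expectation bias decomposition that neither matches the statement nor plugs into Lemma~\ref{lemma:bias_ada}, and bounding the expected bias $\sum_t\scal{\mu_{1:}^t}{\ell^t-\E[\tell^t\mid\salgebra_{t-1}]}$ by the realized increments of $\tP$ would require a lower bound of $\tP^T$ in terms of $P^T$, i.e.\ the reverse of \eqref{p_ineq}, which is not available. The fix is simply to run your martingale argument on $\hl^t$ instead of $\tell^t$.
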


\begin{proof}

Let $(x_h,a_h)\in\cAXh$. We can first notice that for all $0\leq t\leq T$, $1+\tP_{1:h}^t(x_h,a_h)\leq 2^t$. Indeed, $1+\tP_a^0=1$ and recursively, if $1+\tP_{1:h}^t(x_h,a_h)\leq 2^t$, then $\tp_{1:h}^{t+1}(x_h,a_h)\leq (1+\tP_{1:h}^t(x_h,a_h))/\gamma\leq 2^t/\gamma\leq 2^t$, which implies $1+\tP_{1:h}^{t+1}(x_h,a_h)\leq 2^{t+1}$.

In particular, we get $\gamma_h^T(x_h,a_h)\geq \gamma/2^T$, and define for all $i\in[T]$, $\gammascale{i}=\gamma/2^i$.

Let $\tau^i_h(x_h,a_h)=\max\left\{t\leq T, \gamma_h^t(x_h,a_h)\geq\gammascale{i}\right\}$. As $\gamma_h^t(x_h,a_h)$ is a decreasing predictable sequence for $\mathcal{F}_t$, $\tau^i_h(x_h,a_h)$ is a stopping time, which lets us apply Lemma~\ref{lemma:concentration} with $\tau_h(x_h,a_h)=\tau^i_h(x_h,a_h)$, $\delta'=\delta/(3\AX T)$ and $\gamma'_h(x_h,a_h)=\gammascale{i}$ to get with a probability at least $1-\delta'$,

\[\tL_h^{\tau^i_h(x_h,a_h)}(x_h,a_h)-L_h^{\tau^i_h(x_h,a_h)}(x_h,a_h)\leq \frac{\iota'}{2\gammascale{i}}\]

where $\iota'=\log(3\AX T/\delta)$. We especially get that this inequality holds for all $(x_h,a_h)\in\cAXh$ and $i\in[T]$ with a probability at least $1-\delta/3$. As $\gamma/2^{T}\leq\gamma_h^T(x_h,a_h)\leq \gamma$, we know the existence of $i(x_h,a_h)$, such that $\gammascale{i(x_h,a_h)}\leq \gamma_h^T(x_h,a_h)\leq 2\gammascale{i(x_h,a_h)}$, and the previous inequality used with $i(x_h,a_h)$ then gives:

\[\tL_h^T(x_h,a_h)-L_h^T(x_h,a_h)\leq \frac{\iota'}{2\gammascale{i}}\leq \frac{\iota'}{\gamma_h^T(x_h,a_h)}\,.\]

As the definition of $i\pa{x_h,a_h}$ implies $\tau_h^{i\pa{x_h,a_h}}(x_h,a_h)=T$

The same exact steps (along with Lemma~\ref{lemma:concentration}) can also be performed with $\tp_{1:h}^t(x_h,a_h)$, and we similarly obtain that with probability at least $1-\delta/3$, for all $(x_h,a_h)\in\cAXh$,

\[\tP_{1:h}^T(x_h,a_h)-P_{1:h}^T(x_h,a_h)\leq \frac{\iota'}{\gamma_h^T(x_h,a_h)}\]

which gives, using the definition of $\gamma_h^T(x_h,a_h)$,

\[\tP_{1:h}^T(x_h,a_h)-P_{1:h}^T(x_h,a_h)\leq \frac{\iota'}{\gamma}\pa{1+\tP_{1:h}^{T-1}(x_h,a_h)}\leq \frac{\iota'}{\gamma}\pa{1+\tP_{1:h}^T(x_h,a_h)}\]

and, as $\frac{\iota'}{\gamma}\leq 1/2$ by assumption,

\[\tP_{1:h}^T(x_h,a_h)\leq \frac{2\iota'}{\gamma}+2P_{1:h}^T(x_h,a_h)\leq 1+2P_{1:h}^T(x_h,a_h)\]

and we recognize inequality~\eqref{p_ineq}. Plugging this back in the previous inequalities on $\tL_h^T$ also yields

\[\tL_h^T(x_h,a_h)-L_h^T(x_h,a_h)\leq\frac{\iota'}{\gamma}\pa{1+\tP_{1:h}^T(x_h,a_h)}\leq\frac{\iota'}{\gamma}\pa{2+2 P_{1:h}^T(x_h,a_h)}\leq 1+\frac{2\iota'}{\gamma}P_{1:h}^T(x_h,a_h)\]

and we now recognize inequality~\eqref{l_ineq}.

The last inequality is a consequence of the Lemma~\ref{lemma:azuma} used on $u_t=\scal{\mu_{1:}^t}{\hl^t}$. Indeed, as $u_t$ is non-negative and $\E\bra{\scal{\mu_{1:}^t}{\hl^t}| \salgebra_{t-1}}=\scal{\w_{1:}^t}{\ell^t}\leq H$, the lemma gives with a probability at least $1-\delta/3$

\[\sum_{t=1}^T \scal{\w_{1:}^t}{\ell^t-\tell^t}\leq H\sqrt{2T}\log(3/\delta)\leq H\sqrt{2T\iota'}\,.\]

As the three inequalities each hold with a probability at least $1-\delta/3$, they all hold at the same time with a probability at least $1-\delta$, which concludes the proof.
\end{proof}

As in Section \ref{appendix:bal}, we decompose the regret between the \textrm{BIAS}, \textrm{REG} and \textrm{VAR} terms. Note that the analysis of \textrm{REG} and \textrm{VAR} terms are different, although conceptually the same. The proof is directly based on the algorithmic update~\eqref{al:adapt_update}, in which the sequences $(Z_h^t)_{h\in[H]}$ are defined.

\[\]

\begin{lemma}\label{lemma:ada_decompos}
    Using \AdaptiveFTRL yields
    \begin{align*}
        \regret_\mathrm{max}^T &\leq \underbrace{\sum_{t=1}^T\scal{\mu_{1:}^t}{\ell^t-\tell^t}}_{\textrm{BIAS I}}+\underbrace{\max_{\mu_{1:}^\dagger\in\maxpi}\sum_{t=1}^T\scal{\w^\dagger}{\tell^t-\ell^t}}_{\textrm{BIAS II}}\\
        &\qquad+\underbrace{\max_{\mu^\dagger\in\maxpi}\mathcal{D}_{\eta^{T+1}}\pa{\mu^\dagger,\mu^0}}_{\textrm{REG}}+\underbrace{\sum_{t=1}^T\bra{\scal{\mu_{1:}^t}{\tell^t}+\frac{1}{\eta_1^{t+1}(x_1^t)}\log\pa{Z_1^t}}}_{\textrm{VAR}}\,.
    \end{align*} 
\end{lemma}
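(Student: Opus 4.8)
The plan is to mirror the regret decomposition of Lemma~\ref{lemma:bal_decompos}, but to account for the fact that the regularizer $\mathcal{D}_{\eta^t}(\cdot,\mu^0)$ now changes with $t$, and then to read off the last two terms directly from the algorithmic form of update~\eqref{eqn:U2} established in Proposition~\ref{prop:update}. First I would split the regret against a fixed comparator $\mu^\dagger\in\maxpi$ by inserting $\pm\tell^t$: $\sum_t\scal{\mu_{1:}^t-\mu_{1:}^\dagger}{\ell^t}=\sum_t\scal{\mu_{1:}^t}{\ell^t-\tell^t}+\sum_t\scal{\mu_{1:}^\dagger}{\tell^t-\ell^t}+\sum_t\scal{\mu_{1:}^t-\mu_{1:}^\dagger}{\tell^t}$. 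Using $\max(A+B)\le\max A+\max B$, the first sum is $\textrm{BIAS I}$ and the second (maximized over $\mu^\dagger$) is $\textrm{BIAS II}$, so it remains to bound the estimated regret $\widehat{R}:=\max_{\mu^\dagger}\sum_t\scal{\mu_{1:}^t-\mu_{1:}^\dagger}{\tell^t}$ by $\textrm{REG}+\textrm{VAR}$.

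Next comes a ``be-the-leader'' step for \FTRL with a time-varying regularizer. Since $\mu^{T+1}$ minimizes $G^{T+1}(\mu):=\scal{\mu_{1:}}{\tL^T}+\mathcal{D}_{\eta^{T+1}}(\mu,\mu^0)$, comparing $G^{T+1}(\mu^{T+1})\le G^{T+1}(\mu^\dagger)$ gives $-\scal{\mu_{1:}^\dagger}{\tL^T}\le-\scal{\mu_{1:}^{T+1}}{\tL^T}-\mathcal{D}_{\eta^{T+1}}(\mu^{T+1},\mu^0)+\mathcal{D}_{\eta^{T+1}}(\mu^\dagger,\mu^0)$. Hence $\widehat{R}\le\textrm{REG}+\bra{\sum_t\scal{\mu_{1:}^t}{\tell^t}-\scal{\mu_{1:}^{T+1}}{\tL^T}-\mathcal{D}_{\eta^{T+1}}(\mu^{T+1},\mu^0)}$, where $\textrm{REG}=\max_{\mu^\dagger}\mathcal{D}_{\eta^{T+1}}(\mu^\dagger,\mu^0)$, the bracket no longer depending on $\mu^\dagger$.

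The heart of the proof is to rewrite $-\scal{\mu_{1:}^{T+1}}{\tL^T}$ in terms of the root log-partitions $Z_1^t$. I would reuse the stationarity condition from the proof of Proposition~\ref{prop:update}: update~\eqref{eqn:U2} yields $\tL^T=g_{T+1}-\nabla\Psi_{\eta^{T+1}}(\mu^{T+1})+\nabla\Psi_{\eta^{T+1}}(\mu^0)$ with $g_{T+1}\in F^\perp$ and $g_{T+1}=\sum_{t=1}^T\Delta_{t+1}$, each $\Delta_{t+1}$ characterized by the multiplier $\lambda^{t+1}$. Since $\mu_{1:}^{T+1}-u\in F$, we get $\scal{\mu_{1:}^{T+1}}{g_{T+1}}=\scal{u}{g_{T+1}}$, while the gradient formula~\eqref{eq:deriv_entropy} (weighted by $1/\eta_h^{T+1}(x_h)$) together with the definition of the dilated divergence identifies $\scal{\mu_{1:}^{T+1}}{\nabla\Psi_{\eta^{T+1}}(\mu^{T+1})-\nabla\Psi_{\eta^{T+1}}(\mu^0)}=\mathcal{D}_{\eta^{T+1}}(\mu^{T+1},\mu^0)$. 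Evaluating $\scal{u}{g_{T+1}}$ then telescopes along the parent--child structure of the flow constraints: from $\Delta_{t+1}(x_h,a_h)=-\lambda^{t+1}(x_h)+\sum_{x_{h+1}}\lambda^{t+1}(x_{h+1})$ and perfect recall, all interior contributions cancel and only the roots survive, giving $\scal{u}{\Delta_{t+1}}=-\sum_{x_1\in\cX_1}\lambda^{t+1}(x_1)$; since $u(x_1)=1$ and $\lambda^{t+1}$ vanishes off the episode-$t$ trajectory, this equals $-\lambda^{t+1}(x_1^t)=-\frac{1}{\eta_1^{t+1}(x_1^t)}\log(Z_1^t)$, using $Z_h^t=\exp\pa{\eta_h^{t+1}(x_h^t)\lambda^{t+1}(x_h^t)}$. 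Thus $-\scal{\mu_{1:}^{T+1}}{\tL^T}=\sum_t\frac{1}{\eta_1^{t+1}(x_1^t)}\log(Z_1^t)+\mathcal{D}_{\eta^{T+1}}(\mu^{T+1},\mu^0)$.

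Finally, substituting this back cancels the $\mathcal{D}_{\eta^{T+1}}(\mu^{T+1},\mu^0)$ terms exactly, leaving $\widehat{R}\le\textrm{REG}+\sum_t\bra{\scal{\mu_{1:}^t}{\tell^t}+\frac{1}{\eta_1^{t+1}(x_1^t)}\log(Z_1^t)}=\textrm{REG}+\textrm{VAR}$, which combined with the first paragraph gives the claim. The hard part will be the third paragraph: carrying out the tree telescoping of $\scal{u}{g_{T+1}}$ into the root multipliers and matching the exact constant $\frac{1}{\eta_1^{t+1}(x_1^t)}\log(Z_1^t)$ to $\lambda^{t+1}(x_1^t)$. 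This rests entirely on the $F^\perp$ characterization of the constraints, the off-trajectory vanishing of $\lambda^{t+1}$, and the root normalization $u(x_1)=1$ --- all consequences of perfect recall already exploited in Proposition~\ref{prop:update} --- and on the fact that the extra divergence produced in Step~3 is exactly the one discarded in Step~2.
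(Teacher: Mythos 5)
Your proposal is correct, but it reaches the key identity by a different mechanism than the paper. The paper never invokes a be-the-leader step: it takes the explicit product-form update of Algorithm~\ref{al:adapt_update} (whose correctness is Proposition~\ref{prop:update}), multiplies its logarithm by the comparator weights $\mu^\dagger_{1:h}(x_h,a_h)$, and telescopes over $t$ to get the exact identity $-\sum_{t=1}^T\scal{\mu^\dagger_{1:}}{\tell^t}=-\mathcal{D}_{\eta^{T+1}}(\mu^\dagger,\mu^{T+1})+\mathcal{D}_{\eta^{T+1}}(\mu^\dagger,\mu^0)+\sum_{t=1}^T\frac{1}{\eta_1^{t+1}(x_1^t)}\log(Z_1^t)$, after which the single nonnegative divergence $\mathcal{D}_{\eta^{T+1}}(\mu^\dagger,\mu^{T+1})$ is dropped. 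You instead keep the comparator only in the standard FTRL optimality inequality $G^{T+1}(\mu^{T+1})\le G^{T+1}(\mu^\dagger)$, and then convert $-\scal{\mu^{T+1}_{1:}}{\tL^T}$ into the root log-partitions by exploiting the proof internals of Proposition~\ref{prop:update}: the stationarity condition with $g_{T+1}\in F^\perp$ (note this needs $g_1=0$, which holds since $\mu^1=\mu^0$), the multiplier characterization of $F^\perp$, the off-trajectory vanishing of $\lambda^{t+1}$, and the identity $Z_h^t=\exp\pa{\eta_h^{t+1}(x_h^t)\lambda^{t+1}(x_h^t)}$; the tree telescoping of $\scal{u}{\Delta_{t+1}}$ down to $-\lambda^{t+1}(x_1^t)$ checks out, using the flow constraints and the root normalization $\sum_{a_1}u(x_1,a_1)=1$ (your ``$u(x_1)=1$'' should be read this way). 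The two routes carry exactly the same slack --- by the Bregman three-point identity, the optimality gap you discard equals the divergence the paper discards --- so neither is stronger; what yours buys is a cleaner separation into a generic FTRL template (be-the-leader plus regularizer range) whose inequality step is regularizer-agnostic, at the price of re-exposing the KKT/multiplier machinery that the paper only uses once, inside Proposition~\ref{prop:update}, and of relying on equation~\eqref{eq:deriv_entropy} with the $1/\eta^r_h(x_h)$ weight restored (the paper's display omits it, but, as you noted, its subsequent use includes it).
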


\begin{proof}
Let $t\in[T]$ and $h\in[H]$. From Algorithm \ref{al:adapt_update}, we have for all $a_h\in\cAs{x_h^t}$,
    \[\mu_h^{t+1}(x_h^t,a_h)=\mu_h^t(a_h|x_h^t)^{\alpha_h^t}\mu_h^0(a_h|x_h^t)^{1-\alpha_h^t}\exp\pa{-\log(Z_h^t)-\indic{a_h=a_h^t}\eta_h^{t+1}(x_h^t)\lell_h^t}\]
    
    where $\lell_h^t=\tell_h^t-\frac{\log{Z_{h+1}^t}}{\oeta_{h+1}^t}$. We then get, for all $\mu^\dagger\in\maxpi$,
    
    \begin{align*}
        \sum_{a_h\in\cAs{x_h^t}}\mu_{1:h}^\dagger(x_h^t,a_h)\log(\mu_{h}^{t+1}(a_h|x_h^t))=&\sum_{a_h\in\cAs{x_h^t}}\mu_{1:h}^\dagger(x_h,a_h)\bra{\alpha_h^t\log(\mu_{h}^t(a_h|x_h^t))+(1-\alpha_h^t)\log(\mu_h^0(a_h|x_h^t))-\log(Z_h^t)}\\
    &-\eta_h^{t+1}(x_h^t)\bra{\w_{1:h}^{\dagger}(x_h^t,a_h^t)\tell_h^t+\frac{\eta_{h}^{t+1}(x_h^t)}{\mu_{1:h}^{\dagger}(x_h^t,a_h)}\log(Z_{h+1}^t)}\,.
    \end{align*}
    
    Using this equality then gives, with the convention $\mu_{1:0}^\dagger(.)=1$,
    
    \begin{align*}
        -\scal{\w^\dagger_{1:}}{\tell^t}&=-\sum_{h=1}^H\w_{1:h}^\dagger(x_h^t,a_h^t)^{\dagger}\tell_h^t\\
        &=\sum_{h=1}^H\sum_{a_h\in\cAs{x_h^t}}\frac{\mu_{1:h}^\dagger(x_h,a_h)}{\eta_h^{t+1}(x_h^t)}\bra{\log(\mu_h^{t+1}(a_h|x_h^t))-\alpha_h^t\log(\mu_h^t(a_h|x_h^t))-(1-\alpha_h^t)\log(\mu_h^0(a_h|x_h^t))+\log(Z_h^t)}\\
        &\qquad-\sum_{h=1}^H\frac{\mu_{1:h}^\dagger(a_h^t|x_h^t)}{\eta_{h+1}^{t+1}}\log(Z_{h+1}^t)\\
        &=\sum_{h=1}^H\sum_{a_h\in\cAs{x_h^t}}\frac{\mu_{1:h}^\dagger(x_h,a_h)}{\eta_h^{t+1}(x_h^t)}\bra{-\log\pa{\frac{\mu_h^\dagger(a_h|x_h^t)}{\mu_{h}^{t+1}(a_h|x_h^t)}}+\alpha_h^t\log\pa{\frac{\mu_h^\dagger(a_h|x_h^t)}{\mu_{h}^t(a_h|x_h^t)}}+(1-\alpha_h^t)\log\pa{\frac{\mu_h^\dagger(a_h|x_h^t)}{\mu_{h}^0(a_h|x_h^t)}}}\\
        &\qquad+\sum_{h=1}^H\bra{\frac{\mu_{1:{h-1}}^\dagger(a_{h-1}^t|x_{h-1}^t)}{\eta_h^{t+1}(x_h^t)}\log(Z_h^t)-\frac{\mu_{1:h}^\dagger(a_h^t|x_h^t)}{\eta_{h+1}^{t+1}(x_{h+1}^t)}\log(Z_{h+1}^t)}\\
        &=\sum_{h=1}^H \sum_{a_h\in\cAs{x_h^t}}\mu_{1:h}^\dagger(x_h,a_h)\left[\frac{-1}{\eta_h^{t+1}(x_h^t)}\log\pa{\frac{\mu_h^\dagger(a_h|x_h^t)}{\mu_{h}^{t+1}(a_h|x_h^t)}}+\frac{1}{\eta_h^{t}}\log\pa{\frac{\mu_h^\dagger(a_h|x_h^t)}{\mu_{h}^t(a_h|x_h^t)}}\right.\\
        &\qquad+\left.\pa{\frac{1}{\eta_h^{t+1}(x_h^t)}-\frac{1}{\eta_h^{t}(x_h^t)}}\log\pa{\frac{\mu_h^\dagger(a_h|x_h^t)}{\mu_{h}^0(a_h|x_h^t)}}\right]
        +\frac{1}{\eta_{1}^{t+1}(x_1^t)}\log(Z_1^t)\,.
    \end{align*}
    
    Then, using that $\eta_h^t(x_h)$ only updates when $x_h=x_h^t$ and in the second equality telescoping, with $\mu^1=\mu^0$, we get
    
    \begin{align*}
        -\sum_{t=1}^T\scal{\w^\dagger_{1:}}{\tell^t}&=\sum_{t=1}^T\sum_{h=1}^H\sum_{(x_h,a_h)\in\cAXh}\mu_{1:h}^\dagger(x_h,a_h)\left[-\frac{-1}{\eta_h^{t+1}(x_h)}\log\pa{\frac{\mu_h^\dagger(a_h|x_h)}{\mu_{h}^{t+1}(a_h|x_h)}}+\frac{1}{\eta_h^t(x_h)}\log\pa{\frac{\mu_h^\dagger(a_h|x_h)}{\mu_h^t(a_h|x_h)}}\right.\\
        &\qquad+\left.\pa{\frac{1}{\eta_h^{t+1}(x_h)}-\frac{1}{\eta_h^{t}}(x_h)}\log\pa{\frac{\mu_a^\dagger}{\mu_{h}^0(x_h^t,a_h)}}\right]
        +\sum_{t=1}^T\frac{1}{\eta_{1}^{t+1}(x_1^t)}\log(Z_1^t)\\
        &=-\mathcal{D}_{\eta^{T+1}}\pa{\mu^\dagger,\mu^{T+1}}+\mathcal{D}_{\eta^{T+1}}\pa{\mu^\dagger,\mu^0}+\sum_{t=1}^T\frac{1}{\eta_{1}^{t+1}(x_1^t)}\log(Z_1^t)\\
        &\leq \mathcal{D}_{\eta^{T+1}}\pa{\mu^\dagger,\mu^0}+\sum_{t=1}^T\frac{1}{\eta_{1}^{t+1}(x_1^t)}\log(Z_1^t)\,.
    \end{align*}
    The rest of the lemma then follows by the decomposition
    
    \[\scal{\mu_{1:}^t-\w_{1:}^\dagger}{\ell^t}=\scal{\mu_{1:}^t}{\ell^t-\tell^t}+\scal{\w_{1:}^\dagger}{\tell^t-l^t}+\scal{\mu_{1:}^t-\w^\dagger}{\tell^t}\]
    
    and the expression of the regret as a scalar product.
\end{proof}

The following lemma gives a bound with high probability on the two \textrm{BIAS} terms for \AdaptiveFTRL.

\begin{lemma}\label{lemma:bias_ada}
Under event $\eventadapt$, assuming $\gamma\geq 1$, we have
    \[\textrm{BIAS I}\leq H\sqrt{2T\iota'}+\gamma\logtwo\AX\,, \qquad \textrm{BIAS II}\leq X+\frac{2\iota'}{\gamma}HT\,.\]
\end{lemma}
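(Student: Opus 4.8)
The plan is to derive both inequalities as purely deterministic estimates, valid whenever the event $\eventadapt$ of Lemma~\ref{lemma:approx_ada} holds, exploiting the three bounds \eqref{p_ineq}, \eqref{l_ineq} and \eqref{azuma_ineq}. For $\textrm{BIAS I}=\sum_{t=1}^T\scal{\mu_{1:}^t}{\ell^t-\tell^t}$ I would first separate the deterministic bias of the IX estimator from the martingale fluctuation by writing $\ell^t-\tell^t=(\ell^t-\hl^t)+(\hl^t-\tell^t)$, where $\hl^t$ denotes the unbiased estimator. The term $\sum_t\scal{\mu_{1:}^t}{\ell^t-\hl^t}$ is exactly the quantity controlled by the Azuma bound \eqref{azuma_ineq}, contributing $H\sqrt{2T\iota'}$.

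For the remaining bias $\sum_t\scal{\mu_{1:}^t}{\hl^t-\tell^t}$ I would compute it coordinate-wise. Since $\hl_h^t-\tell_h^t=\indic{x_h^t=x_h,a_h^t=a_h}(1-r_h^t)\big(\tfrac{1}{\mu_{1:h}^t}-\tfrac{1}{\mu_{1:h}^t+\gamma_h^t}\big)$, multiplying by $\mu_{1:h}^t(x_h,a_h)$ collapses the factor to $\indic{\cdot}(1-r_h^t)\,\gamma_h^t/(\mu_{1:h}^t+\gamma_h^t)$. Using $1-r_h^t\le1$ and recognising that at the visited pair $\tp_{1:h}^t=1/(\mu_{1:h}^t+\gamma_h^t)$ while $\gamma_h^t=\gamma/(1+\tP_{1:h}^{t-1})$, each per-step contribution becomes $\gamma\,\tp_{1:h}^t(x_h,a_h)/(1+\tP_{1:h}^{t-1}(x_h,a_h))$. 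Summing over $t$ for each fixed $(x_h,a_h)$, I would invoke Lemma~\ref{lemma_logbound}, whose hypothesis $\tp_{1:h}^t\le1/\gamma_h^t=(1+\tP_{1:h}^{t-1})/\gamma\le1+\tP_{1:h}^{t-1}$ is exactly where $\gamma\ge1$ is used, to get the inner sum $\le\log_2(1+\tP_{1:h}^T(x_h,a_h))$. Bounding $1+\tP_{1:h}^T\le2+2P_{1:h}^T(x_h)\le2+2T$ via \eqref{p_ineq} and $P_{1:h}^T(x_h)\le T$ gives $\log_2(1+\tP_{1:h}^T)\le\log_2(2+2T)=\logtwo$; summing over all $(x_h,a_h)\in\cAXh$ and all depths, which totals $\AX$ pairs, yields $\gamma\logtwo\AX$, and adding the martingale term establishes the $\textrm{BIAS I}$ bound.

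For $\textrm{BIAS II}=\max_{\mu^\dagger}\scal{\mu_{1:}^\dagger}{\tL^T-L^T}$ I would apply \eqref{l_ineq} termwise, obtaining $\scal{\mu_{1:}^\dagger}{\tL^T-L^T}\le\sum_{h}\sum_{(x_h,a_h)}\mu_{1:h}^\dagger(x_h,a_h)\big(1+\tfrac{2\iota'}{\gamma}P_{1:h}^T(x_h)\big)$, and split the two contributions. For the constant part, $\sum_{a_h}\mu_{1:h}^\dagger(x_h,a_h)$ is the realization-plan mass reaching $x_h$, which is at most $1$, so $\sum_h\sum_{x_h\in\cX_h}\sum_{a_h}\mu_{1:h}^\dagger(x_h,a_h)\le\sum_h|\cX_h|=X$. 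For the weighted part, the key observation — which I expect to be the main obstacle — is that for each episode $k$ and depth $h$ the products $\big(\sum_{a_h}\mu_{1:h}^\dagger(x_h,a_h)\big)\,p_{1:h}^{\nu^k}(x_h)$ are precisely the probabilities of reaching the information sets $x_h\in\cX_h$ under the profile $(\mu^\dagger,\nu^k)$, and therefore sum to $1$ over $x_h\in\cX_h$. Hence $\sum_{(x_h,a_h)}\mu_{1:h}^\dagger(x_h,a_h)P_{1:h}^T(x_h)=\sum_{k=1}^T\sum_{x_h}\big(\sum_{a_h}\mu_{1:h}^\dagger(x_h,a_h)\big)p_{1:h}^{\nu^k}(x_h)=T$ for every $h$, which sums to $HT$ over depth, so the weighted part is at most $\tfrac{2\iota'}{\gamma}HT$. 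Combining the two parts gives $\textrm{BIAS II}\le X+\tfrac{2\iota'}{\gamma}HT$.
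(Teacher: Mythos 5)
Your proposal is correct and follows essentially the same route as the paper's proof: the same split of $\textrm{BIAS I}$ into the martingale part (handled by \eqref{azuma_ineq}) and the IX bias part (collapsed to $\gamma\,\tp_{1:h}^t/(1+\tP_{1:h}^{t-1})$, summed via Lemma~\ref{lemma_logbound} using $\gamma\geq 1$, then controlled by \eqref{p_ineq} and $P_{1:h}^T(x_h)\leq T$ to get $\gamma\logtwo\AX$), and the same termwise application of \eqref{l_ineq} for $\textrm{BIAS II}$ with the split into the realization-plan mass term ($\leq X$) and the transition term ($\leq \frac{2\iota'}{\gamma}HT$). Your only addition is to spell out the reach-probability argument showing $\scal{\mu_{1:}^\dagger}{P_{1:}^T}\leq HT$, which the paper asserts without detail; this is exactly the right justification.
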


\begin{proof}
    We first decompose BIAS I into two terms,
    \[\sum_{t=1}^T\scal{\mu_{1:}^t}{\ell^t-\tell^t}=\sum_{t=1}^T\scal{\mu_{1:}^t}{\ell^t-\hl^t}+\sum_{t=1}^T\scal{\mu_{1:}^t}{\hl^t-\tell^t}\,.\]
    
    The first term is directly upper-bounded by $H\sqrt{2T\iota'}$ using the inequality \eqref{azuma_ineq}. For the second term, we have
    
    \begin{align*}
        \sum_{t=1}^T\scal{\mu_{1:}^t}{\hl^t-\tell^t}&=\sum_{t=1}^T\sum_{h=1}^H\sum_{(x_h,a_h)\in\cAXh}\indic{x_h=x_h^t,a_h=a_h^t} \:\mu_{1:h}^t(x_h,a_h)\pa{\frac{1-r_h^t}{\mu_{1:h}^t(x_h,a_h)}-\frac{1-r_h^t}{\mu_{1:h}^t(x_h,a_h)+\gamma_h^t(x_h,a_h)}}\\
        &=\sum_{t=1}^T\sum_{h=1}^H\sum_{(x_h,a_h)\in\cAXh}\indic{x_h=x_h^t,a_h=a_h^t}(1-r_h^t)\frac{\gamma_h^t(x_h,a_h)}{\mu_{1:h}^t(x_h,a_h)+\gamma_h^t(x_h,a_h)}\\
        &\leq \sum_{t=1}^T\sum_{h=1}^H\sum_{(x_h,a_h)\in\cAXh}\tp_{1:h}^t(x_h,a_h)\gamma_h^t(x_h,a_h)\\
        &=\gamma\sum_{h=1}^H\sum_{(x_h,a_h)\in\cAXh}\sum_{t=1}^T\frac{\tp_{1:h}^t(x_h,a_h)}{1+\tP_{1:h}^{t-1}(x_h,a_h)}\\
        &\leq_1 \gamma \sum_{h=1}^H\sum_{(x_h,a_h)\in\cAXh}\log_2(1+\tP_{1:h}^T(x_h,a_h))\\
        &\leq_2 \gamma \sum_{h=1}^H\sum_{(x_h,a_h)\in\cAXh}\log_2(2+2P_{1:h}^T(x_h,a_h))\\
        &\leq \gamma \logtwo\AX
    \end{align*}
    
    as $\logtwo=1+\log_2(1+T)$ by definition, where we used Lemma~\ref{lemma_logbound} for $\leq_1$, as $\tp_{1:h}^t(x_h,a_h)\leq \frac{1}{\gamma}(1+\tP_{1:h}^{t-1}(x_h,a_h))\leq 1+\tP_{1:h}^{t-1}(x_h,a_h)$ and inequality \eqref{p_ineq} for $\leq_2$.
    
    In order to bound $\textrm{BIAS II}$, we use inequality \eqref{l_ineq} and get for all $\w^\dagger\in\maxpi$,
    
    \[\scal{\w^\dagger_{1:}}{\tL^T-L^T}\leq \sum_{h=1}^H\sum_{(x_h,a_h)\in\cAXh}\w_{1:h}^\dagger(x_h,a_h)+\frac{2\iota'}{\gamma}\scal{\mu_{1:}^\dagger}{P_{1:}^T}\leq X+\frac{2\iota'}{\gamma}HT\,.\]
\end{proof}

Contrary to the analysis of the first algorithm, the bound on the \textrm{REG} term now only holds with high probability because of the adaptive learning rate.

\begin{lemma}
    Under the event $\eventadapt$, we have the bound
    \[\textrm{REG}\leq \frac{2H\log(\AX)}{\eta} (X+T)\,.\]
\end{lemma}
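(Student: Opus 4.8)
The plan is to start from $\textrm{REG}=\max_{\mu^\dagger\in\maxpi}\mathcal{D}_{\eta^{T+1}}(\mu^\dagger,\mu^0)$ and exploit two features: the dilated divergence collapses into per-information-set relative entropies, and the adaptive learning rates are controlled through the estimated transitions. First I would fix $\mu^\dagger$ and, for each $h$ and $x_h$, group the sum over $a_h$. Since $\mu_{1:h}^\dagger(x_h,a_h)=\mu_{1:h-1}^\dagger(x_{h-1},a_{h-1})\,\mu_h^\dagger(a_h|x_h)$ and $\eta_h^{T+1}(x_h)$ does not depend on $a_h$, the inner sum $\sum_{a_h}\mu_h^\dagger(a_h|x_h)\log(\mu_h^\dagger(a_h|x_h)/\mu_h^0(a_h|x_h))$ is the relative entropy between $\mu_h^\dagger(\cdot|x_h)$ and the uniform policy $\mu^0$, which is nonnegative and bounded by $\log\As{x_h}\le\log(\AX)$. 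As the prefactor $\mu_{1:h-1}^\dagger(x_{h-1},a_{h-1})/\eta_h^{T+1}(x_h)$ is nonnegative, this gives
\[\textrm{REG}\le \log(\AX)\,\max_{\mu^\dagger\in\maxpi}\sum_{h=1}^H\sum_{x_h\in\cX_h}\frac{\mu_{1:h-1}^\dagger(x_{h-1},a_{h-1})}{\eta_h^{T+1}(x_h)}\,,\]
reducing the problem to bounding this weighted sum of inverse learning rates.

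Next I would unfold $1/\eta_h^{T+1}(x_h)=\frac1\eta\max_{x'_{h'}\ge x_h}\big(1+\tP_{1:h'}^T(x'_{h'})\big)$ and pull the constant out of the max. The key monotonicity observation is that for any descendant $x'_{h'}\ge x_h$ the reach probability satisfies $p_{1:h'}^{\nu^k}(x'_{h'})\le p_{1:h}^{\nu^k}(x_h)$, since the path to $x'_{h'}$ extends that to $x_h$ and the extra transition factors are at most $1$; summing over $k$ gives $P_{1:h'}^T(x'_{h'})\le P_{1:h}^T(x_h)$. Combining this with inequality~\eqref{p_ineq} of Lemma~\ref{lemma:approx_ada} (which, being an average over actions, carries over verbatim to the averaged $\tP_{1:h'}^T(x'_{h'})$) and using that it holds on $\eventadapt$ yields $\max_{x'_{h'}\ge x_h}\tP_{1:h'}^T(x'_{h'})\le 1+2P_{1:h}^T(x_h)$, hence $1/\eta_h^{T+1}(x_h)\le \frac2\eta\big(1+P_{1:h}^T(x_h)\big)$.

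It then remains to bound $\sum_{h,x_h}\mu_{1:h-1}^\dagger(x_{h-1},a_{h-1})\big(1+P_{1:h}^T(x_h)\big)$. The term with the $1$ contributes at most $\sum_{h=1}^H\sum_{x_h\in\cX_h}1=X$, since every factor $\mu_{1:h-1}^\dagger\le 1$. For the term with $P_{1:h}^T(x_h)=\sum_{k=1}^T p_{1:h}^{\nu^k}(x_h)$, I would swap the order of summation and prove, by grouping descendants under their predecessor $(x_{h-1},a_{h-1})$ and repeatedly applying the realization-plan constraint together with $\sum_{x_h}p_{1:h}^{\nu^k}(x_h)=p_{1:h-1}^{\nu^k}(x_{h-1})$, the marginalization identity $\sum_{x_h\in\cX_h}\mu_{1:h-1}^\dagger(x_{h-1},a_{h-1})\,p_{1:h}^{\nu^k}(x_h)=1$ (it is the total probability of reaching depth $h$ under $(\mu^\dagger,\nu^k)$, shown by induction on $h$ from the base case $\sum_{x_1}p_0(x_1)=1$). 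This collapses the sum to $\sum_{k=1}^T\sum_{h=1}^H 1=TH$, so the weighted sum is at most $\frac2\eta(X+TH)$ and therefore $\textrm{REG}\le\frac{2\log(\AX)}{\eta}(X+TH)\le\frac{2H\log(\AX)}{\eta}(X+T)$, using $X+TH\le H(X+T)$.

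The \emph{main obstacle} I anticipate is the second step: correctly handling the subtree $\min$ (equivalently, $\max$ of $\tP$) in the definition of $\eta_h^{T+1}(x_h)$, whose range is exponentially large. The saving idea is that monotonicity of the reach probabilities down the tree lets one replace the subtree maximum of $\tP$ by its value $P_{1:h}^T(x_h)$ at $x_h$ itself; once this is in place, the telescoping marginalization identity is a routine consequence of perfect recall and the realization-plan constraints.
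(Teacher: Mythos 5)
Your proposal is correct and follows essentially the same route as the paper's proof: expand the dilated divergence, bound the per-information-set relative entropy to the uniform base policy by $\log(\AX)$, use inequality~\eqref{p_ineq} together with monotonicity of the adversarial reach probabilities to replace the subtree maximum $\max_{x'_{h'}\geq x_h}(1+\tP^T_{1:h'}(x'_{h'}))$ by $2(1+P^T_{1:h}(x_h))$, and conclude with the marginalization $X+HT\leq H(X+T)$. The only difference is that you spell out two steps the paper leaves implicit (the descendant monotonicity $P^T_{1:h'}(x'_{h'})\leq P^T_{1:h}(x_h)$ and the identity $\sum_{x_h}\mu^\dagger_{1:h-1}(x_{h-1},a_{h-1})\,p^{\nu^k}_{1:h}(x_h)=1$), which is a faithful filling-in rather than a different argument.
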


\begin{proof}
    Using inequality \eqref{p_ineq} and the fact that $\mu^0$ is the uniform distribution, we get for any $\mu^\dagger\in\maxpi$
    \begin{align*}
        \mathcal{D}_{\eta^{T+1}}\pa{\mu^\dagger,\mu^0}&=\sum_{h=1}^H\sum_{(x_h,a_h)\in\cAXh}\frac{1}{\eta_h^{T+1}(x_h)}\mu^{\dagger}_{1:h}(x_h,a_h)\log\pa{\frac{\mu_h^\dagger(a_h|x_h)}{\mu_h^0(a_h|x_h)}}\\
        &\leq \frac{1}{\eta}\sum_{h=1}^H\sum_{x_h\in\cX_h}\max_{x'_{h'}\geq x^{}_h}\bra{1+\tP_{1:h'}^T(x'_{h'})}\sum_{a_h\in\cAs{x}} \mu_{1:h}^\dagger(x_h,a_h)\log(\Ax)\\
        &\leq \frac{\log(\AX)}{\eta}\sum_{h=1}^H\sum_{x_h\in\cX_h}\max_{x'_{h'}\geq x^{}_h}\bra{1+\frac{1}{\Ax}\sum_{a'_{h'}\in\cA{x_{h'}}}\pa{1+\tP_{1:h'}^T(x'_{h'},a'_{h'})}}\sum_{a_h\in\cAs{x}}\mu_{1:h}^\dagger(x_h,a_h)\\
        &\leq \frac{\log(\AX)}{\eta}\sum_{h=1}^H\sum_{x_h\in\cX_h}\max_{x'_{h'}\geq x^{}_h}\bra{2+2P_{1:h'}^T(x'_{h'})}\sum_{a_h\in\cAs{x}}\mu_{1:h}^\dagger(x_h,a_h)\\
        &\leq \frac{2\log(\AX)}{\eta}\sum_{h=1}^H\sum_{x_h\in\cX_h}\bra{1+P_{1:h}^T(x_{h})}\sum_{a_h\in\cAs{x}}\mu_{1:h}^\dagger(x_h,a_h)\\
        &\leq \frac{2H\log(\AX)}{\eta} (X+T)
    \end{align*}
    and we concludes taking the max over $\mu^\dagger\in\maxpi$.
\end{proof}

Upper bounding the \textrm{VAR} term is however more difficult. We give a preliminary lemma that requires the assumption that the individual learning rates of the information states at any given time increase along a trajectory. This hypothesis is trivially satisfied by \AdaptiveFTRL thanks to the particular definition of $\eta_h^t$.

\begin{lemma}\label{lemma_varstep}
    For all $t\in [T]$, if the learning rates at time $t$ increase along the trajectory, then
    
    \[\scal{\mu_{1:}^t}{\tell^t}+\frac{1}{\eta_1^{t+1}(x_1^t)}\log\pa{Z_1^t}\leq \frac{H}{2}\sum_{h=1}^H \eta_h^t(x_h^t)\tell_h^t(x_h^t,a_h^t)\,.\]
\end{lemma}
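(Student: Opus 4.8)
The plan is to work directly with the quantities $Z_h^t$, $\lell_h^t$ and $\alpha_h^t$ produced by the update of Algorithm~\ref{al:adapt_update}, abbreviating $\eta_h:=\eta_h^{t+1}(x_h^t)$, $m_h:=\mu_{1:h}^t(x_h^t,a_h^t)$ and letting $w_h(a_h):=\mu_h^t(a_h|x_h^t)^{\alpha_h^t}\mu_h^0(a_h|x_h^t)^{1-\alpha_h^t}$ be the unnormalized weights entering $Z_h^t$. Two elementary facts are used throughout. First, since the learning rates of \AdaptiveFTRL are non-increasing in $t$ (the counts $\tP$ only grow), $\alpha_h^t=\eta_h^{t+1}(x_h^t)/\eta_h^t(x_h^t)\le 1$, so Hölder's inequality gives $\sum_{a_h}w_h(a_h)\le 1$. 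Second, $\lell_h^t\ge 0$, because $\tell_h^t\ge 0$ and each continuation term $-\log(Z_{h+1}^t)/\eta_{h+1}^{t+1}(x_{h+1}^t)\ge 0$ (as $Z_{h+1}^t\le 1$ by the first fact).

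First I would establish an \emph{exact} telescoping identity, which is the heart of the argument. Writing $\psi_h:=-\log(Z_h^t)/\eta_h$ (so $\psi_{H+1}=0$ and $\lell_h^t=\tell_h^t+\psi_{h+1}$), the update reads $\mu_h^{t+1}(a_h^t|x_h^t)=w_h(a_h^t)\,e^{-\eta_h\lell_h^t}/Z_h^t$, whence $c_h:=\frac1{\eta_h}\log\frac{w_h(a_h^t)}{\mu_h^{t+1}(a_h^t|x_h^t)}=\frac1{\eta_h}\log Z_h^t+\lell_h^t=-\psi_h+\tell_h^t+\psi_{h+1}$. Unfolding $\psi_h=\tell_h^t+\psi_{h+1}-c_h$ gives $\psi_1=\sum_{h}\tell_h^t-\sum_{h}c_h$, so that
\[
\scal{\mu_{1:}^t}{\tell^t}+\tfrac1{\eta_1}\log Z_1^t=\sum_{h=1}^H(m_h-1)\,\tell_h^t+\sum_{h=1}^H c_h\le\sum_{h=1}^H c_h,
\]
using $m_h\le 1$ and $\tell_h^t\ge 0$. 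This reduction is crucial: it already captures the cancellation that makes the statement true, since for a policy concentrated on the trajectory each $c_h$ vanishes, while the naive ``stability'' quantity $\frac12\sum_h\eta_h m_h(\lell_h^t)^2$ does not.

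It then remains to bound $\sum_h c_h$ by $\frac H2\sum_h\eta_h^t(x_h^t)\tell_h^t$. For this I would bound each node term through the log-moment-generating structure $c_h=\frac1{\eta_h}\log\!\big(w_h(a_h^t)+(1-w_h(a_h^t))e^{\eta_h\lell_h^t}\big)$ plus the slack from $\sum_{a}w_h(a)\le 1$: its curvature is a tilted Bernoulli variance, bounded by $w_h(a_h^t)(1-w_h(a_h^t))$, which vanishes as the policy concentrates, yielding $c_h\le(1-w_h(a_h^t))\lell_h^t+\tfrac{\eta_h}{2}w_h(a_h^t)(\lell_h^t)^2$. The continuation terms are then collapsed using the hypothesis that the learning rates increase along the trajectory: from $\eta_h\le\eta_{h+1}$ and $\psi_{h+1}\le\lell_{h+1}^t$ one telescopes to $\eta_h\lell_h^t\le\sum_{h'\ge h}\eta_{h'}\tell_{h'}^t$, while the importance-sampling normalization $m_{h'}\tell_{h'}^t\le 1-r_{h'}^t\le 1$ converts the remaining products into sums of single losses. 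The factor $\tfrac{H}{2}$ appears because each loss $\tell_{h'}^t$ is charged to its at most $h'\le H$ ancestors $h\le h'$ on the trajectory, and $\eta_h\le\eta_h^t(x_h^t)$ finally replaces the new learning rate by the old one on the right-hand side.

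The main obstacle is exactly this last aggregation. The dilated structure forces the per-node second-order term to involve the \emph{aggregated} loss $\lell_h^t$, which sums all downstream losses along the trajectory and is not itself controlled by importance sampling; the crude route through $\log Z\le Z-1$ and $\frac12\sum_h\eta_h m_h(\lell_h^t)^2$ overshoots the target (badly so for near-deterministic policies, where the true contribution is negligible). The delicate point is therefore to preserve the exact cancellation of the telescoping identity while simultaneously exploiting the vanishing tilted variance, the normalization $m_{h'}\tell_{h'}^t\le 1$, and the monotonicity of the learning rates along the trajectory, so that the quadratic in the aggregated loss is traded for a linear sum of local losses at the cost of precisely one factor $H$ and no more.
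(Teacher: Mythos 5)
Your preliminaries and your exact identity $\scal{\mu_{1:}^t}{\tell^t}+\tfrac1{\eta_1}\log Z_1^t=\sum_{h}(m_h-1)\tell_h^t+\sum_h c_h$ are correct, but the two steps you build on it both fail, and the first failure is fatal to the plan. Dropping the negative term $\sum_h(m_h-1)\tell_h^t$ and aiming at the intermediate claim $\sum_h c_h\le\frac H2\sum_h\eta_h^t(x_h^t)\tell_h^t$ cannot work, because that claim is false. Already for $H=1$ and $\alpha_1=1$ (so $w_1=\mu_1^t(a_1^t|x_1^t)$ and no H\"older slack) one has $c_1=\tell_1^t+\tfrac1{\eta_1}\log\pa{1-w_1+w_1e^{-\eta_1\tell_1^t}}$; taking $w_1=10^{-2}$, $\tell_1^t=50$ (realizable since $\tell_1^t\le 1/m_1=100$), $\eta_1=\eta_1^t=10^{-1}$ gives $c_1\approx 49.9$ while $\tfrac12\eta_1^t\tell_1^t=2.5$. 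The issue is structural: the linear-in-$\tell$ part of $c_h$ carries no learning-rate factor, and it is cancelled only by the term $\sum_h(m_h-1)\tell_h^t$ that you discarded — this is exactly the cancellation that must survive to the end so that only second-order (in $\eta$) terms remain. Second, the inequality $\psi_{h+1}\le\lell_{h+1}^t$ driving your telescoping is also false: $\psi_{h+1}=\lell_{h+1}^t-c_{h+1}$, and $c_{h+1}\ge\tfrac1{\eta_{h+1}}\log\pa{\sum_a w_{h+1}(a)}$ can be negative, since H\"older gives $\sum_a w_{h+1}(a)\le 1$ with \emph{strict} inequality whenever $\alpha_{h+1}<1$ and $\mu^t_{h+1}(\cdot|x_{h+1}^t)\neq\mu^0_{h+1}(\cdot|x_{h+1}^t)$. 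For instance with all rewards equal to $1$ (so all $\tell_h^t=0$), two actions, $\mu_2^t(\cdot|x_2^t)=(0.9,0.1)$, $\mu^0$ uniform, $\alpha_2=\tfrac12$: $Z_2^t=\sqrt{0.45}+\sqrt{0.05}<1$, hence $\psi_2>0=\lell_2^t$; in that same example your per-node bounds give $\sum_h c_h\le(1-w_1)\psi_2+\tfrac{\eta_1w_1}{2}\psi_2^2>0$ whereas the target is $0$, so the per-node route over-counts even where the slack should help — the negative value $c_2=-\psi_2$, which provides the compensation, is thrown away when each $c_h$ is bounded separately.

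The paper's proof avoids both problems by never splitting additively across depths. H\"older is applied so as to replace the deficient weights $w_h(a)$ by the full conditional probabilities, yielding the multiplicative recursion $(Z_h^t)^{1/\alpha_h^t}\le 1-\omu_h^t+\omu_h^t\pa{\kappa_h^t(Z_{h+1}^t)^{1/\alpha_{h+1}^t}+1-\kappa_h^t}e^{-\oeta_h^t\tell_h^t}$ (this is where the hypothesis enters, through $\kappa_h^t\le 1$); this recursion is propagated to the root via a comparison sequence $W_h^t\ge (Z_h^t)^{1/\alpha_h^t}$ that admits a representation as an expectation of $\exp\pa{-\sum_{h'}y^t_{1:h'-1}z^t_{1:h'}\oeta_{h'}^t\tell_{h'}^t}$ over auxiliary Bernoulli variables, and the linearization ($\log x\le x-1$, $e^{-x}\le 1-x+x^2/2$) happens only once, at the root. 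There the first-order term \emph{exactly} cancels $\scal{\mu_{1:}^t}{\tell^t}$, and Cauchy--Schwarz across the $H$ depths produces the factor $\tfrac H2$ on a remainder that is genuinely second order, with $\tell_h^t\mu_{1:h}^t\le 1$ converting the square into a single loss. If you want to salvage an additive argument, you must retain both $\sum_h(m_h-1)\tell_h^t$ and the exact negative slack contributions inside each $c_{h'}$, and show that together they absorb the linear terms and the slack re-entering through $\lell_h^t$ — precisely the bookkeeping that your sketch replaces by the two false inequalities above.
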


\begin{proof}
    For readability we will define for all $h\in[H]$ in this proof, $\omu_h^t:=\mu_h^t(a_h^t|x_h^t)$, $\oeta_h^t:=\eta_h^t(x_h^t)$, $\kappa_h^t:=\oeta_h^t/\oeta_{h+1}^t$ and $\tell_h^t:=\tell_h^t(x_h^t,a_h^t)$.
    
    Using Hölder inequality, we first get:

    \begin{align*}
        Z_h^t&=\sum_{a_h\in\cAs{x_h^t}}\mu_h^t(a_h|x_h^t)^{\alpha_h^t}\mu_h^0(a_h |x_h^t)^{1-\alpha_h^t}\exp\bra{-\indic{a_h^t=a_h}\eta_h^{t+1}(x_h^t)\lell_h^t}\\
        &\leq \pa{\sum_{a_h\in\cAs{x_h^t}}\mu_h^t(a_h|x_h^t)) \exp\bra{-\indic{a_h=a_h^t}\eta_h^{t+1}(x_h^t)\lell_h^t/\alpha_h^t}}^{\alpha_h^t}\pa{\sum_{a_h\in\cAs{x_h^t}}\mu_h^0(a_h|x_h^t))}^{1-\alpha_h^t}\\
        &=\pa{1-\omu_h^t+\omu_h^t\exp\bra{-\eta_h^{t+1}(x_h^t)\lell_h^t/\alpha_h^t}}^{\alpha_h^t}\,.
    \end{align*}
    
    This yields as $\lell_h^t=\tell_h^t-\log(Z_{h+1}^t)/\eta_{h+1}^{t+1}(x_{h+1}^{t})$ and $\alpha_h^t=\eta_h^{t+1}(x_h^t)/\oeta_h^t$, using convexity and $\kappa_h^t\leq 1$,
    
    \begin{align*}
        (Z_h^t)^{1/\alpha_h^t}&\leq 1-\omu_h^t+\omu_h^t\pa{Z_{h+1}^t}^{\oeta_h^t/\eta_{h+1}^t(x_{h+1}^{t})}\exp\pa{-\oeta_h^t\tell_h^t}\\
        &=1-\omu_h^t+\omu_h^t\pa{Z_{h+1}^t}^{\kappa_h^t/\alpha_{h+1}^t}\exp\pa{-\oeta_h^t\tell_h^t}\\
        &\leq 1-\omu_h^t+\omu_h^t\pa{\kappa_h^t\pa{Z_{h+1}^t}^{1/\alpha_{h+1}^t}+(1-\kappa_h^t)}\exp\bra{-\oeta_h^t\tell_h^t}\,.
    \end{align*}
    
    We can then define the sequence $W_h^t$ such that $W_{H+1}^t:=1$, and for which the previous relation is always an equality, i.e.
    
    \[W_{h}^t:= 1-\omu_h^t+\omu_h^t\pa{\kappa_h^t W_{h+1}^t+(1-\kappa_h^t)}\exp\bra{-\oeta_h^t\tell_h^t}\]
    
    and we can get by induction, for all $h\in[H]$, $W_{h}^t\geq (Z_h^t)^{1/\alpha_h^t}$. If we introduce the independent auxiliary variables $y_h^t\sim \cB(\kappa_h^t)$ and $z_h^t\sim \cB(\omu_h^t)$, with the associated products $y^t_{h:h'}=\Pi_{h\leq i\leq h'}y^t_i$ and $z^t_{h:h'}=\Pi_{h\leq i\leq h'}z^t_i$, we recursively get 
    
    \[ W_h^t=\E_{y^t,z^t}\bra{\exp\pa{-\sum_{h\leq h'\leq H}y^t_{h:h'-1}z^t_{h:h'}\oeta_{h'}^t\tell_{h'}^t}}\,.\]
    
    Finally, $\log(x)\leq -1+x$, $e^{-x}\leq 1-x+x^2/2$ and the above relation leads to
    
    \begin{align*}
        \log(Z_1^t)&\leq \alpha_1^t \log(W_1^t)\\
        &=\alpha_1^t\log\pa{\E_{y^t,z^t}\bra{\exp\pa{-\sum_{h=1}^H y^t_{1:h-1}z^t_{1:h}\oeta_{h}^t\tell_{h}^t}}}\\
        &\leq \alpha_1^t \pa{-1+\E_{y^t,z^t}\bra{\exp\pa{-\sum_{h=1}^H y^t_{1:h-1}z^t_{1:h}\oeta_{h}^t\tell_{h}^t}}}\\
        &\leq-\alpha_1^t\E_{y^t,z^t}\bra{\sum_{h=1}^H y^t_{1:h-1}z^t_{1:h}\oeta_{h}^t\tell_{h}^t}+\frac{1}{2}\alpha_1^t\E_{y^t,z^t}\bra{\pa{\sum_{h=1}^H y^t_{1:h-1}z^t_{1:h}\oeta_{h}^t\tell_{h}^t}^2}\\
        &\leq -\alpha_1^t\sum_{h=1}^H \oeta_1^t \omu_{1:h}^t \tell_h^t+\frac{1}{2}\alpha_1^t\E_{y^t,z^t}\bra{H\sum_{h=1}^H y^t_{1:h-1}z^t_{1:h}\pa{\oeta_h^t \tell_h^t}^2}\\
        &= -\eta_1^{t+1}(x_1^t)\sum_{h=1}^H \omu_{1:h}^t \tell_h^t +\frac{H}{2}\alpha_1^t\sum_{h=1}^H \oeta_1^t\omu_{1:h}^t \oeta_h^t \pa{\tell_h^t}^2\\
        &\leq -\eta_1^{t+1}(x_1^t)\sum_{h=1}^H \omu_{1:h}^t \tell_h^t +\frac{H}{2}\eta_1^{t+1}(x_1^t)\sum_{h=1}^H \oeta_h^t \tell_h^t
    \end{align*}
    
    where we used $\tell_h^t\omu_{1:h}^t\leq 1$ for the last inequality, and we finally obtain
    
    \[\sum_{h=1}^H \omu_{1:h}^t \tell_h^t + \frac{1}{\eta_1^{t+1}(x_1^t)}\log(Z_1^t)\leq \frac{H}{2}\sum_{h=1}^H \oeta_h^t \tell_h^t\]
    
    which concludes the proof.
\end{proof}

We then directly use this lemma on all episodes to upper bound  the \textrm{VAR} term.
\begin{lemma}
    Under the event $\eventadapt$, assuming $\gamma\geq 1$, we have the bound
    \[\textrm{VAR}\leq \frac{\eta}{2}H \AX \logtwo\,.\]
\end{lemma}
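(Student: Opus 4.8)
The plan is to invoke Lemma~\ref{lemma_varstep} episode-by-episode and then reduce the remaining quantity to a logarithmic sum controlled by Lemma~\ref{lemma_logbound}. First I would observe that the learning rates $\eta_h^t(x_h)=\min_{x'_{h'}\geq x_h}\eta/(1+\tP_{1:h'}^{t-1}(x'_{h'}))$ of \AdaptiveFTRL are, by the very presence of the $\min$ over successors, non-decreasing along any fixed trajectory, so the hypothesis of Lemma~\ref{lemma_varstep} is met at every episode $t$. Summing the conclusion of that lemma over $t\in[T]$ gives
\[\textrm{VAR}\leq \frac{H}{2}\sum_{t=1}^T\sum_{h=1}^H \eta_h^t(x_h^t)\tell_h^t(x_h^t,a_h^t)\,,\]
so it suffices to show that the double sum is at most $\eta\logtwo\AX$.

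Since $r_h^t\in[0,1]$ we have $\tell_h^t(x_h^t,a_h^t)\leq \tp_{1:h}^t(x_h^t,a_h^t)$, and the only pair contributing at step $(t,h)$ is the visited one, so the double sum rewrites as $\sum_{h=1}^H\sum_{(x_h,a_h)\in\cAXh}\sum_{t=1}^T \eta_h^t(x_h)\tp_{1:h}^t(x_h,a_h)$. Bounding $\eta_h^t(x_h)\leq \eta/(1+\tP_{1:h}^{t-1}(x_h))$ (taking $x'_{h'}=x_h$ in the $\min$) and summing over the actions $a_h\in\cA(x_h)$, at most one of which is visited at a given episode, I would pass to the averaged estimators $\tp_{1:h}^t(x_h):=\sum_{a_h}\tp_{1:h}^t(x_h,a_h)/\As{x_h}$ and $\tP_{1:h}^{t-1}(x_h)$, obtaining
\[\sum_{a_h\in\cA(x_h)}\sum_{t=1}^T \eta_h^t(x_h)\tp_{1:h}^t(x_h,a_h)\leq \As{x_h}\,\eta\sum_{t=1}^T \frac{\tp_{1:h}^t(x_h)}{1+\tP_{1:h}^{t-1}(x_h)}\,.\]

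To close the argument I would check the growth hypothesis of Lemma~\ref{lemma_logbound}: because a single action is visited per episode and $\gamma\geq 1$, $\tp_{1:h}^t(x_h)=\tp_{1:h}^t(x_h,a_h^t)/\As{x_h}\leq (1+\tP_{1:h}^{t-1}(x_h,a_h^t))/\As{x_h}\leq 1+\tP_{1:h}^{t-1}(x_h)$, where the last inequality uses $\tP_{1:h}^{t-1}(x_h,a_h^t)\leq \As{x_h}\tP_{1:h}^{t-1}(x_h)$ and $\As{x_h}\geq 1$. Lemma~\ref{lemma_logbound} then gives $\sum_t \tp_{1:h}^t(x_h)/(1+\tP_{1:h}^{t-1}(x_h))\leq \log_2(1+\tP_{1:h}^T(x_h))$; under $\eventadapt$, inequality~\eqref{p_ineq} combined with $P_{1:h}^T(x_h)\leq T$ yields $\tP_{1:h}^T(x_h)\leq 1+2T$, whence $\log_2(1+\tP_{1:h}^T(x_h))\leq 1+\log_2(1+T)=\logtwo$. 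Summing over $x_h\in\cX_h$ and $h\in[H]$ and using $\sum_{h}\sum_{x_h\in\cX_h}\As{x_h}=\AX$ bounds the double sum by $\eta\logtwo\AX$, which together with the first display proves the stated bound $\textrm{VAR}\leq \frac{\eta}{2}H\AX\logtwo$.

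The main obstacle is the mismatch between the per-action estimates $\tp_{1:h}^t(x_h,a_h)$ that arise from Lemma~\ref{lemma_varstep} and the per-information-set learning rate $\eta_h^t(x_h)$, which is tied to the \emph{averaged} cumulative estimate $\tP_{1:h}^{t-1}(x_h)$. The resolution is to sum over actions first so as to work entirely with the averaged quantities, and then to verify that the averaged increments still obey the growth condition $\tp_{1:h}^t(x_h)\leq 1+\tP_{1:h}^{t-1}(x_h)$ required by the logarithmic lemma; this is precisely what makes the clean $\AX$ factor (rather than a quadratic-in-$\As{x_h}$ factor) emerge.
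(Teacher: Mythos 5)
Your proof is correct and follows essentially the same route as the paper's: invoke Lemma~\ref{lemma_varstep} episode by episode, bound $\tell_h^t\leq \tp_{1:h}^t$ and $\eta_h^t(x_h)\leq \eta/(1+\tP_{1:h}^{t-1}(x_h))$, pass to the per-information-set averaged estimators, and close with Lemma~\ref{lemma_logbound}, inequality~\eqref{p_ineq} and $\sum_{h}\sum_{x_h}\As{x_h}=\AX$. Your explicit check of the growth hypothesis $\tp_{1:h}^t(x_h)\leq 1+\tP_{1:h}^{t-1}(x_h)$ via $\tP_{1:h}^{t-1}(x_h,a_h^t)\leq \As{x_h}\tP_{1:h}^{t-1}(x_h)$ and $\gamma\geq 1$ is in fact stated more cleanly than the paper's inline justification of the same step.
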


\begin{proof}
    Thanks to Lemma~\ref{lemma_varstep}, we have
    
    \begin{align*}
        \textrm{VAR}&\leq \frac{H}{2}\sum_{t=1}^T\sum_{h=1}^H \eta_h^t(x_h^t)\tell_h^t(x_h^t,a_h^t)\\
        &\leq \frac{H}{2}\sum_{t=1}^T\sum_{h=1}^H\sum_{(x_h,a_h)\in\cAXh}\eta_h^t(x_h)\tp_{1:h}^t(x_h,a_h)\\
        &= \frac{\eta H}{2}\sum_{t=1}^T\sum_{h=1}^H\sum_{x_h\in\cX_h} \frac{\tp_{1:h}^t(x_h,a_h)}{1+\tP_{1:h}^{t-1}(x_h)}\\
        &= \frac{\eta H}{2}\sum_{h=1}^H\sum_{x_h\in\cX_h}\sum_{t=1}^T \As{x_h}\frac{\tp_{1:h}^t(x_h)}{1+\tP_{1:h}^{t-1}(x_h)}\\
        &\leq \frac{\eta H}{2}\sum_{h=1}^H\sum_{x_h\in\cX_h}\As{x_h}\log_2\pa{1+\tP_{1:h}^T(x_h)}\\
        &=\frac{\eta H}{2}\sum_{h=1}^H\sum_{x_h\in\cX_h}\As{x_h}\log_2\pa{1+\frac{1}{\As{x_h}}\sum_{a_h\in\cAs{x_h}}\tP^T_{1:h}(x_h,a_h)}\\
        &\leq \frac{\eta H}{2}\sum_{h=1}^H\sum_{x_h\in\cX_h}\As{x_h}\log_2\pa{1+\frac{1}{\As{x_h}}\sum_{a_h\in\cAs{x_h}}(1+2P^T_{1:h}(x_h,a_h))}\\
        &\leq \frac{\eta H}{2}\sum_{h=1}^H\sum_{x_h\in\cX_h}\As{x_h}\log_2\pa{2+2T}\\
        &=\frac{\eta\logtwo H}{2} \AX
    \end{align*}
    
    where we used $\tp_{1:h}^t(x_h^t)\leq \eta_h^t(x_h^t)/A(x_h^t)\leq (1+\tP_{1:h}^{T-1})/(\eta A(x_h^t))\leq 1+\tP_{1:h}^{T-1}$ for the assumption of Lemma~\ref{lemma_logbound}, and inequality \ref{p_ineq} for the next to last inequality.
\end{proof}

We now have all the tools needed to upper bound the regret with high probability when using \AdaptiveFTRL.

\adatheorem*

\begin{proof}
    We first notice that if $T<4\AX$, then, as $\regret_\mathrm{max}^T\leq T$, we have
    
    \[\regret_\mathrm{max}^T< 2\sqrt{\AX T}\]
    
    and the bound is then immediate. Similarly, if $\gamma< 2\iota'$, then
    
    \[\regret_\mathrm{max}^T\leq \sqrt{\frac{2\iota'\logtwo\AX T}{H}}\]
    
    and the bound also trivially holds. We will thus assume in the rest of the proof that $\AX\leq T$ and $\gamma \geq 2\iota'$. This last inequality lets us use Lemma~\ref{lemma:approx_ada} under event $\epsilon$, as it also implies $\gamma \geq 1$.
    
    In this case, using the previous lemmas, along with  $2\leq \AX\leq T$, $2\leq \iota'$, $2\log(\AX)\leq \iota'$ and $2\leq \logtwo$ for the last two inequalities:
    
    \begin{align*}
        \regret_{\mathrm{max}}^T&\leq \textrm{BIAS I}+\textrm{BIAS II}+\textrm{REG}+\textrm{VAR}\\
        &\leq H\sqrt{2T\iota'}+X+\gamma\logtwo\AX+\frac{2\iota'}{\gamma}HT+\frac{2\log(\AX)}{\eta}H(X+T)+\frac{\eta\logtwo}{2}H\AX\\
        &\leq (\frac{1}{\sqrt{2}}+\frac{1}{4})H\sqrt{\iota'\logtwo\AX T}+\gamma\logtwo\AX+\frac{2\iota'}{\gamma}HT+\frac{4\log(\AX)}{\eta}HT+\frac{\eta\logtwo}{2}H\AX\\
        &\leq H\sqrt{\iota'\logtwo\AX T}+\gamma\logtwo\AX+\frac{2\iota'}{\gamma}HT+\frac{2\iota'}{\eta}HT+\frac{\eta\logtwo}{2}H\AX\,.
    \end{align*}
    
    The previous expression is minimized with $\gamma=\sqrt{2\iota' HT/(\logtwo\AX)}$ and $\eta=2\sqrt{\iota' T/(\logtwo\AX)}$, which yields
    
    \begin{align*}
        \regret^T(\mu^\star)&\leq H\sqrt{\iota'\logtwo\AX T}+2H\sqrt{2\iota'\logtwo \AX T}+2H\sqrt{\iota'\logtwo \AX T}\\
        &\leq 6H\sqrt{\iota'\logtwo \AX T}
    \end{align*}
    where we used $2\sqrt{2}\leq 3$.
    
\end{proof}

%%%%%%%%%%%%%%%%%%%%%%%%%%%%%%%%%%%%%%%%%%%%%%%%%%%%%%%%%%%%%%%%%%

\end{document}